\newcommand{\transpose}{^{\top}}
\newcommand{\greekcolvec}[1]{\boldsymbol{#1}}
\newtheorem{theorem}{Theorem}
\newtheorem{lemma}{Lemma}
\newtheorem{proposition}{Proposition}
\theoremstyle{definition}
\newtheorem{definition}{Definition}
\theoremstyle{remark}
\newtheorem*{remark}{Remark}
\newcommand{\ie}{i.e., }
\newcommand{\eg}{e.g., }
\newcommand{\vs}{vs.\ }
\newcommand{\etal}{et al.}
\newcommand{\Poincare}{Poincar{\'e}\ }
\acrodef{BOA}[BOA]{basin of attraction }
\acrodef{COM}[COM]{center of mass }
\acrodef{DEM}[DEM]{discrete element method }
\acrodef{DOF}[DOF]{degree-of-freedom }
\acrodef{FSM}[FSM]{finite state machine }
\acrodef{DOF}[DOF]{degree of freedom }
\acrodef{GM}[GM]{granular media }
\acrodef{GRF}[GRF]{ground reaction force }
\acrodef{HZD}[HZD]{hybrid zero dynamics }
\acrodef{IMU}[IMU]{inertial measurement unit}
\acrodef{LCP}[LCP]{linear complementarity problem}
\acrodef{LO}[LO]{liftoff}
\acrodef{MRAC}[MRAC]{model reference adaptive control }
\acrodef{MPC}[MPC]{model predictive control}
\acrodef{NLP}[NLP]{nonlinear program}
\acrodef{ODE}[ODE]{ordinary differential equation }
\acrodef{PMP}[PMP]{Pontryagin's Maximum Principle }
\acrodef{RFT}[RFT]{resistive force theory }
\acrodef{RL}[RL]{reinforcement learning }
\acrodef{STC}[STC]{self-tuning control}
\acrodef{SLIP}[SLIP]{spring-loaded inverted pendulum }
\acrodef{SQP}[SQP]{sequential quadratic programming}
\acrodef{TD}[TD]{touchdown}
\acrodef{TPBVP}[TPBVP]{two-point boundary value problem}
\acrodef{ZMP}[ZMP]{zero-moment point }
\newcommand{\noop}[1]{}
\newcommand{\rev}[1]{{\color{black} #1}}  
\newcommand{\strout}[1]{\noop{#1}}        
\title{Efficient, Responsive, and Robust Hopping on Deformable Terrain}
\author{Daniel J. Lynch, Jason L. Pusey, Sean W. Gart, Paul B. Umbanhowar, and Kevin M. Lynch

\thanks{Research was sponsored by the Army Research Laboratory and was accomplished under Cooperative Agreement Number W911NF-24-2-0032. The views and conclusions contained in this document are those of the authors and should not be interpreted as representing the official policies, either expressed or implied, of the Army Research Laboratory or the U.S. Government. The U.S. Government is authorized to reproduce and distribute reprints for Government purposes notwithstanding any copyright notation herein.
\textit{(Corresponding author: Daniel J. Lynch.)}}
\thanks{D. J. Lynch, P. B. Umbanhowar, and K. M. Lynch are with the Center for Robotics and Biosystems and the Department of Mechanical Engineering, Northwestern University, Evanston, IL 60208 USA (email: daniellynch2021@u.northwestern.edu; umbanhowar@northwestern.edu; kmlynch@northwestern.edu).
K. M. Lynch is also with the Northwestern Institute of Complex Systems, Evanston, IL 60201 USA.

J. L. Pusey and S. W. Gart are with DEVCOM Army Research Lab, 2800 Powder Mill Road, Adelphi, MD 20783, USA (email: jason.l.pusey.civ@army.mil; sean.w.gart.civ@army.mil).
}
} 
\begin{document}
\bstctlcite{IEEEexample:BSTcontrol} 

\maketitle

\begin{abstract}
Legged robot locomotion is hindered by a mismatch between applications where legs can outperform wheels or treads, most of which feature deformable substrates, and existing tools for planning and control, most of which assume flat, rigid substrates.
In this study we focus on the ramifications of plastic terrain deformation on the hop-to-hop energy dynamics of a spring-legged monopedal hopping robot animated by a switched-compliance energy injection controller.
From this deliberately simple robot-terrain \rev{template}, we derive a hop-to-hop energy return map, and we use physical experiments and simulations to validate the hop-to-hop energy map for a real robot hopping on a real deformable substrate.
The dynamical properties (fixed points, eigenvalues, basins of attraction) of this map provide insights into efficient, responsive, and robust locomotion on deformable terrain.
Specifically, we identify constant-fixed-point surfaces in a controller parameter space that suggest it is possible to tune control parameters for efficiency or responsiveness while targeting a desired gait energy level.
We also identify conditions under which fixed points of the energy map are globally stable, and we further characterize the basins of attraction of fixed points when these conditions are not satisfied.
We conclude by discussing the implications of this hop-to-hop energy map for planning, control, and estimation for efficient, agile, and robust legged locomotion on deformable terrain.
\end{abstract}

\begin{IEEEkeywords}
Legged locomotion, \rev{templates and anchors}\strout{robophysics}, deformable terrain, hybrid systems
\end{IEEEkeywords}

\section{Introduction}\label{sec:introduction}
\IEEEPARstart{L}{egged} robots have the potential to outperform wheeled or tracked robots on uneven and deformable terrain, which comprises most of the Earth's landmass and many extraterrestrial bodies~\cite{bagnold1974physics,perry2022Bennu}, yet state-of-the-art tools for planning and controlling legged locomotion generally assume flat and rigid ground \cite{Westervelt_2003_HZD} or rely on a preponderance of data and extensive \textit{a priori} machine learning to enable off-road forays \cite{choi2023learning}.
The applicability of legged robots to major challenges of the 21st century (\eg humanitarian aid, disaster response, terrestrial and extraterrestrial exploration, sustainable agriculture, environmental restoration, deep sea mining)~\cite{guizzo2015DRC,krotkov2017DRC} motivates the development of legged robots capable of traversing uneven and deformable substrates.
Legged locomotion on these substrates is characterized by irrecoverable energy loss through plastic terrain deformation, complex coupling between ground reaction forces and foot kinematics, and spatiotemporal terrain heterogeneity~\cite{Li_Terradynamics_2013}.
While natural substrates like sand, soil, snow, gravel, and leaf litter are often complex and heterogeneous, nature abounds with legged animals that traverse these substrates with ease, including both invertebrates (\eg arachnids, crustaceans, insects) and vertebrates (\eg amphibians, birds, lizards, mammals) \cite{alexander2003principles}.
Although the complexity and variety of natural substrates defies description with a unified model such as a terrestrial analog of the Navier-Stokes equations \cite{jaeger1996granular,askariIntrusion_NM2016}, the fact that legged locomotion has evolved across such a wide variety of animals and environments suggests the existence of underlying principles that can be applied to advance the state of the art in legged robotic locomotion.

In this work, we study the hop-to-hop energy dynamics of a simple vertically-constrained monopedal hopping robot and how it is affected by irrecoverable energy loss through plastic terrain deformation underfoot.
We focus on vertically-constrained hopping on a single leg \rev{not because we view it as a practical means of traversing deformable terrain but, rather,} because it is the simplest setting to study the interplay between foot-ground interaction, plastic terrain deformation, and hop-to-hop energy dynamics, \rev{and because it has the potential to inform design and control choices that enable practical legged robots to traverse deformable terrain}.
\strout{Specifically}\rev{With these goals in mind}, we develop a one-dimensional map that maps the robot's kinetic energy from one hop to the next, given the work done by the robot on itself and on the ground between hops, parameterized by dimensionless quantities derived from a terrain model, a robot model, and a robot controller.
We then characterize the dynamical properties of this map (fixed points, eigenvalues, basins of attraction) in terms of control, design, and terrain parameters and relate these dynamical properties to properties of legged robot locomotion, namely efficiency, agility, and robustness.
We conclude by discussing the implications of these relationships for planning, control, and terrain estimation.

\subsection{Background}
To traverse any substrate, rigid or deformable, a legged robot or animal must replace energy lost during each step by doing work,~\ie injecting energy.
For any legged locomotor that interacts with the ground one foot at a time, this \strout{step-to-step} energy balance is
\rev{\begin{equation}\label{eq:generic_energy_map}
    E_\mathrm{TD}\left[k+1\right] = E_\mathrm{TD}\left[k\right] + E_\mathrm{inj} - E_\mathrm{loss},
\end{equation}
where $E_\mathrm{TD}\left[k\right]$ is the total mechanical energy of the locomotor at the $k$-th touchdown (\ie when the foot makes contact with the ground), $E_\mathrm{TD}\left[k+1\right]$ is the total mechanical energy of the locomotor at the next touchdown,} and $E_\mathrm{inj}$ and $E_\mathrm{loss}$ are the energies injected and lost, respectively, during the stance phase \rev{beginning at the $k$-th touchdown}.
In general, $E_\mathrm{loss}$ and $E_\mathrm{inj}$ are functions of \rev{$E_\mathrm{TD}[k]$}, robot control and design parameters, and terramechanical parameters.
On deformable terrain, our focus, energy is lost via plastic ground deformation underfoot, which occurs whenever the robot pushes on the ground with a force greater than the current yield threshold~\cite{jackson1983granular}, which, for homogeneous deformable substrates (\eg sand, soil, snow), increases monotonically with foot penetration depth~\cite{jaeger1996granular,Li_Terradynamics_2013}.
Thus, energy loss and energy injection are coupled through the stance-phase foot kinematics.

Consider the case when the foot is at rest on the ground when the robot begins to extend its leg in order to jump:
if the robot pushes against the ground with a force less than the depth-dependent yield threshold, its foot remains stationary and the robot does work exclusively on itself;
conversely, if the robot pushes against the ground with a force greater than the depth-dependent yield threshold, the ground underfoot yields and the robot does work on the ground as well as on itself, thus increasing the energetic cost of locomotion.
In this sense, deformable terrain is distinct from rigid ground, where the yield threshold is effectively infinite.
Deformable terrain is also distinct from liquids and powders with near-zero yield stress~\cite{lohse2004quicksand,perry2022Bennu},
on which the locomotor sinks unless it can exploit effects such as surface tension, as is the case for water striders \cite{hu2003strider} and fishing spiders \cite{suter1999dolomedes}, or inertial drag, as is the case for basilisk lizards \cite{glasheen1996basilisk,hsieh2004running}.

\subsection{Related Work}
As hybrid nonlinear dynamical systems, legged locomotors are inherently complex.
Accordingly, the last half-century has seen the growth of a wide variety of increasingly sophisticated planning and control techniques attempting to endow legged robots with the agility, versatility, and efficiency of their animal counterparts \cite{vukobratovic1972ZMP,raibert1986legged,mcgeer1990passive,collins2005passive,Westervelt_2003_HZD,hereid2018dircolHZD}.
Early quasistatic methods based on the \ac{ZMP} enable statically-stable flat-footed bipedal walking \cite{vukobratovic1972ZMP} but are inherently ill-equipped to handle gaits that contain a flight phase, \eg hopping and running, or other underactuated locomotion phases.
In these cases, a different notion of stability---limit cycle stability---is required.

Limit cycle gaits correspond to fixed points of a \Poincare map and, if a fixed point exists, the local stability of the limit cycle is approximated to first order by the linearization of the \Poincare map about the fixed point \cite{guckenheimer2013nonlinear,goswami1996limit,full2002quantifying}.
Pioneering work by McGeer demonstrated how such a limit cycle gait could arise from the passive (\ie unactuated) dynamics of a bipedal robot walking down a gentle slope \cite{mcgeer1990passive}, and later Collins et al.\ showed how actuation could be introduced to achieve similar bipedal limit-cycle gaits on level ground \cite{collins2005passive}.
Due to their reliance on passive dynamics, these limit-cycle walkers are energetically efficient but are rarely capable of walking at more than one speed.
Subsequently, Westervelt \etal~developed \ac{HZD} as a constructive framework that ensures limit cycle existence and stability through a set of virtual constraints and feedback gains \cite{Westervelt_2003_HZD}.
While gaits generated by these techniques are locally dynamically stable by construction, they rely on restrictive assumptions about foot-ground contact (\eg idealizing the foot as a stationary body about which the leg pivots) and are often sensitive to model uncertainty~\cite{grizzle2014review}.

\rev{\subsubsection{Templates and Anchors}
In contrast to the branch of planning and control techniques that address the full robot dynamics (\eg those based on the \ac{ZMP} or \ac{HZD}), Raibert pioneered a different approach, stemming from a set of decoupled single-\ac{DOF} controllers that, when combined in parallel, enable asymptotically stable limit cycle hopping on rigid ground, whether hopping in place, forward at prescribed speeds, or along prescribed paths~\cite{raibert1986legged}.
In particular, Raibert's hop-height controller ensures cyclic vertical motion---a characteristic of running gaits in animals---by regulating the elastic potential energy injected into the leg spring during stance, which is converted into kinetic energy and gravitational potential energy during flight~\cite{raibert1986legged}.
Raibert's work overlapped with biomechanics research \cite{cavagna1977energy,mcmahon1984mechanics,mcmahon1985compliance,alexander1990three,mcmahon1990stiffness} that led to the development of the \ac{SLIP}, a low-dimensional energy-conserving dynamic model \cite{blickhan1989slip,blickhan1993similarity,geyer2005spring-mass,geyer2006compliant}.
The \ac{SLIP} is an example of a \textit{template}:
\begin{definition}[Template]
    Full and Koditschek define a template as the simplest model with the least number of variables and parameters that exhibits a behavior of interest~\cite{full1999templates}.
\end{definition}
While the \ac{SLIP} template embodies multiple simplifications (\eg a prismatic leg with negligible mass and a point foot that acts as a revolute joint during stance), it is nevertheless \textit{descriptive}, reproducing the \ac{COM} kinematics and \ac{GRF} profiles of a variety of bipedal and quadrupedal animals running across rigid ground~\cite{mcmahon1985compliance,blickhan1989slip,blickhan1993similarity}.
The \ac{SLIP} template has also emerged as a low-dimensional model informing the design and control of legged robots; in this sense, it is also \textit{prescriptive}~\cite{full1999templates,klavins2002decentralized,holmes2006dynamics,poulakakis2009ASLIP,hubicki2016atrias}.
While some robots have been built with prismatic legs~\cite{bares1989Ambler,bares1999DanteII}, most legged robots are more biomimetic in form and feature legs consisting of multiple links connected by revolute joints.
If the inertia of the leg links is small relative to the inertia of the body, the SLIP template can be embedded by using a suitable Jacobian transformation to convert forces commanded by the template to joint torques; if not, a more sophisticated approach is necessary.
An example of the latter is Poulakakis and Grizzle's use of \ac{HZD} to embed the dynamics of a modified \ac{SLIP} template in the robot leg ``Thumper''~\cite{poulakakis2009ASLIP}.
In contrast, Hubicki~\etal~designed the robot ``ATRIAS'' such that the \ac{SLIP} template is embedded primarily through the robot's passive mechanical properties rather than through control~\cite{hubicki2016atrias}.

While some recent work has taken a computationally-intensive optimization-based approach to achieving dynamic legged robot locomotion~\cite{erez2013humanoid_RT_MPC,dicarlo2018MIT_Cheetah_3}, template-based control still has the potential to offer a number of benefits such as interpretability, composability, and tractability.
By ``interpretability'' we mean not only the ability to certify that a given controller can achieve the task set before it, but also the ability to explain how the controller achieves this task and how the controller is affected by changes in the model or the task.
Template-based control is, to a large extent, an application of nonlinear dynamical systems theory and therefore template-based controllers admit analysis in terms of limit cycles, hybrid invariant attractors, averaging theory, and other formal techniques, many of which can be used to certify controller performance \cite{full1999templates,klavins2002decentralized,Westervelt_2003_HZD,holmes2006dynamics,poulakakis2009ASLIP,de2018averaging}.
This gives template-based control an advantage, in terms of transparency, over control methods based on numerically solving high-dimensional nonlinear constrained optimization problems~\cite{neunert2018NMPC}.

The second desirable characteristic of template-based control is composability. A multi-degree-of-freedom behavior like monopedal running naturally emerges from the composition of several lower-dimensional constituents, e.g., cyclic vertical and angular motion and steady horizontal motion~\cite{raibert1986legged}.
A second layer of composition, then, enables polypedal running: each leg functions as an oscillator and running is the steady-state response when these oscillators are coupled together with appropriate gains and phase differences \cite{raibert1986quad,saranli2001rhex,klavins2002decentralized,de2018quad}.
Composability connotes reusability and generalizability: templates drawn from a sufficiently rich library can be recombined to form new behaviors, and the same control templates can be anchored to a variety of robot platforms.
Hoeller~\etal~demonstrate the potential of this composition-based approach in a learning-based framework that enables quadrupedal parkour-like multimodal locomotion by drawing from a library of several locomotion primitives learned in simulation~\cite{hoeller2024anymal}.

Tractability, the third desirable characteristic of template-based control frameworks, is a consequence of building these frameworks around low-dimensional templates such that less onboard computation is required compared to optimization-based control.
When computational resources are not allocated to basic tasks like locomotion, they are available to serve higher-level objectives like perception, navigation, and human interaction.
}

\subsubsection{Hopping Templates}
\rev{Raibert's monopedal and, later, quadrupedal robots served as a major inspiration for template-based control, in part because they demonstrated impressive robustness despite a number of modeling simplifications, and also in part because of their low bandwidth requirements for sensing and control computation~\cite{raibert1984experiments,raibert1986quad}.}
Koditschek and B{\"u}hler offer a justification, rooted in dynamical systems theory, for \strout{the success of Raibert's hoppers} \rev{this robust behavior} by showing that \rev{Raibert's} hop-height controller results in an energy return map with an essentially globally stable fixed point, due to its topological conjugacy to a unimodal map whose Schwarzian derivative is negative everywhere \cite{koditschek1991hopping,singer1978unimodal,guckenheimer1979unimodal}.
Koditschek and B{\"u}hler~\cite{koditschek1991hopping} and Vakakis \etal~\cite{vakakis1991strange} both showed that Raibert's hop-height controller could also produce more complex behavior, including higher-period gaits and a period-doubling route to chaotic hopping.
\rev{M'Closkey and Burdick then showed that this complex behavior persists when hopping with nonzero forward velocity~\cite{mcloskey1993forward}, suggesting that dynamical results obtained for vertical hopping retain relevance for running gaits.}
\rev{The analytical approach embodied in these papers}, rooted in energy return map analysis, continues to bear fruit: for example, more recently, Degani~\etal~used a similar approach to study dynamic climbing via jumping between two walls of an inclined chute \cite{degani2014parkourbot}.

\subsubsection{Legged Robotic Locomotion on Deformable Terrain}
In contrast to level rigid ground, natural terrain exhibits spatiotemporal heterogeneity.
It is unsurprising, therefore, that the study of legged robot locomotion on natural terrain is at a significantly earlier developmental stage than its rigid-ground counterpart, despite the greater need for legged robots capable of traversing natural terrain \cite{krotkov2017DRC}.
\strout{Indeed, the assumptions underlying Raibert's hoppers, the \ac{SLIP} template, and \ac{HZD}}
\rev{Indeed, the assumptions underlying Raibert's hoppers and the \ac{SLIP} template }
(\ie point feet, flat and rigid ground) break down on naturally-occurring deformable substrates, where finite foot area is necessary and where foot-ground contact cannot be idealized as a revolute joint.

The need for controllable and repeatable terrain conditions to support model-based controller design for legged robots on natural terrain motivates the use of granular media---collections of discrete particles that interact primarily through friction and repulsion \cite{jaeger1996granular}---as a proxy for naturally-occurring deformable substrates \cite{qian2015principles,jin2019preparation}.
Li \etal~showed that \ac{RFT} can be used to model foot-ground interaction on granular media and that, to first order, the \ac{GRF} increases linearly with foot penetration depth.
\rev{Li~\etal~also demonstrated that, for monodisperse substrates consisting of roughly spherical particles, the mapping from intruder kinematics to local stress is the same up to a single substrate-specific scaling factor; in other words, results obtained with poppy seed substrates (for example) naturally generalize to other dry granular substrates~\cite{Li_Terradynamics_2013}.
We note that further complications like interstitial wetting and polydispersity are interesting and relevant to field applications but are beyond the scope of our present work.}
The development of granular \ac{RFT} has produced many offshoots: for example, Xiong \etal~apply granular \ac{RFT} to quasistatic bipedal locomotion on granular media by devising constraints on the projection of the \ac{COM} onto the stance-phase support polygon that integrate with controllers based on either the \ac{ZMP} or \ac{HZD} \cite{xiong2017stability};
meanwhile, other granular media locomotion studies examine more dynamic behavior, including jumping from rest \cite{Aguilar_2016_Robophysical,Hubicki_Tractable_2016}, minimizing energy loss on impact \cite{lynch2020SLP}, and repeated jumping on yielding substrates with a modified Raibert-like hop height controller \cite{roberts2018reactive,roberts2019mitigating}.

In this paper, we use tools from nonlinear dynamics and one-dimensional maps to unify these different aspects of hopping on granular media, addressing the effect of plastic ground deformation underfoot on the hop-to-hop energy dynamics of a spring-legged monopedal robot---\rev{a vertical hopping template}---animated by a Raibert-like energy injection controller.
In particular, given Koditschek and B{\"u}hler's justification for the success of Raibert's hop-height controller on hard ground \cite{koditschek1991hopping}, it is natural to ask whether a similar controller will enable similarly robust hopping on deformable terrain.
The answer, as we show in this paper, is not a simple yes-or-no but depends on a combination of control and terrain parameters.
Furthermore, we show that these parameters affect not only the steady-state response and its basin of attraction but also the transient response, resulting in tradeoffs between efficiency, agility, and robustness.
Together, these parameters form a low-dimensional space that can be exploited for gait control and planning as well as for online terrain estimation.

\subsection{Statement of Contributions}
\rev{
In this paper, we make three contributions to the state of the art in template-based control for running on deformable terrain.
\begin{enumerate}
    \item We derive a hop-to-hop energy return map from a hybrid dynamics model of hopping on deformable terrain, which we view as a necessary component of template-based running on soft ground. The hybrid dynamics model includes a two-phase model of foot-ground interaction that explicitly accounts for a finite, depth-dependent yield threshold and a \ac{GRF} that increases monotonically with depth, which we believe to be the features of deformable terrain most relevant to legged locomotion. We validate this model through extensive experiments with a vertically constrained monopod hopping on granular media. To facilitate generalization of our work to robots of different length and mass scales and to deformable substrates of different stiffnesses, we formulate the hopping model and the resulting hop-to-hop energy map in terms of four dimensionless variables.
    \item We derive a closed-form expression for the hop-to-hop energy map in the limit of negligible foot mass. We prove that the map has a unique fixed point in this limit. We then derive closed-form expressions for the fixed point and eigenvalue in terms of dimensionless model parameters which enable our results to generalize to legged robots of different sizes and to yielding substrates of different stiffnesses. We show through extensive simulations that the dynamical properties of the map are relatively unaffected by the ratio of foot mass to body mass (\ie the ratio of unsprung mass to sprung mass) once this ratio is smaller than approximately 1:10, which agrees with prior work concerning hopping and running on rigid ground.
    \item We propose metrics for efficiency and an energy stability margin, based on the fixed point and eigenvalue of the energy return map, and, in the massless-foot limit, we identify necessary and sufficient conditions for global stability of the map's fixed point. We then characterize the effect of the dimensionless model parameters on these properties of the template, revealing a range of possible transient responses and basins of attraction for a given steady-state response. Finally, we discuss how this result sheds light on tradeoffs between efficiency, agility, and robustness to terrain variability and has implications for the design of planners, controllers, and estimators for legged locomotion on yielding terrain.
\end{enumerate}
}

\strout{We derive a one-dimensional return map that captures the hop-to-hop energy dynamics of a vertically-constrained spring-legged monopod hopping on plastically deformable terrain, which we model as a unidirectional spring, \ie a spring that resists compression but not tension, resulting in a yield threshold that increases monotonically with foot penetration depth.
This hop-to-hop energy map is parameterized by four dimensionless quantities: the energy injected normalized by the characteristic energy associated with the weight of the robot and the ground stiffness; the ratio of leg stiffness to ground stiffness; the ratio of foot mass to body mass; and the ratio of force applied upon energy injection to the maximum force the ground can support without yielding, given the foot penetration depth when energy is injected.
When this force ratio exceeds one, the ground reyields underfoot on push-off, dissipating some of the energy that would otherwise be injected and qualitatively changing the hop-to-hop energy dynamics.
We validate our robot-terrain model and hop-to-hop energy map with simulations and physical experiments.}

\strout{The hop-to-hop energy map provides a theoretical framework for discussing efficiency, agility, and robustness---all relevant to legged locomotion on deformable terrain---in terms of the map's fixed points, eigenvalues, and basins of attraction.
Specifically, we propose definitions of efficiency and a stability margin (a proxy for agility), and we identify necessary and sufficient conditions for which fixed points of the hop-to-hop energy map are globally stable.
Moreover, in the limit of negligible foot mass, the hop-to-hop energy map has a closed-form representation, enabling analysis of the relationship between dimensionless parameters and hop-to-hop energy dynamics.
This analysis reveals a range of possible transient responses and basins of attraction for a given steady-state response, a result which sheds light on tradeoffs between efficiency, agility, and robustness to terrain variability and has implications for the design of planners, controllers, and estimators for legged locomotion on yielding terrain.}

\section{Modeling}\label{sec:modeling}
In this section, we develop a model of a monopedal robot hopping on plastically deformable terrain, shown in Figure \ref{fig:model}.
\strout{While this model is intentionally simple, we show in Section \ref{sec:experiments} that it qualitatively describes the dynamics of a real robot hopping on a real deformable substrate.
We also show in Section \ref{sec:simulations} that it is capable of producing surprisingly complex behavior including chaotic hopping.
We then analyze the dynamics associated with this model in greater detail in Section \ref{sec:dynamics_and_massless_foot_analysis}.}
\rev{While this model is intentionally simple, it qualitatively describes the dynamics of a real monopod hopping on a real deformable substrate and can produce complex behavior including chaotic hopping.}

\subsection{Terrain Model}\label{sec:terrain_model}
While nature abounds with deformable substrates, their outdoor existence generally precludes their use in repeatable and controllable robotic locomotion studies.
However, granular media---collections of discrete particles that interact primarily through repulsion and friction~\cite{jaeger1996granular}---are useful as tunable proxies for naturally-occurring deformable substrates and are used in numerous robotic locomotion studies~\cite{li2009sensitive,Li_Terradynamics_2013,Aguilar_2016_Robophysical,Hubicki_Tractable_2016,roberts2018reactive,roberts2019mitigating,lynch2020SLP}.
Here, we use the terrain model from our previous study of minimum-penetration impact into granular media~\cite{lynch2020SLP}, in which we model the granular substrate under the robot foot as a unidirectional spring,~\ie a spring which resists compression with a force proportional to intruder depth but which does not spring back.
While this model neglects substrate-specific and foot-geometry-specific velocity-dependent effects (\eg inertial drag~\cite{katsuragi2007unified} and granular accretion~\cite{Aguilar_2016_Robophysical}) as well as elasticity in the grains themselves, it captures three important characteristics of intrusion into dry granular media.
First, ground reaction forces increase monotonically with intruder depth.
Second, tensile stresses are much weaker than compressive stresses.
Third, and most relevant to our present work, a finite yield stress represents a threshold for applied stress, above which the material transitions from a solid-like state to a fluid-like state~\cite{jaeger1996granular,askariIntrusion_NM2016}.
Specifically, our model for the \ac{GRF} on the robot foot is
\begin{align}\label{eq:GRF}
f_\mathrm{g} =
\begin{cases}
0,&\hspace{-0.4cm}\text{~if~}q_\mathrm{f} \geq 0\text{~or~}\dot{q}_\mathrm{f} > 0\text{~(flight),}\\
-k_\mathrm{g} q_\mathrm{f},&\hspace{-0.4cm}\text{~if~}q_\mathrm{f} < 0\text{~and~}\dot{q}_\mathrm{f} < 0\text{~(yielding stance),}\\
f_\mathrm{static},&\hspace{-0.4cm}\text{~if~}q_\mathrm{f} < 0\text{~and~}\dot{q}_\mathrm{f} = 0\text{~(static stance),}
\end{cases}
\end{align}
where $f_\mathrm{g}$ is a function of the foot position $q_\mathrm{f}$ (defined positive upward, relative to undeformed terrain), the foot velocity $\dot{q}_\mathrm{f}$, and the ground stiffness $k_\mathrm{g}$ (which is proportional to both the depth-dependent yield stress per unit depth~\cite{Li_Terradynamics_2013} and the foot area).
In ``yielding stance'' the foot is intruding and the ground is deforming underfoot; in ``static stance'' the foot is at rest and the ground is not deforming underfoot but rather is applying a positive normal force $f_\mathrm{static} \in \left[0,-k_\mathrm{g}q_\mathrm{f}\right]$.

Our focus on vertical hopping in this paper stems from \rev{the desire} to understand the ramifications of a finite yield threshold on the hop-to-hop energy dynamics, as vertical hopping is the simplest setting in which to study this interaction.
As such, we follow Roberts and Koditschek in assuming the ground is undeformed at the beginning of each hop, thereby omitting the added complications of hop-to-hop ground compaction associated with overlapping, and therefore interacting, footsteps from our terrain model \cite{roberts2019mitigating}.

\subsection{Robot Model}\label{sec:robot_model}
\begin{figure}[t]
    \centering
    \includegraphics[width=0.9\linewidth]{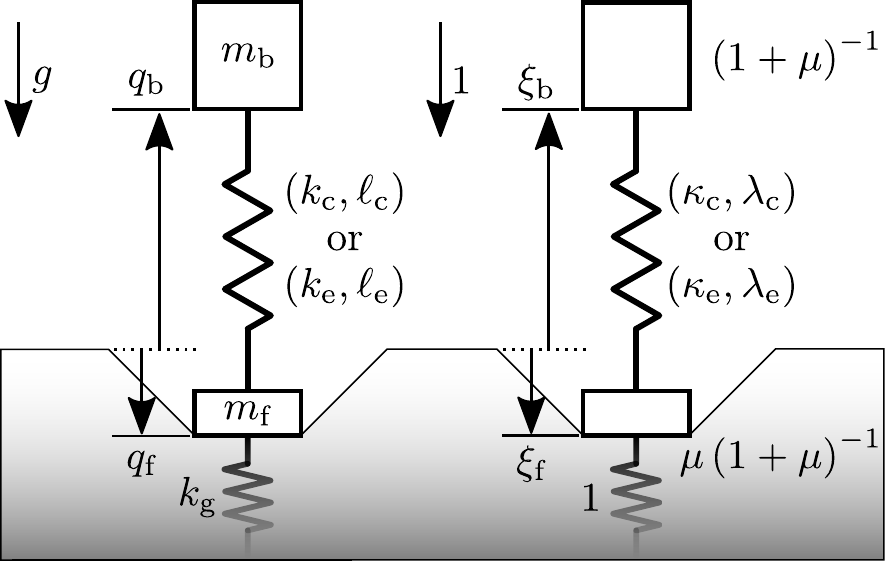}
    \caption{Terrain and robot model, shown with dimensional variables (left) and dimensionless variables (right). See Sections \ref{sec:terrain_model} and \ref{sec:robot_model} for details on terrain model and robot model, respectively, and Table \ref{tab:dim_to_nondim} for dimensionless nomenclature.
    }
    \label{fig:model}
\end{figure}
\rev{Because our focus in this paper is on the cyclic vertical motion associated with running gaits, we model the robot} as a vertically-constrained monopod with body mass $m_\mathrm{b}$ and foot mass $m_\mathrm{f}$.
The position of the body and the foot relative to undeformed terrain are $q_\mathrm{b}$ and $q_\mathrm{f}$, respectively.
The body and foot are coupled by an actuator that emulates a linear spring.
We assume the actuator is an ideal force source with perfect efficiency, and we neglect actuator stroke and force limits\footnote{The return map analysis described in this paper can be adapted to accommodate actuator inefficiency and stroke and force limits. Given the variety of actuators used in legged robots, however, attempting to incorporate an actuator model adds complexity without shedding further light on the ramifications of ground deformation on energy dynamics of legged locomotion on soft soil.}.
To counteract the energy lost to plastic terrain deformation underfoot, a Raibert-style stance-phase controller injects energy by updating the stiffness and unloaded length of the emulated leg spring when the leg is maximally compressed \cite{raibert1986legged}.
A flight-phase controller resets the leg length and brings the foot to rest relative to the body prior to the next \ac{TD}.

To ensure the results in this paper generalize to legged locomotion across a variety of scales, we nondimensionalize the model using characteristic mass $m_\mathrm{u} = m_\mathrm{b} + m_\mathrm{f}$, length $q_\mathrm{u} = m_\mathrm{u}g/k_\mathrm{g}$, and time $t_\mathrm{u} = \sqrt{m_\mathrm{u}/k_\mathrm{g}}$ from the harmonic oscillator formed by the total monopod mass and the ground spring.
We introduce a mass ratio $\mu$, defined as $\mu = m_\mathrm{f}/m_\mathrm{b}$, such that the limit case of negligible foot mass is represented by $\mu=0$.
\rev{All dimensionless model variables are summarized in Table~\ref{tab:dim_to_nondim}.}
The resulting nondimensionalized model is
\begin{subequations}\label{eq:nondim_EOM}
	\begin{align}
	\ddot{\xi}_\mathrm{b} &= -1 + \left(1 + \mu\right)\phi_\mathrm{a},\\
	\ddot{\xi}_\mathrm{f} &= -1 + \left(\frac{1 + \mu}{\mu}\right)\left(\phi_\mathrm{g} - \phi_\mathrm{a}\right),
	\end{align}
\end{subequations}
where $\xi_\mathrm{b}$ and $\xi_\mathrm{f}$ are the positions of the body and foot, respectively; \rev{$\,\dot{}\,$ and $\,\ddot{}\,$ are first and second derivatives with respect to nondimensional time, respectively;} $\mu$ is the ratio of foot mass to body mass; $\phi_\mathrm{a}$ is the nondimensionalized force applied by the leg actuator; and $\phi_\mathrm{g}$ is the nondimensionalized \ac{GRF}.
The nondimensionalized force applied by the leg actuator is
\begin{equation}\label{eq:nondim_actuator_force}
    \phi_\mathrm{a} =
	\begin{cases}
		\kappa_\mathrm{c}\left(\lambda_\mathrm{c} - \xi_\mathrm{b} + \xi_\mathrm{f}\right)\textrm{, if }\dot{\xi}_\mathrm{b} \leq \dot{\xi}_\mathrm{f}\textrm{ (compression)},\\
				\kappa_\mathrm{e}\left(\lambda_\mathrm{e} - \xi_\mathrm{b} + \xi_\mathrm{f}\right)\textrm{, if }\dot{\xi}_\mathrm{b} > \dot{\xi}_\mathrm{f}\textrm{ (extension)},
	\end{cases}
\end{equation}
where $\kappa_\mathrm{c}$ and $\lambda_\mathrm{c}$ are the compression-mode stiffness and unloaded length, respectively, of the emulated leg spring and $\kappa_\mathrm{e}$ and $\lambda_\mathrm{e}$ are the extension-mode stiffness and unloaded length, respectively, of the emulated leg spring.
The leg switches from compression to extension at time $\tau_\mathrm{CE}$ and extends continuously until liftoff, when the foot separates from the ground.
The nondimensionalized \ac{GRF} is
\begin{align}\label{eq:nondim_grf}
    \phi_\mathrm{g} =
	\begin{cases}
		0,&\textrm{if }\xi_\mathrm{f} > 0\textrm{ or }\dot{\xi}_\mathrm{f} > 0\textrm{ (flight)},\\ 
		-\xi_\mathrm{f},&\textrm{if }\xi_\mathrm{f} \leq 0\textrm{ and }\dot{\xi}_\mathrm{f} < 0\textrm{ (yielding stance)},\\ 
		\phi_\mathrm{a} + \frac{\mu}{1 + \mu},&\textrm{if }\xi_\mathrm{f} < 0\textrm{ and }\dot{\xi}_\mathrm{f} = 0\textrm{ (static stance)}.
	\end{cases}
\end{align}
\strout{All dimensionless variables in Equations \eqref{eq:nondim_EOM}, \eqref{eq:nondim_actuator_force}, and \eqref{eq:nondim_grf} are summarized in Table~\ref{tab:dim_to_nondim}.}

 \begin{table}[t]
	\centering
	\begin{tabular}{r|l}
		Dimensionless & Physical \\
		expression      & quantity \\
		\hline
		$\tau$ & time\\
		$\xi_\mathrm{b}$ & body position\\
		$\xi_\mathrm{f}$ & foot position\\
        $\mu$ & ratio of foot mass to body mass\\
        $(1 + \mu)^{-1}$ & body mass\\
        $\mu(1 + \mu)^{-1}$ & foot mass\\
		$\phi_\mathrm{a}$ & force applied by leg spring\\
		$\phi_\mathrm{g}$ & ground reaction force (GRF)\\
        $\phi$ & force ratio: $-\left(\phi_\mathrm{a}\left(\tau_\mathrm{CE}\right) + \mu\left(1 + \mu\right)^{-1}\right)\xi_\mathrm{f}^{-1}\left(\tau_\mathrm{CE}\right)$\\
		$\kappa_\mathrm{c}$ & compression-mode leg stiffness\\
        $\kappa_\mathrm{e}$ & extension-mode leg stiffness\\
		$\lambda_\mathrm{c}$ & compression-mode unloaded leg length\\
        $\lambda_\mathrm{e}$ & extension-mode unloaded leg length\\
		$\varepsilon_\mathrm{TD}$ & \ac{COM} kinetic energy on \ac{TD}\\
        $\varepsilon_\mathrm{inj}$ & energy injected during a hop\\
        $\varepsilon_\mathrm{loss}$ & energy lost during a hop
	\end{tabular}
	\caption{Nomenclature. Dimensionless body and foot velocities (accelerations) are represented as $\dot{\xi}_\mathrm{b,f} =\mathrm{d}\xi_\mathrm{b,f}/\mathrm{d}\tau$ ($\ddot{\xi}_\mathrm{b,f} =\mathrm{d}^2\xi_\mathrm{b,f}/\mathrm{d}\tau^2$).}
	\label{tab:dim_to_nondim}
\end{table}

\subsection{Controllers}\label{sec:controller}
\subsubsection{Stance-Phase Controller}
When the body stops moving relative to the foot during stance, an instantaneous update in the monopod's leg spring stiffness from $\kappa_\mathrm{c}$ to $\kappa_\mathrm{e}$ and unloaded length $\lambda_\mathrm{c}$ to $\lambda_\mathrm{e}$ injects energy and changes the force applied to the ground by the foot.
The nondimensionalized energy $\varepsilon_\mathrm{inj}$ injected by this update is
\begin{align}\label{eq:injected_energy_dimensional}
\begin{split}
    \varepsilon_\mathrm{inj} ={}& \frac{1}{2}k_\mathrm{e}\left(\lambda_\mathrm{e} - \xi_\mathrm{b}\left(\tau_\mathrm{CE}\right) + \xi_\mathrm{f}\left(\tau_\mathrm{CE}\right)\right)^2 \\
    &-\frac{1}{2}\kappa_\mathrm{c}\left(\lambda_\mathrm{c} - \xi_\mathrm{b}\left(\tau_\mathrm{CE}\right) + \xi_\mathrm{f}\left(\tau_\mathrm{CE}\right)\right)^2,
\end{split}
\end{align}
where $\kappa_\mathrm{c}$ and $\lambda_\mathrm{c}$ are the compression-mode leg spring stiffness and unloaded length, respectively, $\kappa_\mathrm{e}$ and $\lambda_\mathrm{e}$ are the extension-mode leg spring stiffness and unloaded length, respectively, and $\xi_\mathrm{b}\left(\tau_\mathrm{CE}\right)$ and $\xi_\mathrm{f}\left(\tau_\mathrm{CE}\right)$ are the body and foot position, respectively, at the compression-extension transition.

The change in leg spring stiffness and unloaded length enables the monopod to replace energy lost to plastic ground deformation, but it also changes the force applied to the ground.
When the ratio of the total applied force to the depth-dependent yield threshold exceeds one, the ground underfoot reyields, resulting in energy dissipation.
The force ratio $\phi$ is 
\begin{equation}\label{eq:nondim_force_ratio}
    \phi = \frac{\kappa_\mathrm{e}\left(\lambda_\mathrm{e} - \xi_\mathrm{b}\left(\tau_\mathrm{CE}\right) + \xi_\mathrm{f}\left(\tau_\mathrm{CE}\right)\right) + \mu\left(1 + \mu\right)^{-1}}{-\xi_\mathrm{f}\left(\tau_\mathrm{CE}\right)}.
\end{equation}
\begin{remark}
    For the robot to jump, it must accelerate its body upward by applying a force greater than the body weight, \ie $\phi > (1 + \mu)^{-1}\xi_\mathrm{f}^{-1}(\tau_\mathrm{CE})$.
\end{remark}
While hopping without reyielding is energetically efficient \strout{(as we show in Sections \ref{sec:simulations} and \ref{sec:massless_foot_analysis})}, real-world deformable terrain features variability in ground stiffness and elevation, so reyielding is likely to occur at least occasionally.
Moreover, other practical robot-specific limitations (\eg limited leg stroke) may result in scenarios where reyielding is unavoidable for the locomotion task at hand.
Finally, \strout{as we show in Section \ref{sec:massless_foot_analysis},} it may be possible to exploit reyielding to converge to a particular gait in fewer hops than possible without reyielding.

Given compression-mode spring parameters $(\kappa_\mathrm{c},\lambda_\mathrm{c})$, a one-to-one mapping exists between extension-mode spring parameters $(\kappa_\mathrm{e},\lambda_\mathrm{e})$ and injected energy and force ratio $\left(\varepsilon_\mathrm{inj},\phi\right)$.
In keeping with our focus on the energetic and dynamical consequences of depth-dependent yield thresholds, we choose $\varepsilon_\mathrm{inj}$ and $\phi$ as our controls, and we solve for $\kappa_\mathrm{e}$ and $\lambda_\mathrm{e}$ on each hop using the following equations:
\begin{subequations}\label{eq:nondim_leg_spring_update}
	\begin{align}
		\kappa_\mathrm{e} =& \frac{\left(\phi \xi_\mathrm{f}\left(\tau_\mathrm{CE}\right) + \frac{\mu}{1 + \mu}\right)^2}{2\varepsilon_\mathrm{inj} + \kappa_\mathrm{c}\left(\lambda_\mathrm{c} - \xi_\mathrm{b}\left(\tau_\mathrm{CE}\right) + \xi_\mathrm{f}\left(\tau_\mathrm{CE}\right)\right)^2},\\
        \begin{split}
		\lambda_\mathrm{e} =& \frac{-2\varepsilon_\mathrm{inj} + \kappa_\mathrm{c}\left(\lambda_\mathrm{c} - \xi_\mathrm{b}\left(\tau_\mathrm{CE}\right) + \xi_\mathrm{f}\left(\tau_\mathrm{CE}\right)\right)^2}{\left(\phi \xi_\mathrm{f}\left(\tau_\mathrm{CE}\right) + \frac{\mu}{1 + \mu}\right)} \\
        &+ \xi_\mathrm{b}\left(\tau_\mathrm{CE}\right) - \xi_\mathrm{f}\left(\tau_\mathrm{CE}\right).
        \end{split}
	\end{align}
\end{subequations}
\begin{remark}
    Our stance-phase controller contrasts with Roberts and Koditschek's controller \cite{roberts2018reactive,roberts2019mitigating}.
    They fix $\lambda_\mathrm{e}$ equal to $\lambda_\mathrm{c}$ and prescribe an increase in $\kappa_\mathrm{e}$ relative to $\kappa_\mathrm{c}$; $\varepsilon_\mathrm{inj}$ and $\phi$ vary from hop to hop, although reyielding upon energy injection is inevitable on homogeneous terrain because $\phi > 1$ on every hop.
    In contrast, we prescribe $\varepsilon_\mathrm{inj}$ and $\phi$, and compute $\kappa_\mathrm{e}$ and $\lambda_\mathrm{e}$ online when the switching condition is triggered.
    While this approach requires sensing the \ac{GRF}, it provides an additional parameter to explore locomotion behavior with and without reyielding on deformable terrain.
\end{remark}

\subsubsection{Flight-Phase Controller}
The monopod enters flight at \ac{LO}, when the foot velocity becomes positive.
In preparation for the next stance phase, the flight-phase controller resets the leg length to $\lambda_\mathrm{c}$ and brings the foot to rest relative to the body.
For a nonzero foot mass, this reset dissipates energy:
\begin{align}\label{eq:nondim_energy_loss_at_liftoff}
    \varepsilon_\mathrm{loss,LO} = \mu\left(\frac{\mu + \kappa_\mathrm{e}\left(\dot{\xi}_\mathrm{b}(\tau_\mathrm{LO}) - \dot{\xi}_\mathrm{f}(\tau_\mathrm{LO})\right)^2}{2 \kappa_\mathrm{e}\left(1 + \mu\right)^2}\right),
\end{align}
where $\tau_\mathrm{LO}$ is the time at which \ac{LO} occurs.
Equation~\eqref{eq:nondim_energy_loss_at_liftoff} reflects an energetic benefit of lightweight feet: $\varepsilon_\mathrm{loss,LO}\to 0$ as $\mu\to 0$.

The compression-mode unloaded leg length $\lambda_\mathrm{c}$ determines the gravitational potential energy of the monopod on \ac{TD} but does not change $\varepsilon_\mathrm{TD}$, the \ac{COM} (center of mass) kinetic energy at \ac{TD}, given by
\begin{equation}\label{eq:nondim_CoM_KE_TD}
    \varepsilon_\mathrm{TD} = \frac{1}{2}\left(\frac{\rev{\dot{\xi}_\mathrm{b}\left(\tau_\mathrm{TD}\right) + \mu \dot{\xi}_\mathrm{f}\left(\tau_\mathrm{TD}\right)}}{1 + \mu}\right)^2,
\end{equation}
where $\tau_\mathrm{TD}$ is the time at which the next \ac{TD} occurs.
Thus, the nondimensionalized kinetic energy dynamics depend on four parameters, which we group into a set $\theta$ defined as
\begin{equation}\label{eq:nondim_params_set}
    \theta = \left\{\varepsilon_\mathrm{inj},\phi,\kappa_\mathrm{c},\mu\right\}.
\end{equation}

\subsection{Hybrid System Model}\label{sec:hybrid_model}
The robot-terrain model is an autonomous hybrid system, consisting of a set of domains (each defined by a contact mode and a control mode that, together, give rise to a vector field that determines the continuous dynamics on that domain), 
\rev{a set of switching conditions defined at the boundaries between domains, and a set of reset maps that update the state of the hybrid system upon transition to a new domain~\cite{hereid2018dircolHZD}.}
\strout{a set of switching surfaces defined at the boundaries between domains, and a set of jump maps defined on each switching surface that update the state of the hybrid system upon transition to a new domain \cite{hereid2018dircolHZD}.}
\rev{We denote the state of the monopod on each domain by 
\begin{equation}\label{eq:robot_state_vector}
    \greekcolvec{\xi} = [\xi_\mathrm{b},\xi_\mathrm{f},\dot{\xi}_\mathrm{b},\dot{\xi}_\mathrm{f}]\transpose.
\end{equation}
This hybrid system, represented in Figure \ref{fig:digraph} as a directed graph, consists of the following domains: flight (F), yielding stance with leg compression (YC), static stance with leg compression (SC), yielding stance with leg extension (YE), and static stance with leg extension (SE).}
The continuous dynamics on each stance domain are given in nondimensionalized form by Equations \eqref{eq:nondim_EOM}, \eqref{eq:nondim_actuator_force}, and \eqref{eq:nondim_grf}.
The \strout{switching surfaces} \rev{transition events} of this hybrid system are \textit{touchdown} (TD), in which the foot position crosses zero with negative velocity;
\textit{foot stop} (FS), in which the foot velocity crosses zero and the total force applied to the ground is less than the depth-dependent yield threshold;
\textit{reyield} (RY), in which the force applied to the ground first crosses the depth-dependent yield threshold;
\textit{compression-extension} (CE), in which the body velocity relative to the foot velocity crosses zero from below,~\ie the leg transitions from compression to extension; and
\textit{liftoff} (LO), in which the foot velocity becomes positive.
See Figure~\ref{fig:digraph} for mathematical definitions of switching conditions in terms of dimensionless quantities.
Note that for any force ratio $\phi > 1$, CE and RY transitions occur simultaneously as the force applied at the onset of extension exceeds the depth-dependent yield threshold, resulting in reyielding and energy dissipation\footnote{CE and FS transitions can also coincide, provided $\phi \leq 1$.}.

All \strout{jump} \rev{reset} maps in this hybrid system are identity maps except for the liftoff map \strout{$\Delta_\mathrm{LO}$}, which models the actions of the flight-phase controller as a perfectly inelastic collision that brings the foot to rest relative to the body and resets the leg to its compression-mode unloaded length.
\rev{The \ac{LO} reset map is
\begin{equation}\label{eq:LO_reset_map}
    \greekcolvec{\xi}^{\mathrm{LO}^+}
    =
    \begin{bmatrix}
        \frac{\mu}{1 + \mu} & \frac{1}{1 + \mu} & 0 & 0\\
        \frac{\mu}{1 + \mu} & \frac{1}{1 + \mu} & 0 & 0\\
        0 & 0 & \frac{\mu}{1 + \mu} & \frac{1}{1 + \mu}\\
        0 & 0 & \frac{\mu}{1 + \mu} & \frac{1}{1 + \mu}\\
    \end{bmatrix}
    \greekcolvec{\xi}^{\mathrm{LO}^-}
    +
    \begin{bmatrix}
        \frac{1}{1 + \mu}\\
        \frac{\mu}{1 + \mu}\\
        0\\
        0
    \end{bmatrix}
    \lambda_\mathrm{c},
\end{equation}
where the superscripts $\mathrm{LO}^-$ and $\mathrm{LO}^+$ are shorthand for $\tau_\mathrm{LO}^-$ and $\tau_\mathrm{LO}^+$, respectively, and indicate the pre- and post-liftoff values of the state vector $\greekcolvec{\xi}$.}

\begin{figure}[t]
    \centering
    \includegraphics[width=\linewidth]{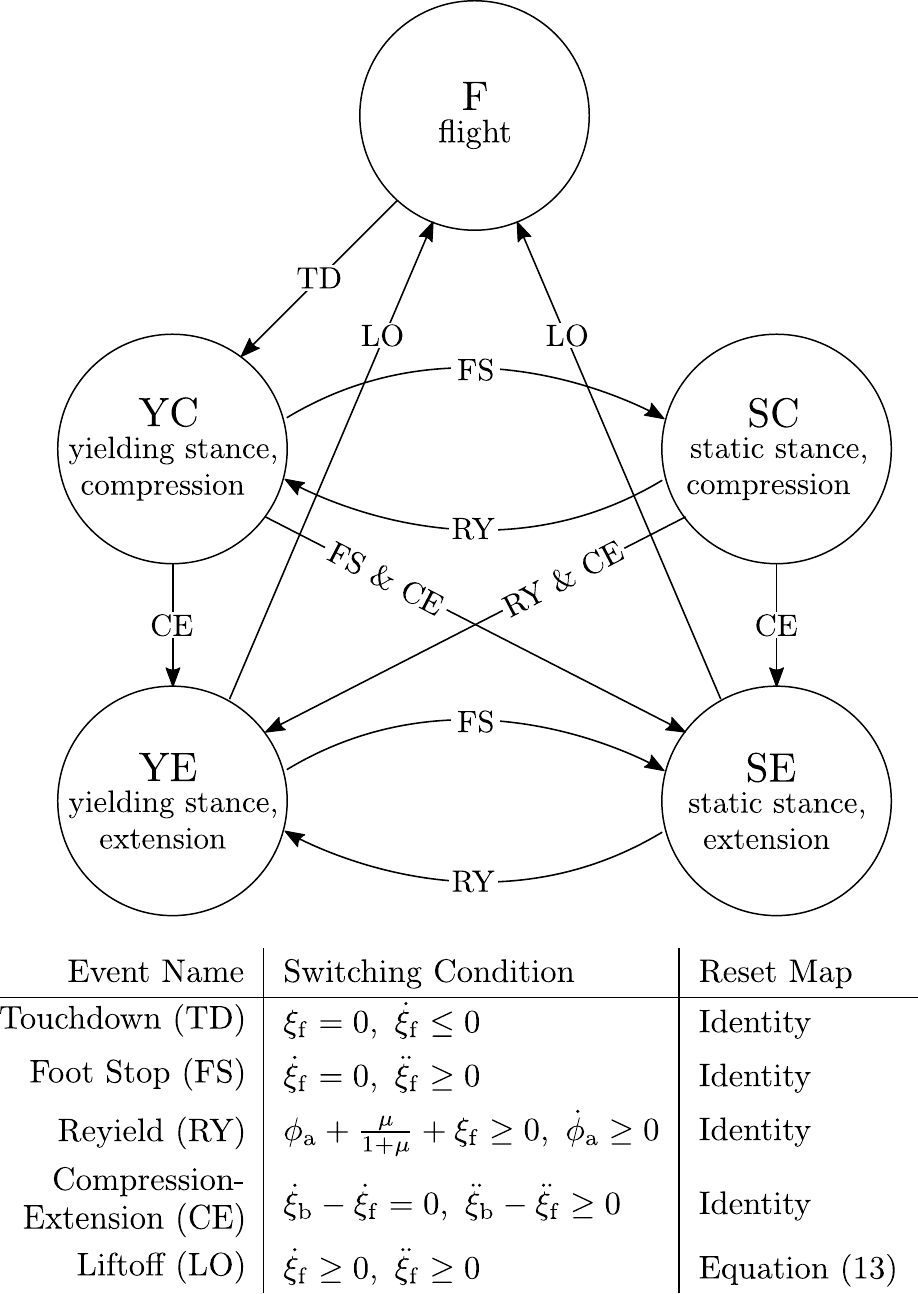}
    \caption{Directed graph illustrating all domains and all possible transitions for the hybrid system representation of the spring-legged monopod hopping vertically on deformable terrain.
    The table below the graph defines the switching condition and reset map for each event.
    }
    \label{fig:digraph}
\end{figure}

A \textit{gait} is a periodic solution $\mathcal{O}$ to the hybrid system that \strout{is transverse to the switching surface $\mathcal{S}_\mathrm{TD}$, such that the solution $\mathcal{O}$} passes once through stance phase and once through flight phase on every orbit\footnote{Requiring the orbit $\mathcal{O}$ to pass through the flight phase eliminates orbits in which the monopod is stationary as well as orbits in which the body oscillates up and down while the foot maintains ground contact.}, although stance phase may consist of multiple subphases (\eg yielding stance, static stance).
We use the notation $\mathcal{O}_\theta$ to represent a gait $\mathcal{O}$ that is parameterized by $\theta$, defined in Equation \eqref{eq:nondim_params_set}.
Figure \ref{fig:example_kinematics_and_hop_to_hop_energy_timeseries} shows an example trajectory, found by numerical integration beginning from initial conditions $\greekcolvec{\xi} = \left[\lambda_\mathrm{c},0,0,0\right]\transpose$, \ie beginning at rest and in contact with the substrate surface.
After several hops, the monopod settles into a period-one hopping gait with domain sequence $(\mathrm{YC},\mathrm{SC})\times 3,\mathrm{YE},\mathrm{SE},\mathrm{F}$.
For this particular combination of $\kappa_\mathrm{c}$ and $\mu$, reyielding occurs twice before energy is injected at time $\tau_\mathrm{CE}$ via a change of leg spring parameters.
For this particular force ratio ($\phi = 1.25$), reyielding also occurs at $\tau_\mathrm{CE}$ when energy is injected.

\section{Hop-to-Hop Energy Map}\label{sec:h2h_E_map}
\rev{In this section, we develop the \textit{hop-to-hop energy map} from the hybrid system model described in the previous section.
This approach follows the classical technique of using \Poincare sections and \Poincare return maps to analyze the existence and stability of limit cycles in nonlinear systems~\cite{HZDbook}.
Formulating the map in terms of the kinetic energy of the \ac{COM} (instead of its momentum or velocity) facilitates our focus on the dissipative effect of a depth-dependent yield threshold and its impact on hopping gaits.}

Given parameters $\theta$, defined in Equation \eqref{eq:nondim_params_set}, the parameterized hop-to-hop energy map is
\begin{equation}\label{eq:nondim_energy_map}
\mathcal{P}_\theta\left(\varepsilon_\mathrm{TD}\right) = \varepsilon_\mathrm{TD} + \varepsilon_\mathrm{inj}\left(\varepsilon_\mathrm{TD},\theta\right) - \varepsilon_\mathrm{loss}\left(\varepsilon_\mathrm{TD},\theta\right),
\end{equation}
where $\varepsilon_\mathrm{TD}$ is the \ac{COM} kinetic energy on \ac{TD}, $\varepsilon_\mathrm{inj}$ is the energy injected, and $\varepsilon_\mathrm{loss}$ is the energy lost.
\strout{Thus, $\mathcal{P}_\theta$ maps a one-dimensional subset of $\mathcal{S}_\mathrm{TD}$ to itself.}
Given the \ac{COM} kinetic energy on the $k$-th touchdown, one iteration of the map returns the \ac{COM} kinetic energy on the next touchdown:
\begin{equation}\label{eq:nondim_energy_map_one_iteration}
    \varepsilon_\mathrm{TD}\left[k+1\right] = \mathcal{P}_\theta\left(\varepsilon_\mathrm{TD}\left[k\right]\right).
\end{equation}
A period-$n$ hopping gait for a given $\theta$ is a period-$n$ fixed point
    \begin{equation}
        \varepsilon_\mathrm{TD}^{*,n} = \mathcal{P}_\theta^{n}\left(\varepsilon_\mathrm{TD}^{*,n}\right),
    \end{equation}
    where $\mathcal{P}_\theta^{n}$ is shorthand for $n$ iterations of the map $\mathcal{P}_\theta$.
In this paper, we restrict our focus to period-one fixed points unless otherwise specified (\ie in Section~\ref{sec:chaos}), and we drop the $n$ superscript when referring to period-one fixed points,~\ie $\varepsilon_\mathrm{TD}^{*,1}$ is simply written $\varepsilon_\mathrm{TD}^*$.

\begin{figure}[t]
    \centering
    \includegraphics[width=\linewidth]{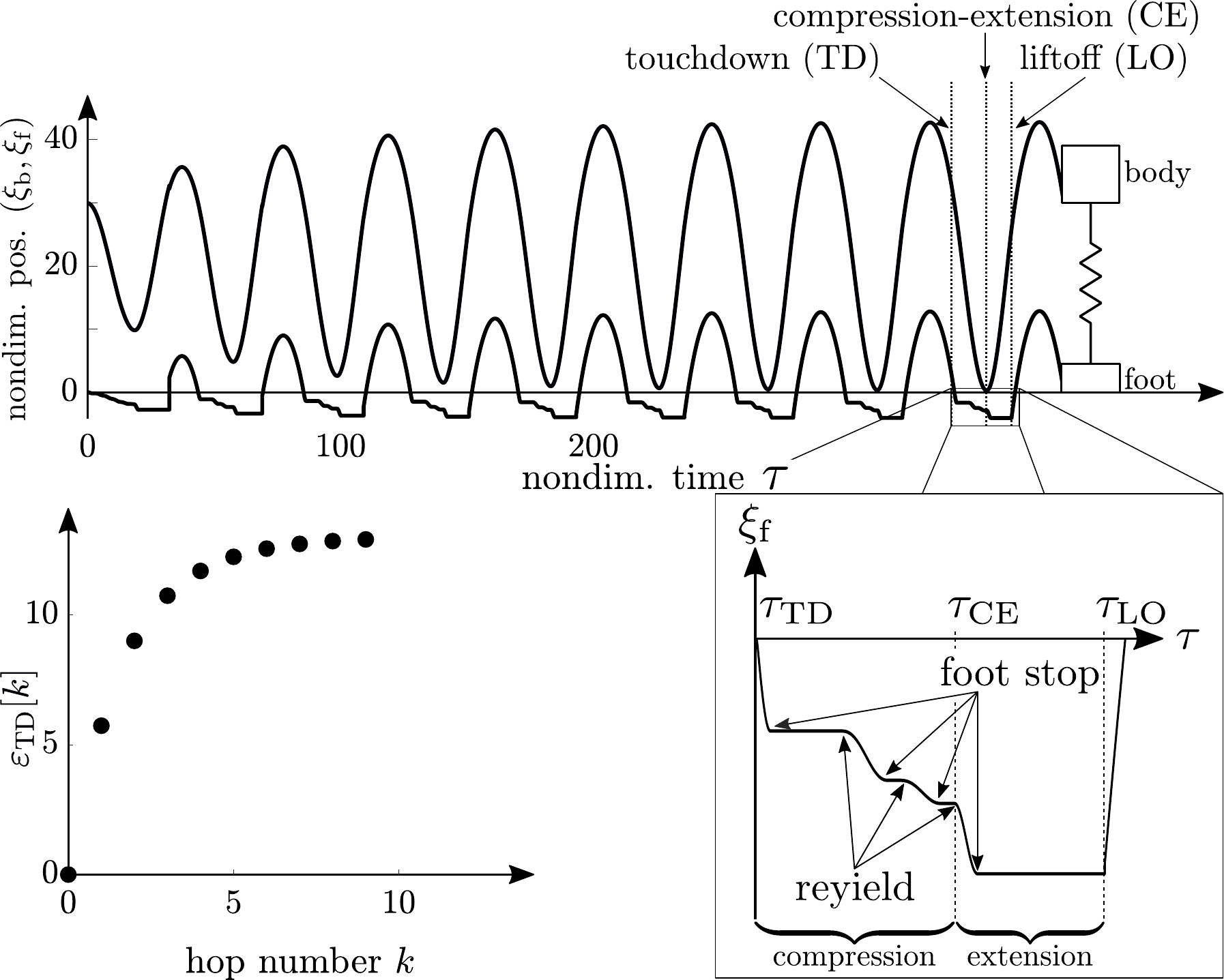}
    \caption{\textit{Top}: \rev{Illustrative plot of nondimensionalized body and foot trajectories $\xi_\mathrm{b}(\tau)$ and $\xi_\mathrm{f}(\tau)$, respectively, for $\varepsilon_\mathrm{inj}$ = 10, $\phi = 1.25$, $\kappa_\mathrm{c} = 0.1$, $\mu = 0.1$, $\lambda_\mathrm{c} = 35$, and initial conditions $\xi_\mathrm{b} = \lambda_\mathrm{c}$, $\xi_\mathrm{f} = 0$, $\dot{\xi}_\mathrm{b} = \dot{\xi}_\mathrm{f} = 0$.}
    \textit{Bottom left:} Nondimensionalized \ac{COM} kinetic energy at each \ac{TD}, $\varepsilon_\mathrm{TD}[k]$, for the monopod kinematics plotted above; $k$ is the hop number.
    \textit{Bottom right:} Detailed view of stance-phase nondimensionalized foot kinematics from the simulation above, for hop number $k = 8$. 
    }
    \label{fig:example_kinematics_and_hop_to_hop_energy_timeseries}
\end{figure}

In the remainder of this section, we describe experiments (Section \ref{sec:experiments}) and simulations (Section \ref{sec:simulations}) with a finite-foot-mass hopping monopod that show the hop-to-hop energy map captures the dynamics of a real \strout{monopedal robot}\rev{monopod} hopping on a real deformable substrate.
The close agreement between these experimental results and our intentionally simple model motivates our analysis of the dynamical properties of the model's hop-to-hop energy map in Section \ref{sec:dynamics_and_massless_foot_analysis}.

\subsection{Experiments}\label{sec:experiments}

\rev{We performed systematic drop-jump experiments with a two-mass vertically-constrained robot and a prepared bed of granular media.
These experiments validate the model from which the hop-to-hop energy map is derived and also capture the hop-to-hop evolution of the robot's \ac{COM} kinetic energy predicted by the map in the case of finite foot mass.}
\strout{To validate the dynamics predicted by the hop-to-hop energy map for a real robot hopping on a real substrate, we performed systematic drop-jump experiments with a two-mass vertically-constrained robot and a prepared bed of granular media.}

\subsubsection{Experimental Setup}
Our experimental apparatus, shown in Figure \ref{fig:apparatus}, consists of four components.
\begin{itemize}
    \item The first component is a vertically-constrained robot (see Figure \ref{fig:apparatus}, \rev{inset}) built around a linear brushless DC motor mounted on a linear ball bearing carriage.
    A rectangular acrylic foot with area $1.42 \times 10^{-2}$ m$^2$ is mounted at the bottom of the motor slider.
    The robot is instrumented with a sub-micron resolution absolute encoder that measures body position along the vertical guiderail, an incremental encoder that measures leg extension/compression, and a 6-axis force/torque sensor that measures ground reaction force.
    The motor stator, linear carriage, and absolute encoder count toward the body mass (2.5 kg) while the motor slider, force/torque sensor, and acrylic foot count toward the foot mass (0.5 kg), resulting in a mass ratio $\mu = 0.2$\rev{; this rather high mass ratio is a consequence of the minimalist construction of the ``robot'' whose body is only a motor, a lightweight encoder, and the hardware required to constrain the motor and encoder to the guiderail.}
    A 32-bit microcontroller reads \strout{these} sensors and implements the switched-compliance controller described in Section \ref{sec:controller} inside a 1 kHz control loop.
    \item The second component is a fluidized bed trackway filled to a depth of 19 cm with poppy seeds, a well-studied model granular substrate~\cite{Li_Terradynamics_2013,qian2015principles,jin2019preparation}.
    Air fluidization between sets of experiments ensures repeatable and homogeneous terrain conditions (\eg packing fraction and depth).
    \item The third component is a lifting mechanism that lifts and releases the robot from a user-specified height above the surface of the granular substrate.
    \item The fourth component is an x-y gantry that positions the robot above undisturbed soil before each drop-jump experiment.
\end{itemize}

\rev{
\begin{remark}
    Our experimental apparatus is similar to those used by us and others in previous studies on impacts and jumps on granular media~\cite{Aguilar_2016_Robophysical,Hubicki_Tractable_2016,roberts2018reactive,roberts2019mitigating,lynch2020SLP,chang2021learning}.
    The vertical guiderail enables the vertical hopping motion to be isolated without needing to stabilize the robot's rotational and horizontal translational degrees of freedom.
    While our experimental apparatus does not resemble most legged robots, it is suited to experimental validation of the soft-ground model, which is relevant to any legged robot locomotion on soft ground.
\end{remark}
}

\begin{figure}[t]
    \centering
    \includegraphics[width=0.7\linewidth]{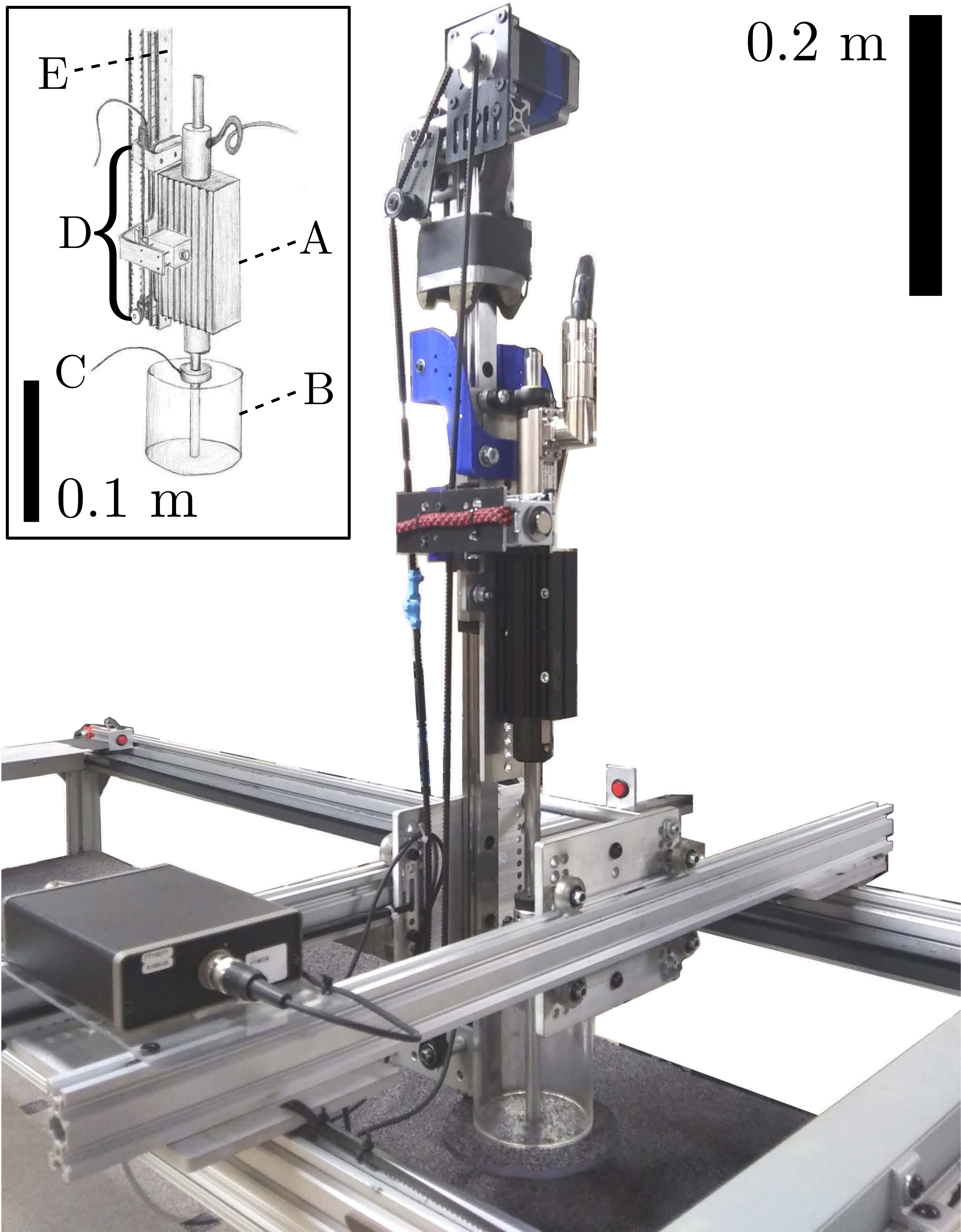}
    \caption{Experimental apparatus: vertically-constrained hopping robot mounted on an x-y gantry over a fluidized bed trackway filled with poppy seeds.
    Inset (from \cite{lynch2020SLP}): the robot consists of a motor stator (A), motor slider and acrylic foot (B), force/torque sensor (C), lifting mechanism (D), and vertical guiderail (E).}
    \label{fig:apparatus}
\end{figure}

\begin{figure*}[t]
    \centering
    \includegraphics[width=\linewidth]{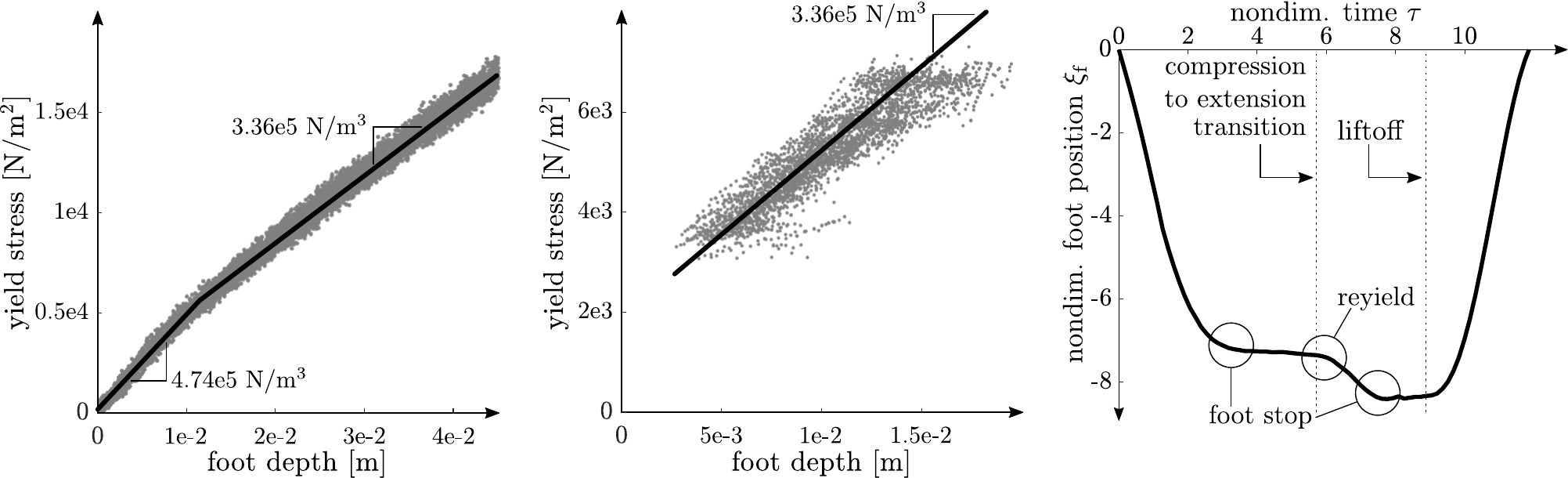}
    \caption{\textbf{Left:}~\rev{Experimental measurements of yield stress versus intruder depth for quasistatic intrusions (gray circles, foot area =  $2\times 10^{-3}$ m$^2$) show a surface regime (depth less than approximately 1 cm) and a bulk regime (depth greater than approximately 1 cm); in the bulk regime, the best-fit stress-per-unit-depth is approximately $3.36\times 10^{5}$ N/m$^3$.}
    \textbf{Middle:}~\rev{Measurements of stress versus depth recorded during jumping experiments (immediately prior to the compression-extension transition) agree well with the bulk-regime stress-per-unit depth estimate of $3.36\times 10^{5}$ N/m$^3$.}
    \textbf{Right:}~\rev{An example stance-phase foot trajectory from a jumping experiment ($\varepsilon_\mathrm{TD} = 3.8$, $\varepsilon_\mathrm{inj} = 20.3$, $\phi = 2$, $\kappa_\mathrm{c} = 0.27$, $\mu = 0.2$) demonstrates the foot-stop and reyield events predicted by our foot-ground interaction model.}
    }
    \label{fig:stress_vs_depth_and_reyield_example}
\end{figure*}
\subsubsection{Experimental Apparatus Characterization}
We characterize the granular substrate's penetration resistance by performing quasistatic intrusion experiments with a cylindrical intruder (5.1~cm diameter) while measuring penetration depth and \ac{GRF}.
By fitting a piecewise-linear curve to stress-versus-depth data, shown in Figure~\ref{fig:stress_vs_depth_and_reyield_example} (left), we determined ground stresses per unit depth of $4.74\times 10^{5}$ N/m$^3$ for depths less than approximately $10^{-2}$ m (the surface regime) and $3.36\times 10^{5}$ N/m$^3$ for depths greater than approximately $10^{-2}$ m (the bulk regime).
In other words, during quasistatic intrusion, the ground appears stiffer at shallow penetration depths than it does at deeper penetration depths.
These observations of two distinct penetration resistance regimes agree with previous work by Aguilar and Goldman~\cite{Aguilar_2016_Robophysical}, who attributed this change to the evolution of a cone of jammed granular media under the intruder.

To compare the effective ground stiffness during jumping experiments to the value estimated during quasistatic intrusions, we record the yield stress and foot penetration depth just prior to the compression-extension transition, when the foot is at rest before energy injection.
Figure~\ref{fig:stress_vs_depth_and_reyield_example} (middle) shows that the bulk-regime stress-per-unit depth ($3.36\times 10^{5}$ N/m$^3$) estimate from quasistatic intrusions agrees well with these yield-stress-versus-depth measurements despite the shallower penetration depth.
For the foot used in our jumping experiments, the effective ground stiffness is $k_\mathrm{g} = 4.8 \times 10^3$ N/m.
Because fluidization puts the granular bed in a loose-packed state, we hypothesize that, upon impact, the foot forces interstitial air out from between grains, resulting in localized fluidization that reduces the effective ground stiffness from the surface-regime value to a value closer to the bulk-regime stiffness \cite{jerome2016unifying}.
This agreement helps justify our model of deformable ground as a unidirectional spring that resists compression but not tension.

The right-hand plot in Figure~\ref{fig:stress_vs_depth_and_reyield_example} shows a stance-phase foot trajectory typical of our jumping experiments.
This trajectory displays the domain transitions predicted by our foot-ground interaction model: foot stop during compression, reyielding upon energy injection at the compression-extension transition, and foot stop again during extension.
This qualitative agreement further supports our intentionally simple foot-ground interaction model.

\subsubsection{Experimental Hop-to-Hop Energy Map Validation}
To experimentally validate the hop-to-hop energy map derived from our simple model, we prescribe $\varepsilon_\mathrm{inj}$, $\phi$, and $\kappa_\mathrm{c}$, and we drop the robot from a range of heights (foot-ground clearance ranges from 1~cm to 10~cm, in 0.5~cm increments), resulting in a range of $\varepsilon_\mathrm{TD}$.
\rev{The kinetic energy of the \ac{COM} at \ac{TD} is related to the drop height through conservation of energy:}
\strout{Given the apex \ac{COM} gravitational potential energy and the coefficient of Coulomb friction between the robot body and the vertical guiderail, the \ac{COM} kinetic energy at \ac{TD} is}
\rev{\begin{align}\label{eq:experiments_apex_KE_to_TD_KE}
    \begin{split}
        \varepsilon_\mathrm{TD} &= \varepsilon_\mathrm{apex} - \Delta\varepsilon_\mathrm{friction}\\
        &= \xi_\mathrm{CoM,apex} - \phi_\mathrm{c}\left(\xi_\mathrm{b,apex} - \xi_\mathrm{b,TD}\right),
    \end{split}
\end{align}
where $\varepsilon_\mathrm{apex}$ is the apex gravitational potential energy of the \ac{COM} (equivalent to nondimensionalized apex \ac{COM} position \rev{$\xi_\mathrm{CoM,apex}$}) and $\Delta\varepsilon_\mathrm{friction}$ is the work done by friction between the body and the guidrail during the drop, equivalent to $\phi_\mathrm{c}\left(\xi_\mathrm{b,apex} - \xi_\mathrm{b,TD}\right)$, where $\phi_\mathrm{c} = 0.15$ is the nondimensionalized Coulomb friction coefficient (estimated by fitting to \ac{COM} free-fall kinematics), and $\xi_\mathrm{b,apex}$ and $\xi_\mathrm{b,TD}$ are the body positions at the apex and at touchdown, respectively.
We use the same procedure to compute $\mathcal{P}_\theta(\varepsilon_\mathrm{TD})$, the \ac{COM} kinetic energy on the next \ac{TD}, given the \ac{COM} kinetic energy at \ac{LO}.
This procedure produces a $\left(\varepsilon_\mathrm{TD},\mathcal{P}_\theta\left(\varepsilon_\mathrm{TD}\right)\right)$ pair that accounts for energy dissipation due to friction in the guiderail.
}

We systematically vary $\varepsilon_\mathrm{inj}$ from 13.6 to 27.1 while holding $\phi$ and $\kappa_\mathrm{c}$ constant at 1.5 and 0.27, respectively.
For each drop height and combination of independent parameters, we perform ten drop-jump experiments, spaced evenly over the length and width of the granular bed, after which we fluidize the bed to prepare the soil for the next set of experiments.
This experimental procedure generates 190 $(\varepsilon_\mathrm{TD}, \mathcal{P}_\theta(\varepsilon_\mathrm{TD}))$ pairs per parameter combination, totaling 950 experimental data points across five values of $\varepsilon_\mathrm{inj}$.

\begin{figure}[t]
    \centering
    \includegraphics[width=\linewidth]{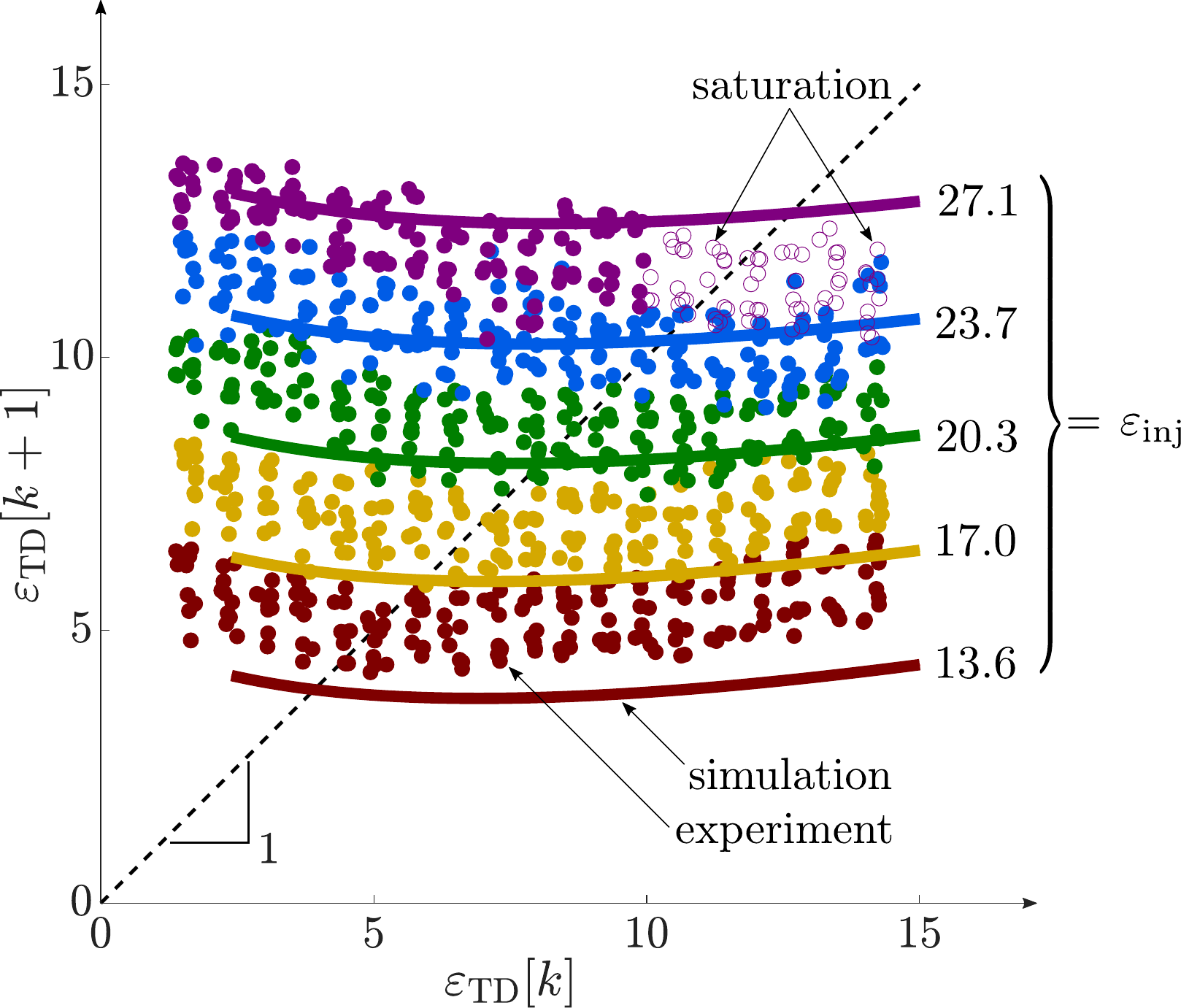}
    \caption{Comparison between experiments and simulations: circles represent experimental $(\varepsilon_\mathrm{TD},\mathcal{P}_\theta(\varepsilon_\mathrm{TD}))$ pairs computed from measured kinematic data using Equation \eqref{eq:experiments_apex_KE_to_TD_KE}; curves are hop-to-hop energy maps $\mathcal{P}_\theta$ simulated using experimental parameters.
    Hopping gaits (fixed points of the energy map) correspond to points where the data intersect the unity map, indicated by a thin black line.
    Motor force saturation (indicated by hollow circles) occurs at large $\varepsilon_\mathrm{TD}[k]$ and large commanded $\varepsilon_\mathrm{inj}$, resulting in reduced injected energy compared to the prescribed value.}
    \label{fig:experimental_results}
\end{figure}

Figure~\ref{fig:experimental_results} compares the results of our jumping experiments to curves from simulations \rev{of the hybrid dynamics described in Section~\ref{sec:hybrid_model} with an additional term included to account} for Coulomb friction between the body and the guiderail.
All simulations are performed using MATLAB's adaptive-timestep integrator \texttt{ode15s}.
The experimental data agree qualitatively with the hop-to-hop energy maps predicted by simulation, despite several unmodeled aspects of granular intrusion, including inertial effects due to granular accretion underfoot \cite{Aguilar_2016_Robophysical}, free-surface effects, and elasticity of the grains themselves \cite{jaeger1996granular}, as well as edge and bottom effects induced by the walls of the fluidized bed.
The fact that experiments with a real \strout{robot}\rev{monopod} hopping on a real substrate agree with our deliberately simple robophysical model points to its value as a basis for planning and controlling legged robot locomotion on real deformable substrates.

\subsection{Simulations}\label{sec:simulations}
The domain sequence in a hopping gait can be quite complex, depending on $\mu$ and $\kappa_\mathrm{c}$, as illustrated in Figure~\ref{fig:example_kinematics_and_hop_to_hop_energy_timeseries}.
While this complexity precludes closed-form representation of $\mathcal{P}_\theta$ for $\mu > 0$, a closed-form representation exists in the limit of $\mu\to0$ (see Appendix for derivation).
In the remainder of this section, we use simulations as a bridge between the finite-foot-mass hopping experiments described above and our analysis of the dynamics of the hop-to-hop energy map in the massless-foot limit, described later in Section~\ref{sec:massless_foot_analysis}.

\subsubsection{Comparison between Massless-Foot and Finite-Mass-Foot Dynamics}\label{sec:compare_massless_and_finite_foot_mass_maps}
We numerically simulate the hybrid system model of the monopod \strout{(see Section \ref{sec:hybrid_model})} for mass ratios $\mu$ from \strout{0.001} \rev{$10^{-3}$} to 1.
These simulations calculate the dissipative effect of finite foot mass and also verify the accuracy of our massless-foot analysis\strout{, which we describe later in Section \ref{sec:massless_foot_analysis}}.
In Figure \ref{fig:sim_overlay_massless_soln}, we plot the \strout{resulting} fixed point of the hop-to-hop energy map versus \rev{the} mass ratio and \rev{the} force ratio, for \strout{injected energy} $\varepsilon_\mathrm{inj} = 10$ and \strout{stiffness ratio} $\kappa_\mathrm{c} = 0.1$.
The massless-foot solution (the black curve overlaid on the surface at the lowest-simulated mass ratio, $\mu = 10^{-3}$) is in close agreement with the finite-foot-mass simulations as the mass ratio approaches zero.
Broadly speaking, for a given $\varepsilon_\mathrm{inj}$, $\phi$, and $\kappa_\mathrm{c}$, $\varepsilon_\mathrm{TD}^*$ decreases approximately monotonically as $\mu$ increases, agreeing with Equation \eqref{eq:nondim_energy_loss_at_liftoff}.
\rev{As $\mu$ increases, both $\varepsilon_\mathrm{TD}^*$ and $\Lambda$ decrease, although the finite-foot mass map is quite similar to the massless-foot map for $\mu = 10^{-3}$ and $\mu = 10^{-2}$, and even the $\mu = 10^{-1}$ case is still relatively similar to the massless-foot map, as shown in Figure~\ref{fig:sim_overlay_massless_soln}.

\begin{figure}[t]
    \centering
    \includegraphics[width=\linewidth]{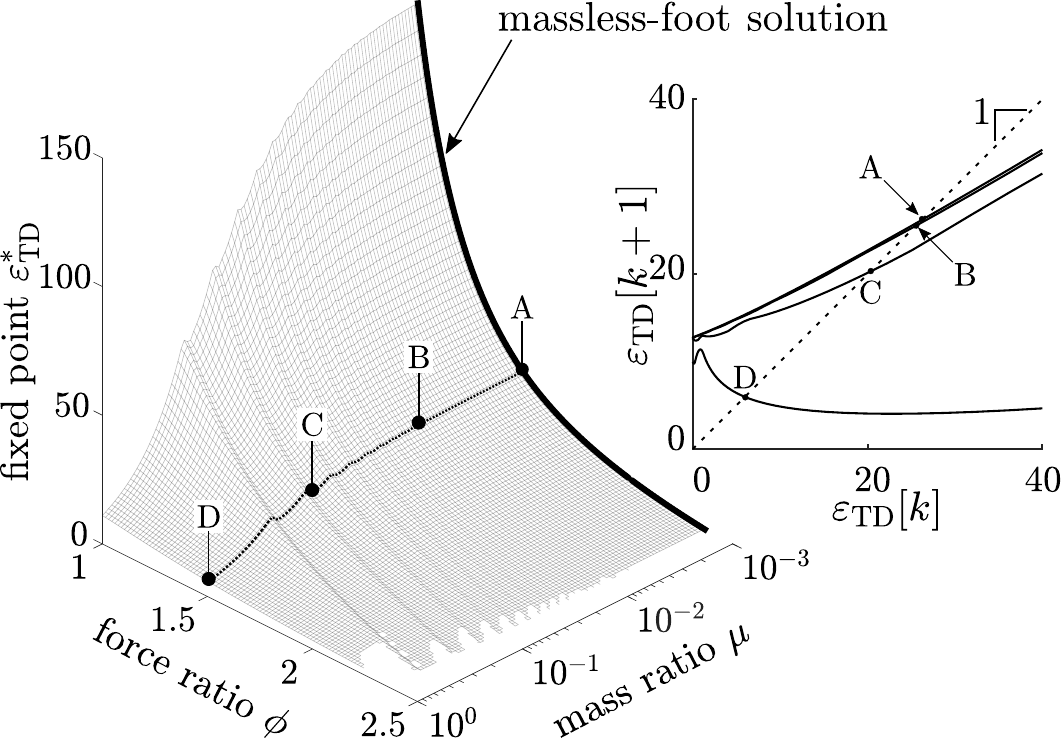}
    \caption{
    Comparison of fixed points $\varepsilon_\mathrm{TD}^*$ of map $\mathcal{P}_\theta$ for negligible foot mass ($\mu\to 0$, black curve) and finite foot mass ($\mu > 0$, surface), with $\varepsilon_\mathrm{inj} = 20$ and $\kappa_\mathrm{c} = 0.1$.
    \textit{Left:~}
    \rev{The closed-form negligible-foot-mass map (black curve) closely matches the numerically computed finite-foot-mass map (surface) at $\mu = 10^{-3}$, as discussed in Section~\ref{sec:compare_massless_and_finite_foot_mass_maps}.}
    \textit{Right:~}
    Computed maps for $\phi = 1.5$ and four mass ratios: $\mu = 10^{-3}$ (A), $\mu = 10^{-2}$ (B), $\mu = 10^{-1}$ (C), and $\mu = 10^{0}$ (D).
	}
    \label{fig:sim_overlay_massless_soln}
\end{figure}

\begin{remark}
This mass ratio range ($10^{-3}$ to $10^{-1}$) encompasses the 1:18 ratio of unsprung mass to sprung mass of Raibert's 3D hopping monopod and the mass ratios of Raibert's quadruped~\cite{raibert1986legged}.
Quadrupedal robots like the MIT Mini Cheetah and commercially-available counterparts also belong to this category of legged robots with low mass ratios, as their hip and knee actuators are located close to the torso to minimize leg inertia~\cite{katz2019cheetah}.
\end{remark}
}

\strout{There is little change in $\varepsilon_\mathrm{TD}^*$ as $\mu$ increases from $10^{-3}$ to $10^{-2}$.}
\rev{As shown in Figure~\ref{fig:sim_overlay_massless_soln}, local extrema in the finite-foot-mass energy map emerge}
from the interplay between the forces applied during impact by the leg spring and the depth-dependent yield threshold of the terrain underfoot, both of which depend on the kinetic energy of the robot \ac{COM} on impact.
\rev{These extrema, with magnitudes that grow with $\mu$, change both the fixed point and the eigenvalue of the finite-foot-mass map.
These changes are relatively small except when both $\mu$ and $\phi$ are large, in which case the transient response of the finite-foot-mass map can be qualitatively different from that of the massless-foot map.
However, we emphasize that these qualitative differences in the hop-to-hop energy dynamics are restricted to the combination of a large force ratio and a heavy foot.}
\rev{Nevertheless, this feature indicates} that finite foot mass can result in more complex hopping behavior than suggested by the massless-foot hop-to-hop energy map discussed in \strout{Section \ref{sec:massless_foot_analysis}} \rev{the next section}.
In the following paragraphs, we discuss two such examples of complex behavior: period-two hopping gaits and chaotic hopping.

\subsubsection{Bifurcations and Chaos Near the Minimum Viable Gait}\label{sec:chaos}
In simulation, sweeping $\varepsilon_\mathrm{inj}$ while holding all other parameters constant reveals period-one gaits, period-two gaits, and a cascade of period-doubling bifurcations leading to chaos, illustrated in Figure \ref{fig:bif_diagram_massive_foot}.
As $\varepsilon_\mathrm{inj}$ is increased from $\varepsilon_\mathrm{inj,min}\approx 11.68$, a subcritical bifurcation occurs, followed by a cascade of period-doubling bifurcations leading to chaos around $11.85 < \varepsilon_\mathrm{inj} < 11.9$.
This chaotic region is relatively contained, with $\varepsilon_\mathrm{TD}^*$ bounded (approximately) between 0.1 and 0.3.
As $\varepsilon_\mathrm{inj}$ increases above 11.9, a cascade of period-halving bifurcations occurs, culminating in period-one hopping for $12.5 < \varepsilon_\mathrm{inj} < 13.5$, after which there follows another period-two region for $13.5 < \varepsilon_\mathrm{inj} < 14.1$.
For $\varepsilon_\mathrm{inj} > 14.1$, only period-one fixed points exist.
The massless-foot solution, indicated by a solid black line, approximately agrees with period-one fixed points of the map for $\mu = 0.02$.

Such complex behavior is expected, given the period-doubling route to chaos demonstrated by a Raibert-like hopper on hard ground \cite{vakakis1991strange} and the general similarity to the bouncing ball problem, a well-studied chaotic dynamical system \cite{guckenheimer2013nonlinear}.
Furthermore, this behavior supports our hypothesis that complex dynamics are possible on plastically deformable terrain, even for as simple a robot as our monopod using a Raibert-like switched-compliance energy injection controller.
This chaotic behavior may be undesirable, as sensitive dependence on initial conditions makes it difficult to predict future values of $\varepsilon_\mathrm{TD}$.
In this case, the energy map can be analyzed to determine ``no-go'' regions in parameter space that place $\varepsilon_\mathrm{TD}$ within the \ac{BOA} of the strange attractor responsible for chaotic behavior.
However, chaotic hopping may \rev{also offer benefits}:~\strout{be beneficial:} once on the strange attractor, $\varepsilon_\mathrm{TD}$ takes on many values, bounded above and below, thus providing an information-rich stimulus from which to estimate ground stiffness.
It may also be possible to exit the strange attractor in a single hop once a desired fixed point \strout{$\varepsilon_\mathrm{TD}^*$} is reached by altering controllable parameters such that the eigenvalue \strout{$\Lambda$} equals zero while the fixed point \strout{$\varepsilon_\mathrm{TD}^*$} is unchanged.

\begin{figure}[t]
    \centering
    \includegraphics[width=\linewidth]{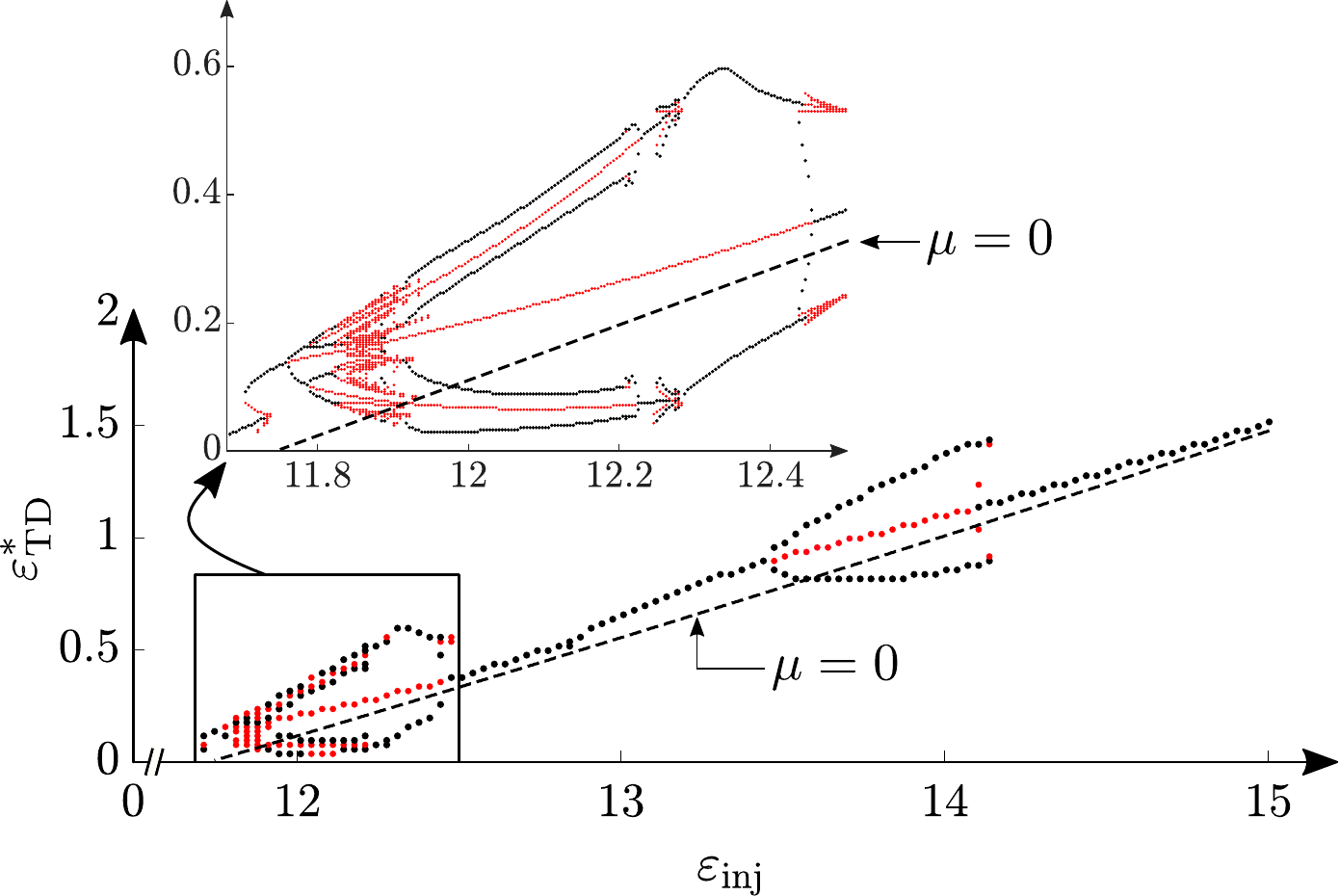}
    \caption{Bifurcation diagram: $\varepsilon_\mathrm{TD}^*$ versus $\varepsilon_\mathrm{inj}$; $\mu = 0.02$, $\kappa_\mathrm{c} = 0.25$, $\phi = 2$.
    The bifurcation diagram is generated using the algorithm described in \cite{ross2000bifurcation} with resolutions of $100\times100$ (main diagram) and $200\times200$ (inset).
    Black circles indicate linearly stable fixed points and \rev{red} circles indicate linearly unstable fixed points.
    The \rev{dashed} black line indicates the massless-foot solution ($\mu = 0$).
    }
    \label{fig:bif_diagram_massive_foot}
\end{figure}

\section{Hop-to-Hop Energy Dynamics}\label{sec:dynamics_and_massless_foot_analysis}
Having established in the previous section the applicability of the hop-to-hop energy map to a real robot hopping on a real deformable substrate, 
we turn our attention to \strout{defining and discussing} several fundamental locomotion qualities---efficiency, \rev{energy} stability margin (a proxy for agility), and robustness---\rev{which we define} in terms of properties of the hop-to-hop energy map.
\rev{We then derive a closed-form expression for the hop-to-hop energy map in the limit of zero foot mass, which, in turn, enables us to discuss efficiency, agility, and robustness in terms of dimensionless control parameters.

First, we show that there is a 2D surface in 3D control parameter space for which any parameter combination produces the same steady-state hopping gait, while different combinations result in different values for efficiency and the energy stability margin.
In particular, efficient hopping results from low stiffness ratios and low force ratios, as these reduce the energy dissipated via ground deformation underfoot, while a higher energy stability margin (\ie a faster transient response) results from higher stiffness ratios or higher force ratios, combined with injecting more energy per hop.
This relationship captures an intuitive tradeoff between energetic efficiency and agility.

We also develop guidelines for choosing control parameters to ensure robust hopping, based on the basin of attraction of the map's fixed point.
Specifically, in the limit of zero foot mass, we prove that the fixed point of the hop-to-hop energy map is globally stable if the map's eigenvalue is between zero and one.
We also find necessary and sufficient conditions for the hop-to-hop energy map to have a banded basin structure, a feature which is undesirable from a robustness perspective; these conditions also offer guidance for choosing control parameters to ensure robust hopping.
}

\subsection{Local Properties: Efficiency and Energy Stability Margin}\label{sec:local_map_properties}
We begin our discussion of dynamic locomotion properties with two local gait characteristics: efficiency and \rev{energy} stability margin.
\begin{definition}[Efficiency]\label{def:efficiency}
    Given a period-one fixed point $\varepsilon_\mathrm{TD}^*$ of a map $\mathcal{P}_\theta$, we define the \textit{efficiency} $\eta$ of the corresponding gait $\mathcal{O}_\theta$ as
    \begin{equation}\label{eq:efficiency_def}
        \eta = \frac{\varepsilon_\mathrm{TD}^*}{\varepsilon_\mathrm{inj} + \varepsilon_\mathrm{TD}^*},
    \end{equation}
    where $\varepsilon_\mathrm{inj}$ is the energy injected per hop.
\end{definition}
\begin{remark}
    Efficiency is bounded between 0 and 1, but irrecoverable energy loss due to ground deformation means some energy must be injected on every hop, so $\eta = 1$ is unattainable in practice.
\end{remark}
The next definition follows from properties of the eigenvalue $\Lambda$ of the hop-to-hop energy map $\mathcal{P}_\theta$, given by
\begin{equation}\label{eq:eigenvalue_def}
        \Lambda = \frac{\mathrm{d}\mathcal{P}_\theta}{\mathrm{d}\varepsilon_\mathrm{TD}}\bigg\vert_{\varepsilon_\mathrm{TD}^*}
        = 1 - \left(\frac{\partial\varepsilon_\mathrm{loss}}{\partial\xi_\mathrm{f^+}}\frac{\partial\xi_\mathrm{f^+}}{\partial\xi_\mathrm{f^-}}\frac{\partial\xi_\mathrm{f^-}}{\partial\varepsilon_\mathrm{TD}}\right) \bigg\vert_{\varepsilon_\mathrm{TD}^*},
\end{equation}
where $\partial\varepsilon_\mathrm{loss}/\partial\xi_\mathrm{f^+}$, $\partial\xi_\mathrm{f^+}/\partial\xi_\mathrm{f^-}$, and $\partial\xi_\mathrm{f^-}/\partial\varepsilon_\mathrm{TD}$ depend on parameters $\theta$.
The eigenvalue $\Lambda$ depends on the fixed point $\varepsilon_\mathrm{TD}^*$ and parameters $\theta$ and quantifies the local stability of the fixed point $\varepsilon_\mathrm{TD}^*$ of the map $\mathcal{P}_\theta$ and therefore the local stability of the corresponding gait $\mathcal{O}_\theta$.
A gait $\mathcal{O}_\theta$ is locally \textit{stable} if the eigenvalue $\Lambda$ associated with the corresponding fixed point $\varepsilon_\mathrm{TD}^*$ of the map $\mathcal{P}_\theta$ has magnitude less than one.
The eigenvalue $\Lambda$ also characterizes the transient response, including the rate at which nearby initial conditions $\varepsilon_\mathrm{TD}[0]$ converge to $\varepsilon_\mathrm{TD}^*$ and whether this convergence is monotonic ($\Lambda > 0$), oscillatory ($\Lambda < 0$), or deadbeat ($\Lambda = 0$).
\begin{definition}[\rev{Energy} Stability Margin]\label{def:stability_margin}
    We define the \textit{\rev{energy} stability margin} $\alpha$ as
    \begin{equation}\label{eq:stability_margin}
        \alpha =
        \begin{cases}
            1 - \Lambda^2,&\textrm{ if } \Lambda \in \left[-1,1\right],\\
            0,&\textrm{ otherwise},
        \end{cases}
    \end{equation}
    where the eigenvalue $\Lambda$ is defined in Equation~\eqref{eq:eigenvalue_def}.
\end{definition}
\begin{remark}
    \rev{Rapid} convergence to a \rev{stable} target \ac{COM} kinetic energy \rev{is} relevant to agile locomotion \cite{full2002quantifying,sheppard2006agility}.
    The proposed definition of the \rev{energy} stability margin captures the fact that an eigenvalue closer to zero results in a faster transient response (convergence in fewer hops), compared to an eigenvalue closer to $1$ or $-1$, and it also captures the fact that \strout{hop-to-hop} local stability \rev{of the fixed-point} ($\Lambda \in \left[-1,1\right]$) is a prerequisite for agility.
\end{remark}

\subsection{Global Properties: Basins of Attraction as Criteria for Robustness}\label{sec:global_map_properties}
In general, the hop-to-hop energy map given in Equation~\eqref{eq:nondim_energy_map} is a nonlinear function of $\varepsilon_\mathrm{TD}$, and therefore, while the map may have a locally stable fixed point, convergence from arbitrary initial conditions is not guaranteed.

\begin{definition}[Basin of Attraction]\label{def:BOA}
The \textit{basin of attraction} (BOA) of a fixed point $\varepsilon_\mathrm{TD}^*$ of the parameterized hop-to-hop energy map $\mathcal{P}_\theta$ is the set of initial conditions $\varepsilon_\mathrm{TD}$ that converge to the fixed point:
\begin{equation}
    \mathrm{BOA} = \left\{\varepsilon_\mathrm{TD}\in\left[0,\infty\right) \mid \lim_{k\to\infty}\mathcal{P}_\theta^k\left(\varepsilon_\mathrm{TD}\right) = \varepsilon_\mathrm{TD}^*\right\}.
\end{equation}
The \textit{immediate} \ac{BOA} is the largest interval containing the fixed point $\varepsilon_\mathrm{TD}^*$ that lies in the \ac{BOA} \cite{devaney2021introduction}.
\end{definition}

\begin{remark}
    In this work, we characterize the \ac{BOA} and its dependence on model parameters $\varepsilon_\mathrm{inj}$, $\phi$, and $\kappa_\mathrm{c}$.
    Under certain conditions, the \ac{BOA} and the immediate \ac{BOA} are both the semi-open interval $\left[0,\infty\right)$; in this case, we say $\varepsilon_\mathrm{TD}^*$ is \textit{globally stable}.
    Under other conditions, the \ac{BOA} has a more complex banded structure, and the immediate \ac{BOA} is a subset of the full \ac{BOA}.
    We discuss both cases \strout{in Section \ref{sec:BOA_massless}}\rev{in the remainder of this section}.
\end{remark}

\subsection{Hop-to-Hop Energy Map Analysis in the Limit of Negligible Foot Mass}\label{sec:massless_foot_analysis}

In the remainder of this section, we analyze the nondimensionalized hop-to-hop energy map in the limit of negligible foot mass for which the map has a closed-form expression.
In the limit of $\mu\to 0$, the closed-form representation of the hop-to-hop energy map is
\begin{align}\label{eq:nondim_massless_foot_energy_map}
    \mathcal{P}_\theta\left(\varepsilon_\mathrm{TD}\right) &= \varepsilon_\mathrm{TD} + \varepsilon_\mathrm{inj} - \varepsilon_\mathrm{inj}\left(\frac{a \xi_\mathrm{f^-}^3 + c \xi_\mathrm{f^-}}{b \xi_\mathrm{f^-}^2 + d}\right)^2,
\end{align}
where $\xi_\mathrm{f^-}$ is shorthand for \[\xi_\mathrm{f}(\tau_\mathrm{CE}^-) = -(1 + \sqrt{1 + 2 \kappa_{c} \varepsilon_\mathrm{TD} (1 + \kappa_\mathrm{c})^{-1}}),
\]
\ie the foot penetration depth immediately prior to energy injection, and $a$, $b$, $c$, and $d$ are positive coefficients that depend on parameters $\theta$.
See the Appendix for derivations of Equation \eqref{eq:nondim_massless_foot_energy_map} and coefficients $a$, $b$, $c$, and $d$.

\subsubsection{Fixed Points}\label{sec:fixed_points_massless}
We use monotonicity of $\varepsilon_\mathrm{loss}$ with respect to $\varepsilon_\mathrm{TD}$ to prove that there is a unique fixed point $\varepsilon_\mathrm{TD}^*$ for each combination of $\varepsilon_\mathrm{inj}$, $\phi$, and $\kappa_\mathrm{c}$.
We then solve for $\varepsilon_\mathrm{TD}^*$ given $\varepsilon_\mathrm{inj}$, $\phi$, and $\kappa_\mathrm{c}$ and use this solution to characterize the impact of these dimensionless parameters on the fixed point.

\begin{lemma}\label{lemma:monotonic_eloss}
    In the limit $\mu\to0$, $\varepsilon_\mathrm{loss}$ increases monotonically with $\varepsilon_\mathrm{TD}$.
\end{lemma}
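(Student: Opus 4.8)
The plan is to show that $\varepsilon_\mathrm{loss}$, viewed as a function of $\varepsilon_\mathrm{TD}$ in the massless-foot limit, is the composition of monotone pieces whose overall monotonicity can be read off from the closed-form map. From Equation~\eqref{eq:nondim_massless_foot_energy_map}, the total energy lost on a hop is $\varepsilon_\mathrm{loss} = \varepsilon_\mathrm{inj} - (\mathcal{P}_\theta(\varepsilon_\mathrm{TD}) - \varepsilon_\mathrm{TD}) = \varepsilon_\mathrm{inj}\bigl(a\xi_\mathrm{f^-}^3 + c\,\xi_\mathrm{f^-}\bigr)^2 / \bigl(b\,\xi_\mathrm{f^-}^2 + d\bigr)^2$ (the injected energy appearing here because, in steady flight with $\mu\to0$, all injected energy that is not delivered as kinetic energy on the next touchdown is lost to reyielding during push-off). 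So it suffices to (i) establish that $\xi_\mathrm{f^-}$ is a strictly monotone function of $\varepsilon_\mathrm{TD}$, and (ii) establish that the rational function $g(s) = \bigl(a s^3 + c s\bigr)^2/\bigl(b s^2 + d\bigr)^2$ is monotone in $s$ along the relevant range of $\xi_\mathrm{f^-}$.

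For step (i), the explicit formula $\xi_\mathrm{f^-} = -\bigl(1 + \sqrt{1 + 2\kappa_\mathrm{c}\varepsilon_\mathrm{TD}(1+\kappa_\mathrm{c})^{-1}}\bigr)$ makes this immediate: as $\varepsilon_\mathrm{TD}$ increases from $0$ to $\infty$, the quantity under the radical increases, so $\xi_\mathrm{f^-}$ decreases strictly and continuously from $-2$ to $-\infty$. Hence it is enough to show $g$ is monotone (decreasing, so that the composition with a decreasing map comes out increasing) on $s \in (-\infty, -2]$, or equivalently that $h(s) \defeq \bigl(a s^3 + c s\bigr)/\bigl(b s^2 + d\bigr)$ is monotone there and nonvanishing, so that $g = h^2$ inherits monotonicity. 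Since $a,b,c,d > 0$ and $s<0$ we have $a s^3 + c s = s(a s^2 + c) < 0$ and $b s^2 + d > 0$, so $h(s) < 0$ throughout; thus $g = h^2 > 0$ and it remains to check the sign of $h'$. Differentiating, $h'(s)$ has the same sign as $(3a s^2 + c)(b s^2 + d) - (a s^3 + c s)(2 b s) = a b s^4 + (3 a d + b c) s^2 + c d$, which is a sum of nonnegative terms in $s^2$ with positive coefficients and is therefore strictly positive. So $h$ is strictly increasing and strictly negative, whence $g = h^2$ is strictly decreasing, on all of $(-\infty,0)$ and in particular on $(-\infty,-2]$.

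Combining the two steps: $\varepsilon_\mathrm{loss}(\varepsilon_\mathrm{TD}) = \varepsilon_\mathrm{inj}\, g(\xi_\mathrm{f^-}(\varepsilon_\mathrm{TD}))$ is the composition of the strictly decreasing map $\varepsilon_\mathrm{TD}\mapsto \xi_\mathrm{f^-}$ with the strictly decreasing map $s\mapsto g(s)$, scaled by the positive constant $\varepsilon_\mathrm{inj}$; hence $\varepsilon_\mathrm{loss}$ is a strictly increasing function of $\varepsilon_\mathrm{TD}$, which is the claim. (A technical caveat: this argument implicitly assumes the coefficients $a,b,c,d$ are genuinely constants on the relevant branch, i.e. that the domain sequence used to derive Equation~\eqref{eq:nondim_massless_foot_energy_map} does not itself switch as $\varepsilon_\mathrm{TD}$ varies; I would lean on the Appendix derivation for this, since in the $\mu\to0$ limit the static-stance subphases collapse and the domain sequence is the fixed one $\mathrm{YC},\mathrm{YE},\mathrm{F}$ regardless of $\varepsilon_\mathrm{TD}$.)

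\textbf{Main obstacle.} The calculus is routine; the only real subtlety is the caveat just mentioned — making sure the closed-form expression with fixed coefficients $a,b,c,d$ is valid uniformly over $\varepsilon_\mathrm{TD}\in[0,\infty)$ rather than only on a sub-interval, since $\varepsilon_\mathrm{loss}$ is defined piecewise through the hybrid dynamics. If the paper's convention is instead that reyielding on push-off occurs only when $\phi$ exceeds a threshold that can depend on $\varepsilon_\mathrm{TD}$, one would need to check monotonicity separately on the reyielding and non-reyielding regimes and verify continuity (and the correct ordering of one-sided limits) at the crossover; but given that $\phi$ is prescribed as a control and the force ratio is defined to be constant across the hop, I expect the single-branch analysis above to suffice, and the sign-of-$h'$ computation to be the entire content of the proof.
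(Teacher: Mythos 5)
Your overall strategy is the same as the paper's: differentiate $\varepsilon_\mathrm{loss}=\varepsilon_\mathrm{inj}\,h(\xi_\mathrm{f^-})^2$ with $h(s)=(as^3+cs)/(bs^2+d)$ via the chain rule, use that $\xi_\mathrm{f^-}$ is strictly decreasing in $\varepsilon_\mathrm{TD}$ and that $h<0$ on the relevant range, and reduce everything to the sign of the numerator of $h'$. The problem is an arithmetic slip at exactly the load-bearing step. Expanding $(3as^2+c)(bs^2+d)-(as^3+cs)(2bs)$ gives $ab\,s^4+(3ad-bc)\,s^2+cd$, not $ab\,s^4+(3ad+bc)\,s^2+cd$: the cross terms contribute $3ad\,s^2+bc\,s^2$ from the first product and $-2bc\,s^2$ from the second, leaving $3ad-bc$. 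Your conclusion that the numerator is ``a sum of nonnegative terms with positive coefficients and therefore strictly positive'' depends entirely on the wrong sign. With the correct coefficient, $3ad-bc$ is \emph{not} always positive: substituting the Appendix expressions for $a,b,c,d$ shows its sign is that of $-2\kappa_\mathrm{c}\phi_\mathrm{eff}^3+4\kappa_\mathrm{c}\phi_\mathrm{eff}^2+4\phi_\mathrm{eff}-2$, which is negative already for, e.g., $\kappa_\mathrm{c}=1$, $\phi_\mathrm{eff}=3$. So the positivity of the quartic is a genuine claim requiring a genuine argument, not a triviality.

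The paper's proof works with the same quantity, written as the quadratic $y=ab\,x^2+(3ad-bc)\,x+cd$ in $x=\xi_\mathrm{f^-}^2$, and argues positivity by examining where this quadratic attains its minimum and noting $y(0)=cd>0$. To repair your proof you would need to do the analogous work: either show the discriminant $(3ad-bc)^2-4ab\,cd=(9ad-bc)(ad-bc)$ is nonpositive, or show that any real roots of $y$ lie outside the physically attained range $x=\xi_\mathrm{f^-}^2\geq 4$, for the parameter regime under consideration. The rest of your write-up (the monotone decrease of $\xi_\mathrm{f^-}$ from $-2$ to $-\infty$, the sign bookkeeping through $g=h^2$, and the observation that $\phi_\mathrm{eff}=\max(\phi,1)$ keeps $a,b,c,d$ constant across the whole range of $\varepsilon_\mathrm{TD}$ so that a single-branch analysis suffices) is sound and, if anything, more carefully organized than the paper's one-line ``direct computation''; but as written the proof does not establish the lemma.
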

\begin{proof}
    Direct computation of $\mathrm{d}\varepsilon_\mathrm{loss}/\mathrm{d}\varepsilon_\mathrm{TD}$ shows that the sign of $\mathrm{d}\varepsilon_\mathrm{loss}/\mathrm{d}\varepsilon_\mathrm{TD}$ is determined by the sign of the quadratic polynomial $y = a b x^2 + (3 a d - b c)x + c d$, where $x = \xi_\mathrm{f^-}^2$.
    For real $\xi_\mathrm{f^-}$, the minimum possible value of $y$ is $c d$ (corresponding to $\xi_\mathrm{f^-} = 0$), which is positive for all positive $\varepsilon_\mathrm{inj}$, $\phi$, and $\kappa_\mathrm{c}$.
    Because $y$ increases monotonically with $x$ for $x \geq 0$, $y$ is positive for $\xi_\mathrm{f^-} \geq 0$.
    Therefore $\mathrm{d}\varepsilon_\mathrm{loss}/\mathrm{d}\varepsilon_\mathrm{TD} > 0$ for all $\varepsilon_\mathrm{TD} \geq 0$, so $\varepsilon_\mathrm{loss}$ increases monotonically with $\varepsilon_\mathrm{TD}$.
\end{proof}

\begin{remark}
    While $\varepsilon_\mathrm{loss}$ increases monotonically with $\varepsilon_\mathrm{TD}$ in the limit $\mu\to0$, this monotonic relationship does not hold for sufficiently large $\mu$, as discussed in Section \ref{sec:compare_massless_and_finite_foot_mass_maps}.
\end{remark}

\begin{proposition}\label{prop:unique_fxpt}
    In the limit $\mu\to0$, the map $\mathcal{P}_\theta$ has a unique fixed point $\varepsilon_\mathrm{TD}^*$.
\end{proposition}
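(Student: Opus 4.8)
The plan is to combine Lemma~\ref{lemma:monotonic_eloss} with the structure of the energy map in Equation~\eqref{eq:nondim_massless_foot_energy_map} to show that the fixed-point equation $\mathcal{P}_\theta(\varepsilon_\mathrm{TD}) = \varepsilon_\mathrm{TD}$ has exactly one solution. First I would rewrite the fixed-point condition: subtracting $\varepsilon_\mathrm{TD}$ from both sides of $\mathcal{P}_\theta(\varepsilon_\mathrm{TD}) = \varepsilon_\mathrm{TD}$ and using $\mathcal{P}_\theta(\varepsilon_\mathrm{TD}) = \varepsilon_\mathrm{TD} + \varepsilon_\mathrm{inj} - \varepsilon_\mathrm{loss}(\varepsilon_\mathrm{TD},\theta)$, the fixed-point equation becomes simply $\varepsilon_\mathrm{loss}(\varepsilon_\mathrm{TD},\theta) = \varepsilon_\mathrm{inj}$. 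So it suffices to show the equation $\varepsilon_\mathrm{loss}(\varepsilon_\mathrm{TD}) = \varepsilon_\mathrm{inj}$ has a unique root on $[0,\infty)$.

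Uniqueness is then immediate from Lemma~\ref{lemma:monotonic_eloss}: since $\varepsilon_\mathrm{loss}$ is strictly increasing in $\varepsilon_\mathrm{TD}$, it can cross the constant level $\varepsilon_\mathrm{inj}$ at most once. For existence, I would invoke continuity of $\varepsilon_\mathrm{loss}$ (which follows from the closed-form expression, the denominator $b\xi_\mathrm{f^-}^2 + d$ being strictly positive since $b,d>0$) together with an intermediate-value argument. The two endpoint estimates needed are: (i) at $\varepsilon_\mathrm{TD} = 0$ we have $\xi_\mathrm{f^-} = -1$, so $\varepsilon_\mathrm{loss}(0) = \varepsilon_\mathrm{inj}\big(\tfrac{a\cdot(-1)^3 + c\cdot(-1)}{b\cdot 1 + d}\big)^2 = \varepsilon_\mathrm{inj}\big(\tfrac{a+c}{b+d}\big)^2$, which I expect (from the definitions of $a,b,c,d$ in the Appendix) to be strictly less than $\varepsilon_\mathrm{inj}$ — i.e.\ $(a+c)^2 < (b+d)^2$ — so that $\varepsilon_\mathrm{loss}(0) < \varepsilon_\mathrm{inj}$; and (ii) as $\varepsilon_\mathrm{TD}\to\infty$, $\xi_\mathrm{f^-}\to-\infty$ like $-\sqrt{2\kappa_\mathrm{c}\varepsilon_\mathrm{TD}/(1+\kappa_\mathrm{c})}$, and the rational function $(a\xi_\mathrm{f^-}^3 + c\xi_\mathrm{f^-})/(b\xi_\mathrm{f^-}^2 + d) \sim (a/b)\xi_\mathrm{f^-}\to-\infty$, so $\varepsilon_\mathrm{loss}(\varepsilon_\mathrm{TD})\to+\infty$ and in particular eventually exceeds $\varepsilon_\mathrm{inj}$. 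By the intermediate value theorem there is at least one root, and by monotonicity exactly one.

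The main obstacle is verifying endpoint estimate (i), namely that $\varepsilon_\mathrm{loss}(0) < \varepsilon_\mathrm{inj}$, or equivalently $a + c < b + d$, which requires unpacking the definitions of the coefficients $a,b,c,d$ from the Appendix; physically this just says that some nonzero kinetic energy survives a hop started from rest on the surface, i.e.\ the robot does not lose all of $\varepsilon_\mathrm{inj}$ to the ground, which is clear on energetic grounds but needs the explicit formulas to nail down. (An alternative, if the strict inequality at $\varepsilon_\mathrm{TD}=0$ is delicate, is to note $\varepsilon_\mathrm{loss}(0)\le\varepsilon_\mathrm{inj}$ always holds since energy loss cannot exceed energy injected when starting from rest, handle the boundary case $\varepsilon_\mathrm{loss}(0)=\varepsilon_\mathrm{inj}$ separately as giving the fixed point $\varepsilon_\mathrm{TD}^*=0$, and otherwise proceed with the IVT.) Everything else — continuity, the large-$\varepsilon_\mathrm{TD}$ growth rate, and the uniqueness step — is routine given Lemma~\ref{lemma:monotonic_eloss} and the closed form.
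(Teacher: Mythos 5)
Your uniqueness argument is exactly the paper's: the fixed-point condition reduces to $\varepsilon_\mathrm{loss}(\varepsilon_\mathrm{TD}) = \varepsilon_\mathrm{inj}$, and since Lemma~\ref{lemma:monotonic_eloss} makes the left side strictly increasing while the right side is a constant, there is at most one crossing. The paper's proof consists of precisely this one observation and stops there.

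The extra existence step you add is where the proposal goes wrong, in two ways. First, a computational slip: at $\varepsilon_\mathrm{TD}=0$ the formula gives $\xi_\mathrm{f^-} = -(1+\sqrt{1+0}) = -2$, not $-1$ (the paper even remarks that the monopod released from rest sinks to twice the weight-support depth). Second, and more seriously, the endpoint estimate $\varepsilon_\mathrm{loss}(0)\le\varepsilon_\mathrm{inj}$ is false in general, and so is the physical justification you offer for it. Plugging $\xi_\mathrm{f^-}=-2$ and the coefficients $a,b,c,d$ into the closed form with $\phi\le 1$ gives $\varepsilon_\mathrm{loss}(0)=2$ \emph{independently} of $\varepsilon_\mathrm{inj}$: the energy dissipated on the initial compression is paid for by gravitational potential energy released as the body sinks, not by the injected energy, so it is not bounded by $\varepsilon_\mathrm{inj}$. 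Consequently, for $\varepsilon_\mathrm{inj}<2$ (and, for $\phi>1$, below a larger threshold $\varepsilon_\mathrm{inj,min}$ that the paper discusses explicitly) the equation $\varepsilon_\mathrm{loss}=\varepsilon_\mathrm{inj}$ has no root on $[0,\infty)$ and no gait exists — this is exactly the ``minimum viable gait'' boundary and the white regions in the paper's figures. So the intermediate-value argument cannot be completed unconditionally; existence requires the additional hypothesis $\varepsilon_\mathrm{inj}\ge\varepsilon_\mathrm{inj,min}(\phi,\kappa_\mathrm{c})$, i.e.\ $\varepsilon_\mathrm{loss}(0)\le\varepsilon_\mathrm{inj}$, which must be imposed rather than derived. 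Your large-$\varepsilon_\mathrm{TD}$ asymptotics ($\varepsilon_\mathrm{loss}\to\infty$ linearly) and the continuity observation are fine, and with that added hypothesis your IVT argument would correctly upgrade the paper's ``at most one'' to ``exactly one.''
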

\begin{proof}
    The proof follows automatically from monotonicity of $\varepsilon_\mathrm{loss}$ with respect to $\varepsilon_\mathrm{TD}$ (see Lemma \ref{lemma:monotonic_eloss}) and the fact that $\varepsilon_\mathrm{inj}$ is constant.
\end{proof}

We find period-one fixed points of the parameterized map $\mathcal{P}_\theta$ in the limit of $\mu\to 0$ by solving Equation~\eqref{eq:nondim_massless_foot_energy_map} for $\varepsilon_\mathrm{TD}^*$, which simplifies to 
\begin{equation}\label{eq:cubic_pre_inj_depth_fxpt}
    a \xi_\mathrm{f^-}^3 + b \xi_\mathrm{f^-}^2 + c \xi_\mathrm{f^-} + d = 0.
\end{equation}

\begin{remark}
    For all physically plausible $\varepsilon_\mathrm{inj}$, $\phi$, and $\kappa_\mathrm{c}$, the discriminant of the cubic polynomial in Equation \eqref{eq:cubic_pre_inj_depth_fxpt} is negative, indicating one real solution for $\xi_\mathrm{f^-}$, which agrees with Proposition \ref{prop:unique_fxpt}.
\end{remark}

Figure~\ref{fig:fxpt_surf_massless_foot_kc=0.1} illustrates the relationship between $\varepsilon_\mathrm{TD}^*$ and parameters $\varepsilon_\mathrm{inj}$, $\phi$, and $\kappa_\mathrm{c}$.
Increasing $\varepsilon_\mathrm{inj}$ while holding all other parameters constant increases $\varepsilon_\mathrm{TD}^*$.
Likewise, decreasing $\kappa_\mathrm{c}$ while holding all other parameters constant also increases $\varepsilon_\mathrm{TD}^*$, because a softer leg reduces ground deformation on impact \cite{lynch2020SLP}.
Each surface has a cusp at $\phi = 1$, reflecting qualitative changes in the dynamics caused by reyielding on extension.
For $\phi \leq 1$, the minimum injected energy for which a gait exists is $\varepsilon_\mathrm{inj,min} = 2$, because the monopod penetrates to twice the weight support depth when impacting from rest ($\varepsilon_\mathrm{TD} = 0$)\footnote{In \cite{lynch2020SLP}, we show that damping in the leg can reduce the penetration depth to approximately the weight support depth and thereby reduce energy lost to ground deformation.}, whereas for $\phi > 1$, $\varepsilon_\mathrm{inj,min}$ increases with $\phi$ and decreases with $\kappa_\mathrm{c}$.
For $\varepsilon_\mathrm{inj} > \varepsilon_\mathrm{inj,min}$ and $\phi > 1$, $\varepsilon_\mathrm{TD}^*$ and efficiency $\eta$ (given in Definition \ref{def:efficiency}) decrease monotonically as $\phi$ grows.

As illustrated in Figure~\ref{fig:const-fp-surf}, parameter values corresponding to a constant $\varepsilon_\mathrm{TD}^*$ lie on a two-dimensional surface in parameter space.
This result has implications for planning, control, and terrain estimation, which we discuss in Section \ref{sec:discussion}.
This constant-$\varepsilon_\mathrm{TD}^*$ surface also has a cusp at $\phi = 1$, again reflecting the qualitative change in dynamics caused by reyielding on extension.
For $\phi \leq 1$, the energy that must be injected per hop to achieve the given fixed point decreases with stiffness ratio and is independent of force ratio.
For $\phi > 1 $, the energy that must be injected per hop to achieve a given fixed point decreases with stiffness ratio and force ratio.

\begin{figure}[t]
    \centering
    \includegraphics[width=\linewidth]{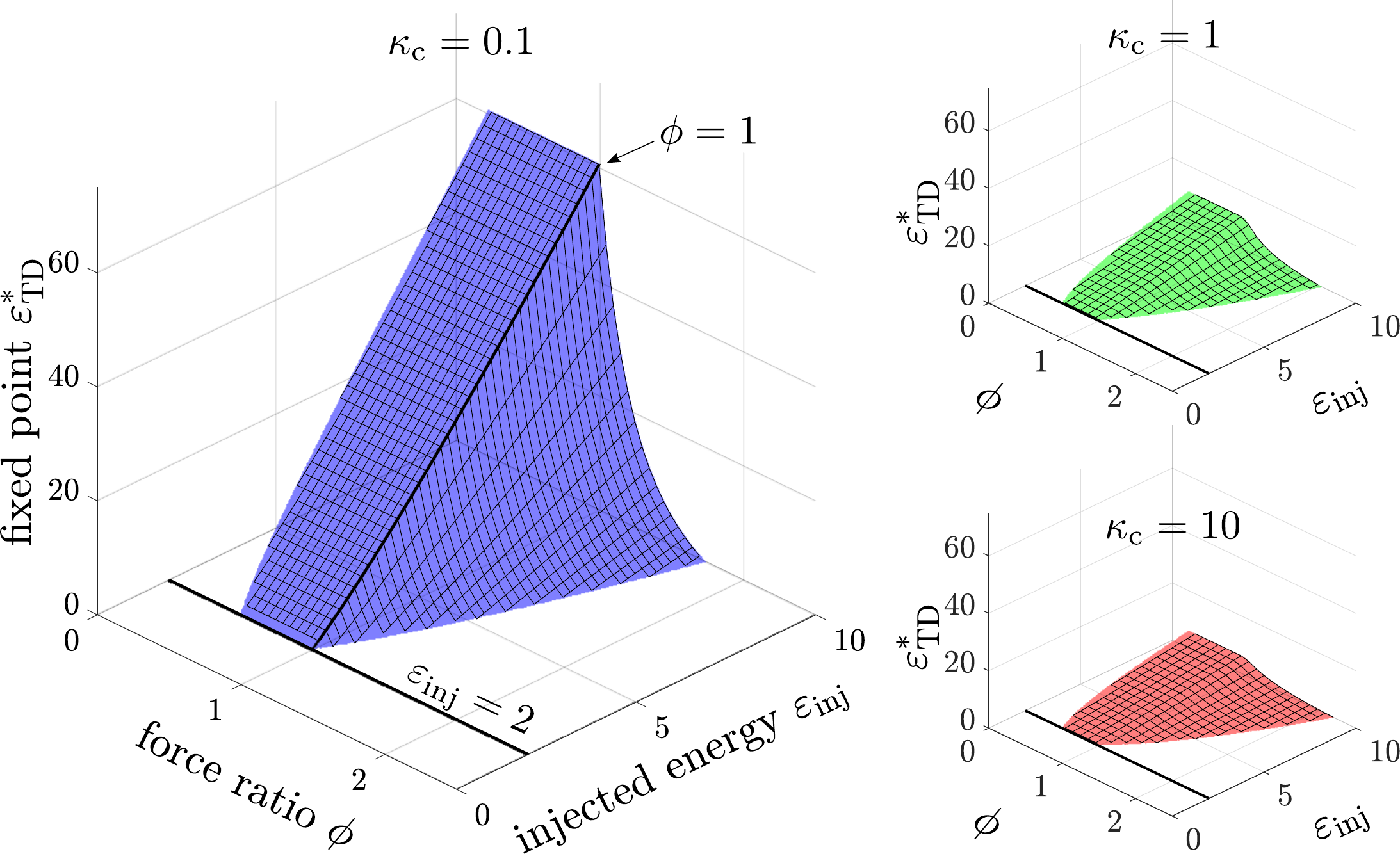}
    \caption{
    Illustration of effects of independent dimensionless parameters on fixed point of hop-to-hop energy map in the limit of negligible foot mass.
    The period-one fixed point energy $\varepsilon_\mathrm{TD}^*$ is plotted on the vertical axis \vs $\varepsilon_\mathrm{inj}$ and $\phi$ along the horizontal axes, for $\kappa_\mathrm{c} = 0.1$ (left), $\kappa_\mathrm{c} = 1$ (top right), and $\kappa_\mathrm{c} = 10$ (bottom right).
    }
    \label{fig:fxpt_surf_massless_foot_kc=0.1}
\end{figure}

\begin{figure*}[t]
    \centering
    \includegraphics[width=0.75\linewidth]{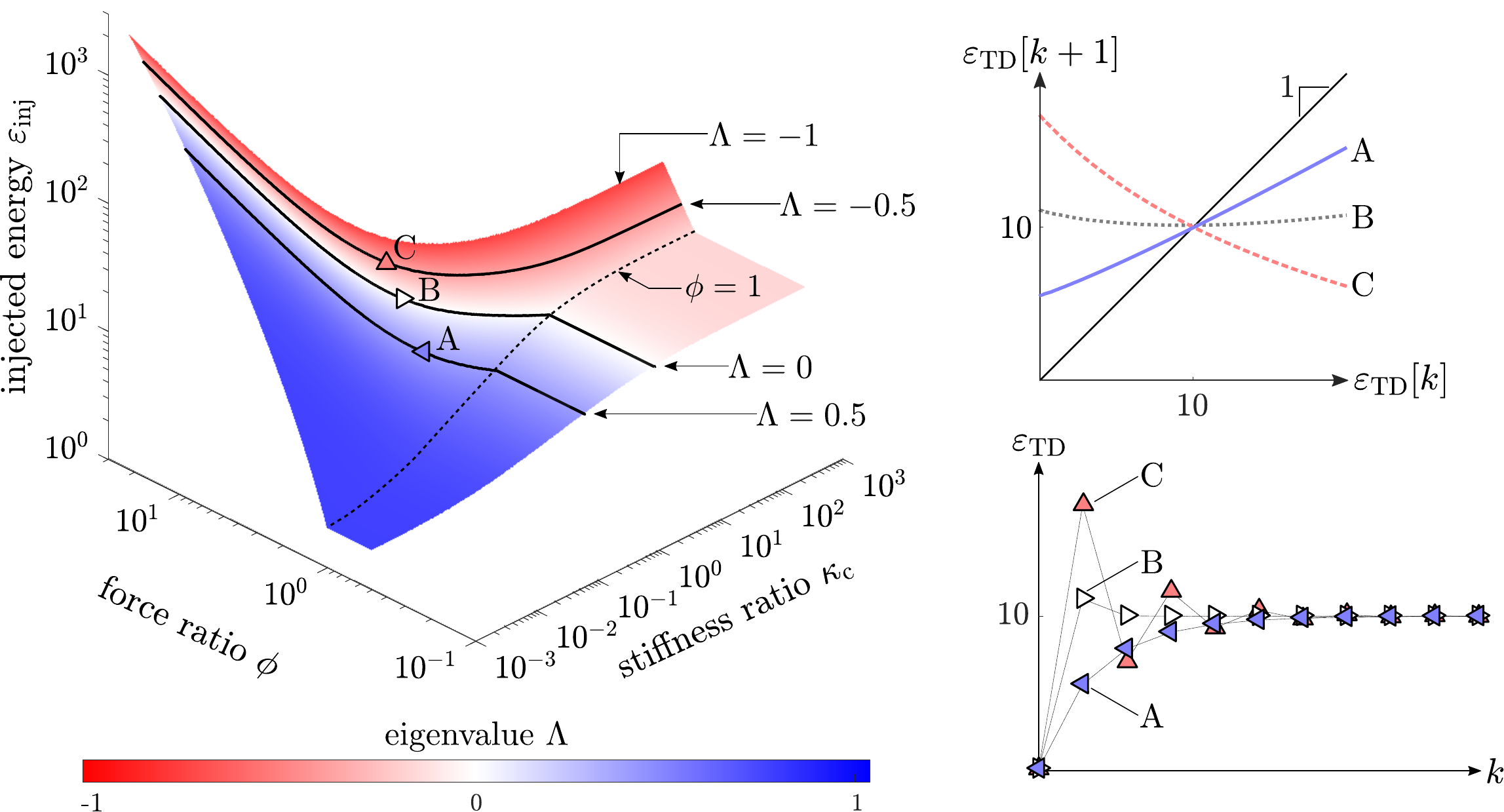}
    \caption{
    \rev{\textbf{Left:}~In the massless-foot limit, a two-dimensional surface in the three-dimensional parameter space maps to a unique period-one fixed point $\varepsilon_\mathrm{TD}^*$ ($\varepsilon_\mathrm{TD}^* = 10$ in this figure).
    The surface is only shown for parameter combinations for which the period-one fixed point is stable.
    On this surface, a one-dimensional curve of parameter values endows the fixed point with a specific eigenvalue $\Lambda$ (red-blue color map).}
    Solid black lines illustrate three constant-eigenvalue curves: $\Lambda = -0.5$, $\Lambda = 0$, and $\Lambda = 0.5$.
    \rev{We sample one point in parameter space from each of these three curves: A ($\varepsilon_\mathrm{inj} = 12.84$, $\phi = 1.52$, $\kappa_\mathrm{c} = 0.1$), B ($\varepsilon_\mathrm{inj} = 27.48$, $\phi = 2.10$, $\kappa_\mathrm{c} = 0.1$), and C ($\varepsilon_\mathrm{inj} = 45.88$, $\phi = 2.70$, $\kappa_\mathrm{c} = 0.1$).}
    \rev{\textbf{Top right:}~The hop-to-hop energy maps corresponding to parameter sets A, B, and C are overlaid atop the identity map, illustrating the same fixed point and different eigenvalues.}
    \rev{\textbf{Bottom right:}~Ten iterates of the hop-to-hop energy maps corresponding to parameter sets A, B, and C are overlaid, showing the same steady-state response and qualitatively different transient responses corresponding to their different eigenvalues.}
    }
    \label{fig:const-fp-surf}
\end{figure*}

\subsubsection{Eigenvalues}\label{sec:eigenvalues_massless}
The fact that $\varepsilon_\mathrm{loss}$ increases monotonically with $\varepsilon_\mathrm{TD}$ and the fact that $\varepsilon_\mathrm{inj}$ is constant restricts the set of possible eigenvalues for the map $\mathcal{P}_\theta$.
\begin{lemma}\label{lemma:map_slope_upper_bound}
    In the limit $\mu\to0$, the eigenvalue $\Lambda$ of the map $\mathcal{P}_\theta$ is less than one for all $\varepsilon_\mathrm{TD} \geq 0$.
\end{lemma}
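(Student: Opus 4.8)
The plan is to differentiate the closed-form massless-foot map in Equation~\eqref{eq:nondim_massless_foot_energy_map} and bound its slope directly using the monotonicity already established in Lemma~\ref{lemma:monotonic_eloss}. First I would note that in the massless-foot limit the controller prescribes $\varepsilon_\mathrm{inj}$ as one of the control parameters in $\theta$: given compression-mode spring parameters, Equations~\eqref{eq:nondim_leg_spring_update} solve for $(\kappa_\mathrm{e},\lambda_\mathrm{e})$ on each hop so that the injected energy equals exactly the commanded constant, independent of the compression-phase state $\xi_\mathrm{b}(\tau_\mathrm{CE})$, $\xi_\mathrm{f}(\tau_\mathrm{CE})$ and hence of $\varepsilon_\mathrm{TD}$. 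Consequently, differentiating Equation~\eqref{eq:nondim_massless_foot_energy_map} gives
\[
\frac{\mathrm{d}\mathcal{P}_\theta}{\mathrm{d}\varepsilon_\mathrm{TD}} = 1 - \frac{\mathrm{d}\varepsilon_\mathrm{loss}}{\mathrm{d}\varepsilon_\mathrm{TD}},
\qquad
\varepsilon_\mathrm{loss} = \varepsilon_\mathrm{inj}\left(\frac{a\,\xi_\mathrm{f^-}^3 + c\,\xi_\mathrm{f^-}}{b\,\xi_\mathrm{f^-}^2 + d}\right)^{2}.
\]

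Second, I would invoke Lemma~\ref{lemma:monotonic_eloss}, whose proof reduces the sign of $\mathrm{d}\varepsilon_\mathrm{loss}/\mathrm{d}\varepsilon_\mathrm{TD}$ to the sign of a quadratic in $x=\xi_\mathrm{f^-}^2$ that is strictly positive for all $x\ge 0$, yielding $\mathrm{d}\varepsilon_\mathrm{loss}/\mathrm{d}\varepsilon_\mathrm{TD} > 0$ for every $\varepsilon_\mathrm{TD}\ge 0$. Substituting into the display above immediately gives $\mathrm{d}\mathcal{P}_\theta/\mathrm{d}\varepsilon_\mathrm{TD} < 1$ for all $\varepsilon_\mathrm{TD}\ge 0$; evaluating at the unique fixed point $\varepsilon_\mathrm{TD}^*$ guaranteed by Proposition~\ref{prop:unique_fxpt} then gives the eigenvalue bound $\Lambda < 1$, which is the assertion of the lemma.

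The only point requiring care — and it is bookkeeping rather than a genuine obstacle — is confirming that $\varepsilon_\mathrm{inj}$ carries no hidden $\varepsilon_\mathrm{TD}$-dependence through Equation~\eqref{eq:injected_energy_dimensional}; once the controller construction in Equations~\eqref{eq:nondim_leg_spring_update} is acknowledged, the lemma is a one-line consequence of Lemma~\ref{lemma:monotonic_eloss} and no new estimate is needed. As a fully self-contained alternative, one could instead differentiate the rational expression for $\varepsilon_\mathrm{loss}$ in Equation~\eqref{eq:nondim_massless_foot_energy_map} directly and re-derive the positivity of the resulting quadratic numerator, but this merely reproves Lemma~\ref{lemma:monotonic_eloss}, so I would not pursue it.
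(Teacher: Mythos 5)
Your proposal is correct and follows essentially the same route as the paper: the paper's proof likewise invokes Lemma~\ref{lemma:monotonic_eloss} to get $\mathrm{d}\varepsilon_\mathrm{loss}/\mathrm{d}\varepsilon_\mathrm{TD} > 0$ and concludes $\Lambda < 1$ directly from the eigenvalue definition in Equation~\eqref{eq:eigenvalue_def}. Your added remark confirming that $\varepsilon_\mathrm{inj}$ is a prescribed constant with no hidden $\varepsilon_\mathrm{TD}$-dependence is a harmless piece of bookkeeping that the paper leaves implicit.
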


\begin{proof}
    Because $\varepsilon_\mathrm{loss}$ increases monotonically with $\varepsilon_\mathrm{TD}$, $\mathrm{d}\varepsilon_\mathrm{loss}/\mathrm{d}\varepsilon_\mathrm{TD} > 0$ for all $\varepsilon_\mathrm{TD} \geq 0$.
    Therefore, from the definition of the eigenvalue given in Equation \eqref{eq:eigenvalue_def}, $\Lambda < 1$ for all $\varepsilon_\mathrm{TD} \geq 0$.
\end{proof}

Furthermore, because $\mathcal{P}_\theta$ has a closed-form representation in the limit $\mu \to 0$, the eigenvalue $\Lambda$ also has a closed-form representation in this limit.
This feature enables us to quantify the effect of different parameter values on the eigenvalue associated with a given fixed point.
In Figure \ref{fig:const-fp-surf}, we use a red-blue colormap to visualize the eigenvalue $\Lambda$ as a function of $\varepsilon_\mathrm{inj}$, $\phi$, and $\kappa_\mathrm{c}$, for $\varepsilon_\mathrm{TD}^* = 10$.
For a given fixed point, requiring a particular eigenvalue introduces a second constraint, resulting in curves in this three-dimensional parameter space along which both $\varepsilon_\mathrm{TD}^*$ and $\Lambda$ are constant.
Three such curves are plotted in Figure \ref{fig:const-fp-surf}: $\Lambda = 0.5$, which results in local monotonic convergence to $\varepsilon_\mathrm{TD}^*$, $\Lambda = 0$, which results in local deadbeat convergence, and $\Lambda = -0.5$, which results in local oscillatory convergence.
In Section~\ref{sec:discussion}, we discuss how the relationship between model parameters, fixed points, and eigenvalues can be exploited for planning and controlling gaits and for estimating terrain conditions (\ie ground stiffness).

\subsubsection{Basins of Attraction}\label{sec:BOA_massless}
We characterize conditions that guarantee global convergence to a fixed point and, when global convergence is not guaranteed, we characterize the structure of the finite \ac{BOA} associated with the fixed point.

\begin{proposition}\label{prop:global_BOA_if_minP_positive}
    In the limit $\mu\to 0$, if the fixed point $\varepsilon_\mathrm{TD}^*$ of the map $\mathcal{P}_\theta$ is linearly stable, \ie $|\Lambda| < 1$, and the minimum of the map is positive, \ie $\min \mathcal{P}_\theta > 0$, then $\varepsilon_\mathrm{TD}^*$ is globally stable, \ie its \ac{BOA} is the interval $\left[0,\infty\right)$.
\end{proposition}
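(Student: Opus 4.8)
The plan is to combine the monotone ``restoring'' structure of $\mathcal{P}_\theta$ (from Lemma~\ref{lemma:monotonic_eloss} and Lemma~\ref{lemma:map_slope_upper_bound}) with the global shape of the closed-form map~\eqref{eq:nondim_massless_foot_energy_map} to trap every orbit in a compact interval about $\varepsilon_\mathrm{TD}^*$, and then to rule out all periodic behavior other than the fixed point.

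First I would record the sign structure implied by Lemma~\ref{lemma:map_slope_upper_bound}: since $\mathrm{d}\mathcal{P}_\theta/\mathrm{d}\varepsilon_\mathrm{TD} < 1$ for all $\varepsilon_\mathrm{TD}\ge 0$, the function $g(\varepsilon_\mathrm{TD}) = \mathcal{P}_\theta(\varepsilon_\mathrm{TD}) - \varepsilon_\mathrm{TD}$ is strictly decreasing and vanishes only at the unique fixed point $\varepsilon_\mathrm{TD}^*$ (Proposition~\ref{prop:unique_fxpt}); hence $\mathcal{P}_\theta(\varepsilon_\mathrm{TD}) > \varepsilon_\mathrm{TD}$ on $[0,\varepsilon_\mathrm{TD}^*)$ and $\mathcal{P}_\theta(\varepsilon_\mathrm{TD}) < \varepsilon_\mathrm{TD}$ on $(\varepsilon_\mathrm{TD}^*,\infty)$, so every orbit is pushed toward $\varepsilon_\mathrm{TD}^*$ and the only issue is overshoot. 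Next I would use $\min\mathcal{P}_\theta > 0$: every forward iterate stays in $(0,\infty)$, so the dynamics are well defined, and together with the closed form~\eqref{eq:nondim_massless_foot_energy_map} (which, one checks, is smooth in $\varepsilon_\mathrm{TD}$ and decreasing-then-increasing with a single interior minimum at some $\varepsilon_c$, the minimum value being $\min\mathcal{P}_\theta$) this yields a compact forward-invariant interval $J\ni\varepsilon_\mathrm{TD}^*$ that every orbit enters in finitely many steps: orbits started above $\varepsilon_c$ decrease monotonically and either converge to $\varepsilon_\mathrm{TD}^*$ outright or drop into $J$, while points below $\varepsilon_c$ are mapped up to at least $\min\mathcal{P}_\theta$.

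Then I would split on the sign of the eigenvalue $\Lambda$. If $\Lambda\ge 0$ the fixed point lies on the increasing branch $[\varepsilon_c,\infty)$, which is itself forward invariant and on which $\mathcal{P}_\theta$ is monotone increasing, so orbits become monotone and converge to $\varepsilon_\mathrm{TD}^*$. If $-1<\Lambda<0$ the fixed point lies on the decreasing branch; here I would first show that $\mathcal{P}_\theta$ has no period-two orbit, after which $J$ can be taken inside the decreasing branch, $\mathcal{P}_\theta|_J$ is a continuous decreasing self-map, $\mathcal{P}_\theta^2|_J$ is increasing with $\varepsilon_\mathrm{TD}^*$ as its only fixed point, and the even- and odd-indexed subsequences of any orbit converge monotonically to $\varepsilon_\mathrm{TD}^*$, giving convergence of the orbit itself. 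These two cases exhaust $|\Lambda|<1$ (the borderline $\Lambda=0$ is the case $\varepsilon_c=\varepsilon_\mathrm{TD}^*$, subsumed by the first case or obtained by a limiting argument), so $\mathrm{BOA}=[0,\infty)$.

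The main obstacle is the no-period-two claim. I expect to prove it by extending Koditschek and B\"uhler's hard-ground argument~\cite{koditschek1991hopping}: compute the Schwarzian derivative of the rational map~\eqref{eq:nondim_massless_foot_energy_map} and show it is negative on the relevant domain; then $\mathcal{P}_\theta^2$ also has negative Schwarzian derivative, so $(\mathcal{P}_\theta^2)'$ admits no positive local minimum. A period-two orbit $\{p,q\}$ with (necessarily, by the sign structure above) $p<\varepsilon_\mathrm{TD}^*<q$ would make $\mathcal{P}_\theta^2-\mathrm{id}$ vanish at $p$, $\varepsilon_\mathrm{TD}^*$, and $q$, forcing $(\mathcal{P}_\theta^2)'=1$ at two interior points straddling $\varepsilon_\mathrm{TD}^*$ while $(\mathcal{P}_\theta^2)'(\varepsilon_\mathrm{TD}^*)=\Lambda^2\in(0,1)$ --- a positive local minimum of $(\mathcal{P}_\theta^2)'$, a contradiction. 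The genuinely technical step is verifying $S\mathcal{P}_\theta<0$ from the explicit coefficients $a,b,c,d$ and confirming the asserted global shape and smoothness of $\mathcal{P}_\theta$; the remainder is bookkeeping with the monotone structure already in hand.
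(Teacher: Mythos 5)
Your argument up to the case split parallels the paper's own proof (given in the Appendix): both rely on the slope bound of Lemma~\ref{lemma:map_slope_upper_bound} to force the graph of $\mathcal{P}_\theta$ to cross the diagonal exactly once, from above to below, and on the unimodal ``decreasing-then-increasing with a single interior minimum'' shape, which the paper derives from the sign of $\mathrm{d}^2\varepsilon_\mathrm{loss}/\mathrm{d}\varepsilon_\mathrm{TD}^2$ rather than leaving as ``one checks.'' Your $\Lambda\ge 0$ case is essentially the paper's first two bullets (and its Proposition~\ref{prop:pos_ev_implies_global_boa}) and is sound. The divergence is in the hard case $\varepsilon_\mathrm{TD,c}>\varepsilon_\mathrm{TD}^*$: the paper runs a direct Devaney-style containment argument---two iterates map $[\varepsilon_\mathrm{TD}^*,\hat{\varepsilon}_\mathrm{TD}]$ into $[\varepsilon_\mathrm{TD}^*,\varepsilon_\mathrm{TD,c}]$, where $\mathcal{P}_\theta^2$ is increasing with $\varepsilon_\mathrm{TD}^*$ as its only fixed point, and every other initial condition enters that interval in finitely many steps---whereas you route the crux through a Singer-style negative-Schwarzian exclusion of period-two orbits.

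That routing is where the genuine gap lies. First, your entire $-1<\Lambda<0$ case hangs on $S\mathcal{P}_\theta<0$, which you defer as ``the genuinely technical step'' and never establish. The map is not a polynomial or rational function of $\varepsilon_\mathrm{TD}$: it is $\varepsilon_\mathrm{TD}+\varepsilon_\mathrm{inj}-\varepsilon_\mathrm{inj}\bigl((a\xi_\mathrm{f^-}^3+c\xi_\mathrm{f^-})/(b\xi_\mathrm{f^-}^2+d)\bigr)^2$ with $\xi_\mathrm{f^-}$ affine in $\sqrt{1+2\kappa_\mathrm{c}(1+\kappa_\mathrm{c})^{-1}\varepsilon_\mathrm{TD}}$, so negativity of its Schwarzian is far from automatic; indeed the paper explicitly states (Discussion, Robustness) that its basin analysis works ``without reliance on S-unimodality,'' which you should read as a warning that this property is not available, or at least not cheaply. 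Without it, the no-period-two claim---and hence the whole case---is unsupported. Second, your assertion that the trapping interval $J$ ``can be taken inside the decreasing branch'' does not follow from anything you have shown (and does not follow from the absence of period-two orbits, as your ``after which'' suggests): since $\mathcal{P}_\theta(0)$ and $\mathcal{P}_\theta(\min\mathcal{P}_\theta)$ can exceed $\varepsilon_\mathrm{TD,c}$, orbits below the fixed point generically overshoot onto the increasing branch, so $\mathcal{P}_\theta$ restricted to an interval that all orbits must enter need not be decreasing and $\mathcal{P}_\theta^2$ need not be increasing there. If you insist on the period-two route, Coppel's theorem (a continuous self-map of a compact interval with no period-two points has every orbit converging to a fixed point) would let you drop the ``inside the decreasing branch'' requirement, but you would still owe the period-two exclusion---which is exactly the step the paper's two-iterate containment argument is built to avoid.
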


\begin{proof}
While the proof is similar to Devaney's stability proof for unimodal maps on the unit interval (see Chapter 5.1 of \cite{devaney2021introduction}), the differences are substantial enough that we include the full proof \rev{in the Appendix} for completeness.
\end{proof}

\begin{remark}
    The condition $\min \mathcal{P}_\theta > 0$ distinguishes this proposed set of global stability criteria from that proposed by Koditschek and B{\"u}hler (Theorems 1 and 2 of \cite{koditschek1991hopping}), which addresses functions that map the interval $\left[0,\infty\right)$ into itself.
    On deformable terrain, if the parameters $\theta$ are such that $\min \mathcal{P}_\theta < 0$, then $\mathcal{P}_\theta$ does not map the interval $\left[0,\infty\right)$ into itself.
    \strout{We discuss this case in detail in Section \ref{sec:BOA_massless}.}
\end{remark}

\begin{proposition}\label{prop:pos_ev_implies_global_boa}
    If the map $\mathcal{P}_\theta$ has a nonnegative eigenvalue, then $\varepsilon_\mathrm{TD}^*$ is globally stable.
\end{proposition}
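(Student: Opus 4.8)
The plan is to reduce the claim to Proposition~\ref{prop:global_BOA_if_minP_positive}. Lemma~\ref{lemma:map_slope_upper_bound} already gives $\Lambda<1$ in the massless-foot limit, so the hypothesis $\Lambda\ge 0$ yields $|\Lambda|<1$; the fixed point $\varepsilon_\mathrm{TD}^*$, unique by Proposition~\ref{prop:unique_fxpt}, is then linearly stable, and it remains only to verify $\min\mathcal{P}_\theta>0$. Here I would first extract two consequences of Lemma~\ref{lemma:monotonic_eloss}. Because $\varepsilon_\mathrm{loss}$ is strictly increasing and the fixed-point condition reads $\varepsilon_\mathrm{loss}(\varepsilon_\mathrm{TD}^*)=\varepsilon_\mathrm{inj}$, we have $\varepsilon_\mathrm{loss}(\varepsilon_\mathrm{TD})\le\varepsilon_\mathrm{inj}$ on $[0,\varepsilon_\mathrm{TD}^*]$, hence $\mathcal{P}_\theta(\varepsilon_\mathrm{TD})=\varepsilon_\mathrm{TD}+\varepsilon_\mathrm{inj}-\varepsilon_\mathrm{loss}(\varepsilon_\mathrm{TD})\ge\varepsilon_\mathrm{TD}\ge 0$ there; and since $\mathcal{P}_\theta(\varepsilon_\mathrm{TD})-\varepsilon_\mathrm{TD}$ has derivative $-\mathrm{d}\varepsilon_\mathrm{loss}/\mathrm{d}\varepsilon_\mathrm{TD}<0$ and vanishes at $\varepsilon_\mathrm{TD}^*>0$, we get $\mathcal{P}_\theta(0)>0$. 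So the only way $\min\mathcal{P}_\theta$ could fail to be positive is for $\mathcal{P}_\theta$ to dip to $0$ or below somewhere in $(\varepsilon_\mathrm{TD}^*,\infty)$, which would force $\mathcal{P}_\theta'(\varepsilon_\mathrm{TD})=1-\mathrm{d}\varepsilon_\mathrm{loss}/\mathrm{d}\varepsilon_\mathrm{TD}<0$ at some point to the right of $\varepsilon_\mathrm{TD}^*$.

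The crux is to rule this out: I must show $\mathrm{d}\varepsilon_\mathrm{loss}/\mathrm{d}\varepsilon_\mathrm{TD}\le 1$ on all of $[\varepsilon_\mathrm{TD}^*,\infty)$ given that it holds at $\varepsilon_\mathrm{TD}^*$ (which is exactly $\Lambda\ge 0$); this makes $\mathcal{P}_\theta$ nondecreasing on $[\varepsilon_\mathrm{TD}^*,\infty)$, so $\mathcal{P}_\theta\ge\varepsilon_\mathrm{TD}^*>0$ there and, with the bounds above, $\min\mathcal{P}_\theta>0$. To prove this monotone-sign statement I would use the closed-form massless-foot map from the Appendix. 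Writing $\zeta=-\xi_\mathrm{f^-}(\varepsilon_\mathrm{TD})=1+\sqrt{1+2\kappa_\mathrm{c}\varepsilon_\mathrm{TD}/(1+\kappa_\mathrm{c})}\ge 2$, one has $\varepsilon_\mathrm{loss}=\varepsilon_\mathrm{inj}\tilde R(\zeta)^2$ with $\tilde R(\zeta)=\zeta(a\zeta^2+c)/(b\zeta^2+d)$ strictly increasing in $\zeta$ (equivalently, Lemma~\ref{lemma:monotonic_eloss}), and the fixed point is the unique $\zeta^*$ with $\tilde R(\zeta^*)=1$. A short computation turns $\mathcal{P}_\theta'(\varepsilon_\mathrm{TD})\ge 0$ into $\tilde R(\zeta)\,\tilde R'(\zeta)\le(\zeta-1)/(\varepsilon_\mathrm{inj}\beta)$ with $\beta=2\kappa_\mathrm{c}/(1+\kappa_\mathrm{c})$, i.e.\ into the positivity of a rational function of $\zeta$ whose numerator is a polynomial with coefficients built from $a,b,c,d$; the goal is to show this numerator has no root in $(\zeta^*,\infty)$, using the defining relation $\zeta^*(a\zeta^{*2}+c)=b\zeta^{*2}+d$. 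A natural route is to note that $\tilde R\tilde R'=\tfrac12(\tilde R^2)'$ has leading term $(a/b)^2\zeta$, so the right-hand side (slope $1/(\varepsilon_\mathrm{inj}\beta)$) must dominate it for large $\zeta$; I expect the inequality at $\zeta^*$ to force exactly that, and the gap between the two sides to be single-signed on $[\zeta^*,\infty)$ by a convexity/unimodality argument for $\tilde R\tilde R'$.

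I expect this sign analysis to be the main obstacle: unlike the rigid-ground setting of Koditschek and B{\"u}hler, here $\mathcal{P}_\theta$ need not map $[0,\infty)$ into itself, so the argument must genuinely exploit the coefficient structure of the massless-foot map rather than only the qualitative facts (monotonicity of $\varepsilon_\mathrm{loss}$, constancy of $\varepsilon_\mathrm{inj}$) that sufficed for Lemma~\ref{lemma:monotonic_eloss} and Proposition~\ref{prop:unique_fxpt}. Everything else is routine: the reduction to Proposition~\ref{prop:global_BOA_if_minP_positive}, the bound $\mathcal{P}_\theta(\varepsilon_\mathrm{TD})\ge\varepsilon_\mathrm{TD}$ on $[0,\varepsilon_\mathrm{TD}^*]$, and the degenerate case $\varepsilon_\mathrm{TD}^*=0$ (where $\min\mathcal{P}_\theta=0$, so instead of invoking Proposition~\ref{prop:global_BOA_if_minP_positive} one argues global convergence directly from monotonicity of $\mathcal{P}_\theta$ on $[0,\infty)$, with the iterates decreasing monotonically to $0$).
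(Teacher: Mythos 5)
Your overall architecture matches the paper's: reduce to Proposition~\ref{prop:global_BOA_if_minP_positive} by checking $|\Lambda|<1$ (via Lemma~\ref{lemma:map_slope_upper_bound}) and $\min\mathcal{P}_\theta>0$, note that $\mathcal{P}_\theta\ge\varepsilon_\mathrm{TD}$ on $[0,\varepsilon_\mathrm{TD}^*]$ because $\varepsilon_\mathrm{loss}$ is increasing with $\varepsilon_\mathrm{loss}(\varepsilon_\mathrm{TD}^*)=\varepsilon_\mathrm{inj}$, and then rule out a dip below zero to the right of the fixed point. But the step you yourself flag as the crux --- propagating $\mathrm{d}\varepsilon_\mathrm{loss}/\mathrm{d}\varepsilon_\mathrm{TD}\le 1$ from $\varepsilon_\mathrm{TD}^*$ to all of $[\varepsilon_\mathrm{TD}^*,\infty)$ --- is left at the level of ``I expect,'' with a proposed but uncompleted analysis of the sign of a rational function built from $\tilde R\,\tilde R'$ and the coefficients $a,b,c,d$. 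As written, the proof is not closed: everything hinges on that monotone-sign claim, and a conjecture about unimodality of $\tilde R\tilde R'$ is not a proof.

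The irony is that the missing step is already available, essentially for free, from machinery you have agreed to invoke. The paper's proof of Proposition~\ref{prop:global_BOA_if_minP_positive} (in the Appendix) establishes that $\mathrm{d}^2\varepsilon_\mathrm{loss}/\mathrm{d}\varepsilon_\mathrm{TD}^2<0$ for all $\varepsilon_\mathrm{TD}>0$, i.e.\ $\varepsilon_\mathrm{loss}$ is concave, so $\mathcal{P}_\theta$ is convex with at most one critical point, necessarily a local minimum. Concavity of $\varepsilon_\mathrm{loss}$ means $\mathrm{d}\varepsilon_\mathrm{loss}/\mathrm{d}\varepsilon_\mathrm{TD}$ is decreasing, so $\Lambda\ge 0$ (i.e.\ $\mathrm{d}\varepsilon_\mathrm{loss}/\mathrm{d}\varepsilon_\mathrm{TD}\le 1$ at $\varepsilon_\mathrm{TD}^*$) immediately gives $\mathrm{d}\varepsilon_\mathrm{loss}/\mathrm{d}\varepsilon_\mathrm{TD}\le 1$, hence $\mathcal{P}_\theta'\ge 0$, on all of $[\varepsilon_\mathrm{TD}^*,\infty)$; equivalently, the unique critical point of $\mathcal{P}_\theta$ cannot lie to the right of $\varepsilon_\mathrm{TD}^*$, which is exactly how the paper phrases it. That single observation replaces your entire proposed coefficient-structure computation, and combined with your (correct) bounds $\mathcal{P}_\theta\ge\varepsilon_\mathrm{TD}$ on $[0,\varepsilon_\mathrm{TD}^*]$ and $\mathcal{P}_\theta(0)>0$ it yields $\min\mathcal{P}_\theta>0$. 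Your treatment of the degenerate case $\varepsilon_\mathrm{TD}^*=0$ is a reasonable addition that the paper does not address, but the main deliverable --- the inequality you defer --- must be supplied, and the concavity route is the one the paper intends.
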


\noindent\textit{Sketch of proof.}
From Lemma \ref{lemma:map_slope_upper_bound}, if $\Lambda$ is nonnegative, then it must be less than one, so if the map has a nonnegative eigenvalue, $\varepsilon_\mathrm{TD}^*$ is therefore linearly stable.
Moreover, if $\mathcal{P}_\theta$ has a nonnegative eigenvalue, at most one local minimum, and no local maxima (as shown in the proof for Proposition \ref{prop:global_BOA_if_minP_positive}), then $\mathcal{P}_\theta$ cannot have a local minimum at some $\varepsilon_\mathrm{TD}$ greater than the fixed point $\varepsilon_\mathrm{TD}^*$.
Therefore, $\min \mathcal{P}_\theta > 0$, and therefore the \ac{BOA} of $\varepsilon_\mathrm{TD}^*$ is the interval $\left[0,\infty\right)$.

\begin{remark}
    This sufficient condition for global stability on soft ground is more restrictive than that proposed by Koditschek and B{\"u}hler for rigid ground, who find the \ac{BOA} of a monopod hopping on rigid ground includes all initial conditions except for a set of measure zero, provided $|\Lambda| < 1$ (Theorems 1 and 2 of \cite{koditschek1991hopping}).
\end{remark}

\begin{theorem}\label{theorem:global_boa}
    In the limit of $\mu\to0$, the \ac{BOA} of $\varepsilon_\mathrm{TD}^*$ is the entire interval $\left[0,\infty\right)$ if $\Lambda \geq 0$, or if $\min \mathcal{P}_\theta > 0~\mathrm{and}~|\Lambda| < 1$.
\end{theorem}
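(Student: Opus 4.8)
The plan is to recognize that Theorem~\ref{theorem:global_boa} simply collects the two sufficient conditions for global stability already established just above it, so the argument reduces to a short case split once the fixed point is known to be well-defined. First I would invoke Proposition~\ref{prop:unique_fxpt} to guarantee that, in the limit $\mu\to0$, the map $\mathcal{P}_\theta$ has a unique fixed point $\varepsilon_\mathrm{TD}^*$, so that ``the basin of attraction of $\varepsilon_\mathrm{TD}^*$'' refers to an unambiguous object, and then split on which of the two hypotheses holds.

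In the case $\Lambda \geq 0$, Lemma~\ref{lemma:map_slope_upper_bound} also gives $\Lambda < 1$, so $\varepsilon_\mathrm{TD}^*$ is linearly stable, and Proposition~\ref{prop:pos_ev_implies_global_boa} applies verbatim, yielding that the basin of attraction is the interval $\left[0,\infty\right)$. In the case $\min\mathcal{P}_\theta > 0$ and $|\Lambda| < 1$, the hypotheses are exactly those of Proposition~\ref{prop:global_BOA_if_minP_positive}, whose conclusion is again that the basin of attraction is $\left[0,\infty\right)$. Since both cases deliver the same conclusion, the theorem follows, and no further argument is needed at this level.

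I do not expect a genuine obstacle at the level of the theorem statement itself; the substantive work sits inside Proposition~\ref{prop:global_BOA_if_minP_positive}, whose proof is carried out in the Appendix. The hard part there is adapting Devaney's stability argument for unimodal maps on the unit interval (Chapter~5.1 of~\cite{devaney2021introduction}) to the half-line $\left[0,\infty\right)$, and it rests on structural properties of the massless-foot map---that $\mathcal{P}_\theta$ in the form~\eqref{eq:nondim_massless_foot_energy_map} has at most one local minimum and no local maximum---which themselves follow from the closed-form expression together with the monotonicity of $\varepsilon_\mathrm{loss}$ in $\varepsilon_\mathrm{TD}$ established in Lemma~\ref{lemma:monotonic_eloss}. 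Those same structural facts underpin the sketched proof of Proposition~\ref{prop:pos_ev_implies_global_boa}, so once they are in place the present theorem is immediate.
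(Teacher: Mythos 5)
Your proposal is correct and matches the paper's own proof, which likewise derives the theorem as an immediate consequence of Propositions~\ref{prop:global_BOA_if_minP_positive} and~\ref{prop:pos_ev_implies_global_boa} via the same two-case split. The extra observations you add (uniqueness of the fixed point from Proposition~\ref{prop:unique_fxpt} and $\Lambda<1$ from Lemma~\ref{lemma:map_slope_upper_bound}) are consistent with, and implicit in, the paper's argument.
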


\begin{proof}
    The proof follows automatically from Lemma~\ref{prop:global_BOA_if_minP_positive} and Lemma~\ref{prop:pos_ev_implies_global_boa}.
\end{proof}
Figure \ref{fig:BOA_boundary_parameter_space} illustrates the region in parameter space with a global \ac{BOA} for the case $\varepsilon_\mathrm{TD}^* = 10$.

The \ac{BOA} of $\varepsilon_\mathrm{TD}^*$ changes qualitatively when $\mathcal{P}_\theta < 0$ over some interval $A_0 \subset \left[0,\infty\right)$, which corresponds to the case where the robot does not have enough energy to jump out of the hole it made in the ground.
In this case, the \ac{BOA} of $\varepsilon_\mathrm{TD}^*$ takes on a banded structure, alternating with a banded \ac{BOA} of the failed-hop attractor $A_0$.
In Figure \ref{fig:banded_BOA_example}, shaded intervals represent the \ac{BOA} for this failure mode while unshaded intervals represent the \ac{BOA} for $\varepsilon_\mathrm{TD}^*$.
An initial condition in any unshaded interval eventually converges to $\varepsilon_\mathrm{TD}^*$, whereas an initial condition in any shaded interval eventually converges to somewhere in the interval $A_0$, indicating failure to hop (in Figure \ref{fig:banded_BOA_example}, $A_0\approx\left[5,20\right]$).
As a corollary of \strout{Theorem}\rev{Proposition}~\ref{prop:pos_ev_implies_global_boa}, a necessary condition for a banded basin structure is $-1 < \Lambda < 0$, as illustrated in Figure \ref{fig:BOA_boundary_parameter_space}.
Consequently, it may be possible in practice to avoid this failure mode by designing controllers that ensure $\mathcal{P}_\theta$ has an eigenvalue between zero and positive one.
\begin{figure}[t]
    \centering
    \includegraphics[width=\linewidth]{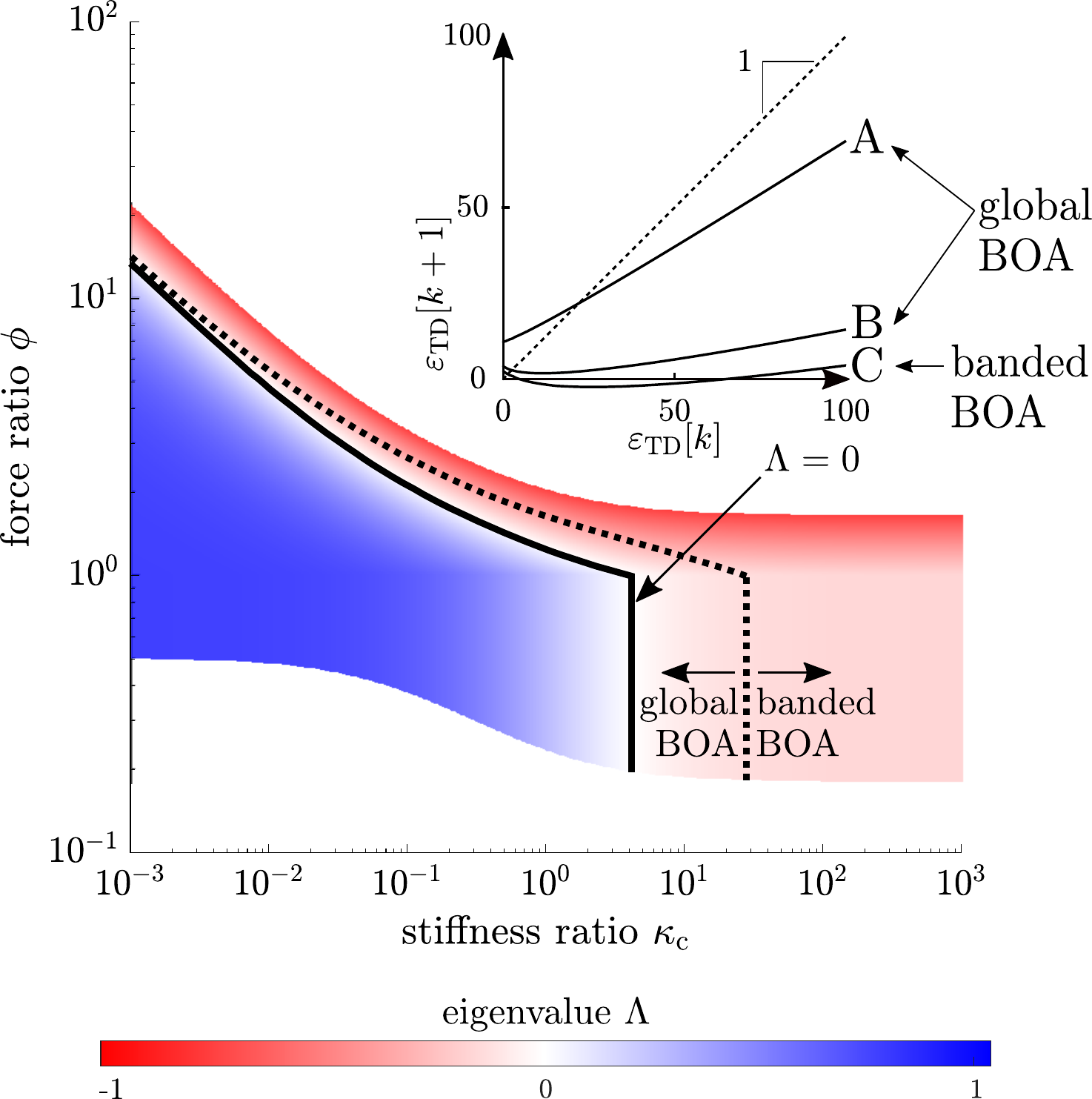}
    \caption{Dotted line indicating boundary in $\phi$---$\kappa_\mathrm{c}$ plane (for $\varepsilon_\mathrm{TD}^*=10$) that distinguishes between a global \ac{BOA} and banded \ac{BOA}.
    As $\kappa_\mathrm{c}\to 0$, this parameter-space boundary asymptotically approaches the $\Lambda = 0$ curve.
    Example maps with global \ac{BOA} and banded \ac{BOA} are sketched at top right, with $\varepsilon_\mathrm{inj,A} = 15$, $\kappa_\mathrm{c,A} = 0.2$, $\phi_\mathrm{A} = 1.25$;
    $\varepsilon_\mathrm{inj,B} = 10$, $\kappa_\mathrm{c,B} = 0.5$, $\phi_\mathrm{B} = 1.5$;
    $\varepsilon_\mathrm{inj,C} = 15$, $\kappa_\mathrm{c,C} = 0.2$, $\phi_\mathrm{C} = 2$.}
    \label{fig:BOA_boundary_parameter_space}
\end{figure}

\begin{figure}[t]
    \centering
    \includegraphics[width=\linewidth]{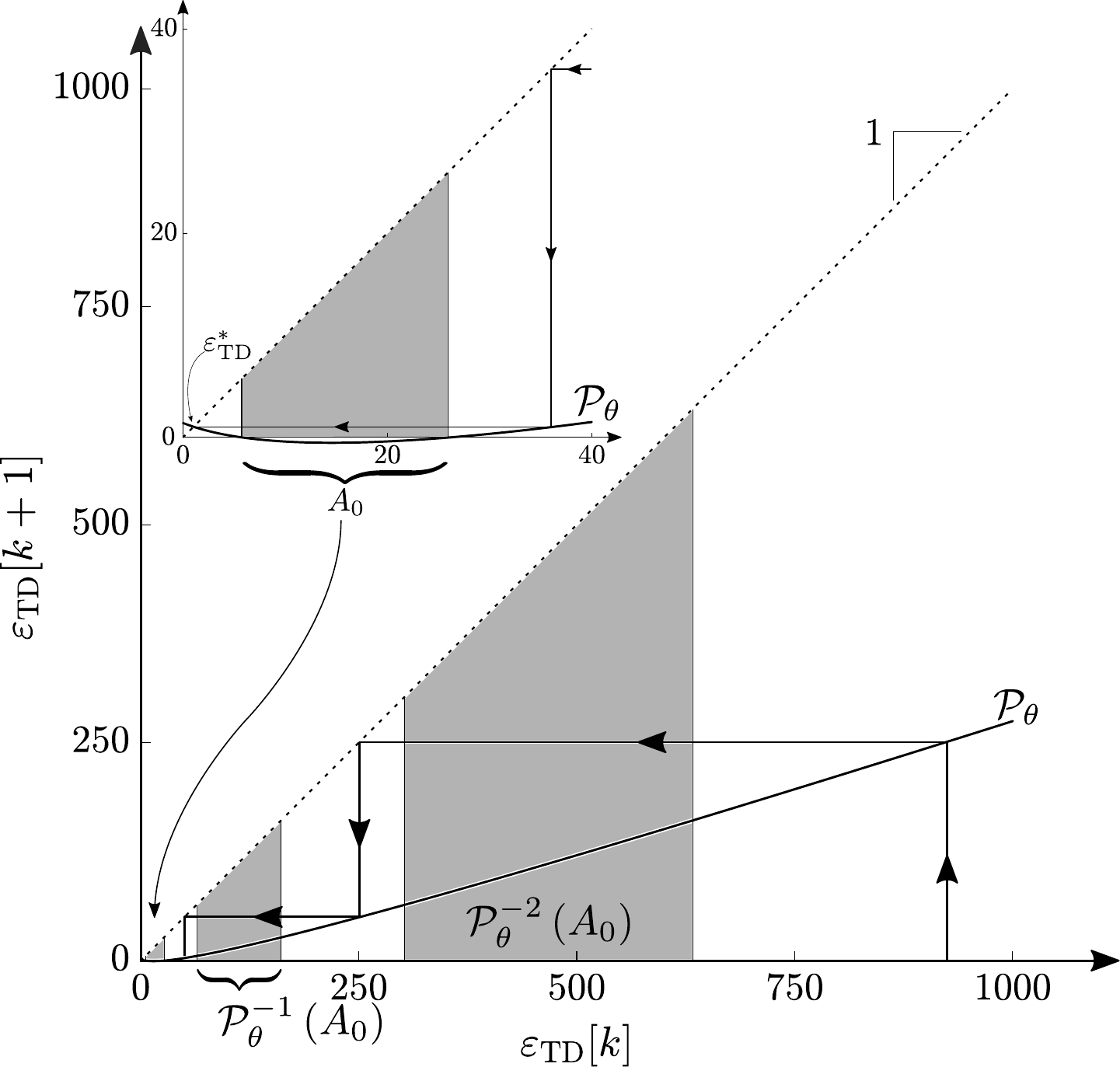}
    \caption{Example of banded basin boundaries for $\varepsilon_\mathrm{inj} = 19$, $\phi = 2.235$, and $\kappa_\mathrm{c} = 0.1$ which result in $\varepsilon_\mathrm{TD}^* = 1.04$ and $\Lambda = -0.32$.
    Inset: detailed view of $\mathcal{P}_\theta$ at low $\varepsilon_\mathrm{TD}$, including the interval $A_0$ (for which $\mathcal{P}_\theta < 0$) and its preimages.}
    \label{fig:banded_BOA_example}
\end{figure}

\section{Discussion}\label{sec:discussion}
\begin{figure*}
    \centering
    \includegraphics[width=0.9\linewidth]{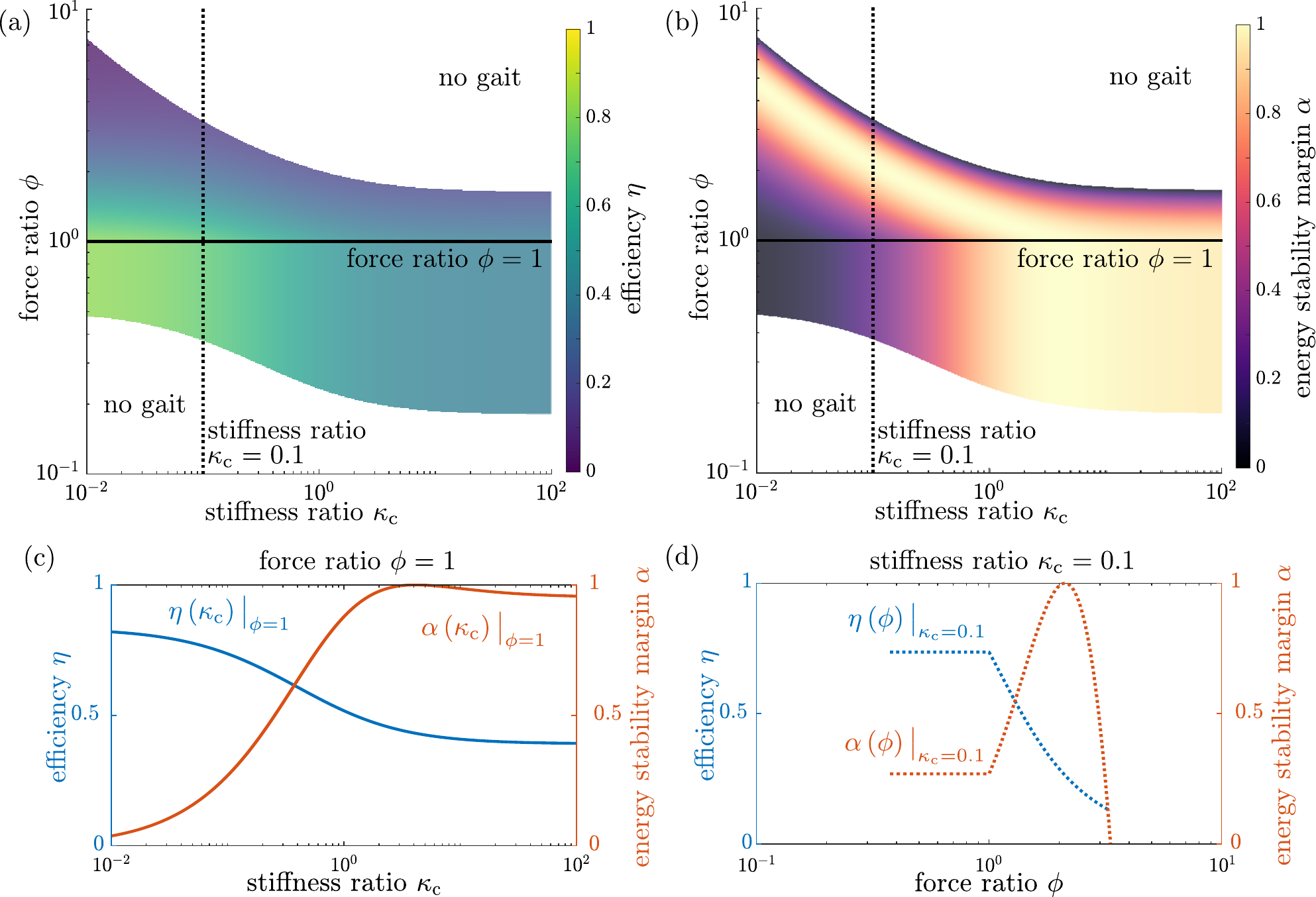}
    \caption{
    Illustration of tradeoffs between efficiency $\eta$ \strout{(see Definition \ref{def:efficiency})} and \rev{energy} stability margin $\alpha$ \strout{(see Definition \ref{def:stability_margin})} for a constant fixed point $\varepsilon_\mathrm{TD}^* = 10$\strout{, see Section~\ref{sec:discussion}}.
    \textbf{(a)}~$\eta$ \vs $\phi$ and $\kappa_\mathrm{c}$, for $\varepsilon_\mathrm{TD}^* = 10$.
    \textbf{(b)}~$\alpha$ \vs $\phi$ and $\kappa_\mathrm{c}$, for $\varepsilon_\mathrm{TD}^* = 10$.
    For both (a) and (b), white regions indicate parameter combinations for which gaits do not exist.
    \textbf{(c)}~Comparison of $\eta$ and $\alpha$ \vs $\kappa_\mathrm{c}$ for $\varepsilon_\mathrm{TD}^* = 10$ and $\phi = 1$.
    \textbf{(d)}~Comparison of $\eta$ and $\alpha$ \vs $\phi$ for $\varepsilon_\mathrm{TD}^* = 10$ and $\kappa_\mathrm{c} = 0.1$.
    }
    \label{fig:const-fp-surf_efficiency_vs_stability-margin}
\end{figure*}

The dynamics of the hop-to-hop energy map presented in this paper establish tradeoffs between efficiency, agility, and robustness, offering guidance for the design of controllers and estimators for legged locomotion on deformable terrain.
\rev{We begin this section by discussing the relevance of the vertical hopping template and the resulting hop-to-hop energy map to practical template-based controllers for legged locomotion on soft ground.
We then discuss the tradeoffs between efficiency and agility represented by the energy map, and we also discuss the structure of the map's \ac{BOA} and its implications for robustness.
Lastly, we discuss more broadly the implications these tradeoffs have for locomotion planning and control and for terrain estimation.}

\strout{Finally, apart from affecting local stability properties, the values of these three parameters also affect the \ac{BOA} of the fixed point.
Together, these relationships provide insights into planning, control, and estimation for legged locomotion on soft ground.}

\subsection{Relevance to Practical Legged Locomotion}
\rev{In template-based control, locomotion results from coupling multiple periodic low-dimensional dynamical systems (``templates'')~\cite{full1999templates,klavins2002decentralized,holmes2006dynamics}, which are then embedded in the physical robot (the ``anchor'') using methods such as those reviewed in Section~\ref{sec:introduction}.
In this paper, we have developed a vertical-hopping template applicable to running on deformable ground, analogous to Raibert's hop-height controller which assumes rigid ground.
This template and its analysis should prove valuable to roboticists using model-based control, especially template-based control, to achieve legged locomotion on deformable terrain.

On rigid ground, where the yield threshold is effectively infinite, all of the potential energy injected into the leg spring is converted to gravitational potential energy during flight, provided the foot remains stationary on the ground.
In contrast, because a deformable substrate has a finite yield threshold, some of the injected energy is dissipated into the ground rather than converted to kinetic or gravitational potential energy.
Understanding both the conditions for and the consequences of this phenomenon are key to extending template-based locomotion control to deformable terrain, where its potential benefits---interpretability, composability and generalizability, and tractability---are all the more valuable.
}

\subsection{Tradeoffs between Efficiency and Agility}
\rev{For a fixed mass ratio, three parameters govern the hop-to-hop energy dynamics: energy injected per hop $\varepsilon_\mathrm{inj}$, force ratio $\phi$, and compression-mode stiffness ratio $\kappa_\mathrm{c}$.
As illustrated in Figure \ref{fig:const-fp-surf}, parameter values that result in a particular fixed point $\varepsilon_\mathrm{TD}^*$ exist on a two-dimensional surface in this three-dimensional parameter space.
Furthermore, on this constant-fixed-point surface, a one-dimensional curve of parameter combinations endows this fixed point with a particular eigenvalue.
These features imply tradeoffs between efficiency and agility.}
As illustrated in Figure \ref{fig:const-fp-surf_efficiency_vs_stability-margin}, a tradeoff exists between efficiency $\eta$ and \rev{energy} stability margin $\alpha$\strout{, given in Definition \ref{def:efficiency} and Definition \ref{def:stability_margin}, respectively}.
Due to the finite energy injection required to sustain locomotion on deformable terrain, the maximum achievable $\eta$ is less than one. 
For a constant $\varepsilon_\mathrm{TD}^*$, $\eta$ is maximized as $\kappa_\mathrm{c}\to 0$, and $\eta$ decreases monotonically as $\kappa_\mathrm{c}$ increases, because a stiffer leg results in deeper foot penetration prior to energy injection \cite{lynch2020SLP}.
For $\phi \leq 1$, $\eta$ does not change with $\phi$, but for $\phi > 1$, $\eta$ decreases monotonically as $\phi$ increases, due to energy dissipation through reyielding upon energy injection.

Unlike $\eta$, the \rev{energy} stability margin $\alpha$ does not change monotonically with $\kappa_\mathrm{c}$ but, rather, is maximized at some finite $\kappa_\mathrm{c}$.
For $\phi \leq 1$, $\alpha$ is maximized by some $\kappa_\mathrm{c} > 1$ and does not change with $\phi$, however for $\phi > 1$, there is an inverse relationship between the $\kappa_\mathrm{c}$ and $\phi$ for which $\alpha$ is maximized.
These results suggest that reyielding on extension, while undesirable from an efficiency standpoint, may serve as a useful resource when a rapid change in \ac{COM} kinetic energy is desired, provided sufficient energy can be injected.

\subsection{Robustness}
Koditschek and B{\"u}hler use properties of S-unimodal one-dimensional maps to explain the robustness of hopping behaviors produced by Raibert's control laws on hard ground \cite{koditschek1991hopping,singer1978unimodal,guckenheimer1979unimodal}.
Our analysis of the \ac{BOA} for the hop-to-hop energy map $\mathcal{P}_\theta$ in the limit of $\mu\to 0$ adapts these results to soft ground, without reliance on S-unimodality.
Specifically, we find a wide range of dimensionless parameter values which lead to the same strong result, namely an essentially globally asymptotically stable fixed point $\varepsilon_\mathrm{TD}^*$. 
Interestingly, however, we find that this strong stability property is not guaranteed for arbitrary parameter values but, rather, only holds true when the resulting hop-to-hop energy map $\mathcal{P}_\theta$ is positive for all $\varepsilon_\mathrm{TD} \geq 0$.
Conversely, when $\mathcal{P}_\theta < 0$ over some interval, there exists a failed-hop attractor in addition to $\varepsilon_\mathrm{TD}^*$, and the basins of attraction for these two attractors exhibit an alternating banded structure as $\varepsilon_\mathrm{TD}$ increases, as shown in Figure \ref{fig:banded_BOA_example}.
In this case, it is possible to converge to the failed-hop attractor in one or more hops.
Thus, from Proposition \ref{prop:pos_ev_implies_global_boa}, a cautious heuristic for choosing control parameters might be to ensure they endow the map with an eigenvalue between zero and one, \rev{which in turn ensures $\mathcal{P}_\theta > 0$ for all $\varepsilon_\mathrm{TD} \geq 0$.}

\subsection{Implications for Planning, Control, and Terrain Estimation}
The tradeoffs between efficiency and agility discussed above suggest the use of separate locomotion modes: a ``cruise'' mode optimized for efficient locomotion on soft ground (\eg low leg stiffness relative to ground stiffness and injecting energy without reyielding) and a ``sport'' mode optimized for agile locomotion on soft ground (\eg stiffer legs, reyielding more upon energy injection, and injecting more energy per hop).
Similarly, it may be possible to adjust control parameters to enable rapid transitions between efficient gaits by optimizing parameters to maximize the \rev{energy} stability margin during transition, followed by optimizing parameters to maximize efficiency once the new gait is established.
Finally, since the stiffness ratio and the force ratio that parameterize the hop-to-hop dynamics depend on the ground stiffness,
the hop-to-hop energy map offers a discrete-time model that can be used to estimate ground stiffness during locomotion, in the spirit of Krotkov's ``every step is an experiment''~\cite{krotkov1990active}.

\section{Conclusion}
In this paper, we explore the ramifications of plastic ground deformation on hopping gaits of a spring-legged monopod driven by a Raibert-style energy injection controller.
To focus on what we consider the most important terrain feature---\strout{finite} \rev{depth-dependent} yield stress---we choose the simplest ground model with this feature: plastically-deforming ground with yield stress increasing linearly with depth.
From the resulting hybrid robot-terrain model, we derive the hop-to-hop energy map, a discrete-time one-dimensional dynamical system whose fixed points correspond to hopping gaits.
Systematic physical experiments validate the dynamics predicted by the hop-to-hop energy map for a real robot hopping on a real deformable substrate, and simulations connect our analytical results, obtained in the limit of negligible foot mass, to the reality of finite foot mass.

Analysis of the hop-to-hop energy map reveals complex boundaries in the design-control parameter space that differentiate between hopping gaits and failed hops, as well as families of transient responses and basins of attraction associated with each gait.
From this map, we propose definitions of efficiency and \rev{an energy} stability margin that serves as a proxy for agility.
We also propose global stability criteria based on the basin of attraction of the fixed point of the hop-to-hop energy map;
these stability criteria are distinct from those proposed earlier for hopping on hard ground \cite{koditschek1991hopping} and show that globally stable hopping gaits are possible on deformable terrain but depend on properties of both the robot and the terrain.
Future directions include applying this framework to \rev{template-based control of} sagittal-plane and, eventually, fully unconstrained legged locomotion on deformable terrain where the dynamics of foot-ground interaction can be substantially more complex.

\appendix
\subsection{Massless-Foot Hop-to-Hop Energy Map Derivation}
We derive the hop-to-hop energy map $\mathcal{P}_\theta$ in the limit of negligible foot mass ($\mu \to 0$).
In order to obtain agreement with low-foot-mass simulations (see Section \ref{sec:simulations} and Figure \ref{fig:sim_overlay_massless_soln}), we remove the singularity at $\mu = 0$ in the equations of motion given in Equation~\eqref{eq:nondim_EOM}.

\subsubsection{Compression Phase}
Upon \ac{TD}, the ground underfoot yields and acts like a spring in series with the leg spring; the equivalent stiffness of the two springs in series is $\kappa_\mathrm{eq,c} = \kappa_\mathrm{c}\left(1 + \kappa_\mathrm{c}\right)^{-1}$.
Likewise, the equivalent deformation of the two springs in series is $\Delta\lambda_\mathrm{eq,c} = \left(\lambda_\mathrm{c} - \xi_\mathrm{b} + \xi_\mathrm{f}\right) + \left(0 - \xi_\mathrm{f}\right) = \lambda_\mathrm{c} - \xi_\mathrm{b}$.

In the limit $\mu\to0$, the kinetic energy is zero at the compression-extension transition, and the total energy prior to the compression-extension transition, $\varepsilon_\mathrm{CE^-}$, can be rewritten in terms of an equivalent spring stiffness, $\kappa_\mathrm{eq,c}$, and an equivalent spring deformation, $\Delta\lambda_\mathrm{eq,c}$:
\begin{align}
		\varepsilon_\mathrm{CE^-} &= \frac{1}{2}\kappa_\mathrm{eq}\left(\lambda_\mathrm{c} - \xi_\mathrm{b}(\tau_\mathrm{CE})\right)^2 + \xi_\mathrm{b}(\tau_\mathrm{CE}).
\end{align}
Note that $\varepsilon_\mathrm{CE^-}$ includes recoverable energy stored in the leg spring as well as irrecoverable energy ``stored'' in the ground ``spring.''
The total energy (including energy lost to ground deformation) prior to the compression-extension transition equals the total energy at impact:
\begin{equation}
	\varepsilon_\mathrm{TD} + \lambda_\mathrm{c} = \frac{1}{2}\kappa_\mathrm{eq,c}\left(\lambda_\mathrm{c} - \xi_\mathrm{b}(\tau_\mathrm{CE})\right)^2 + \xi_\mathrm{b}(\tau_\mathrm{CE}),
\end{equation}
which can be solved for $\xi_\mathrm{b}\left(\tau_\mathrm{CE}\right)$, the body position at the compression-extension transition:
\begin{equation}
	\xi_\mathrm{b}\left(\tau_\mathrm{CE}\right) = \lambda_\mathrm{c} - \frac{1}{\kappa_\mathrm{eq,c}}\left(1 + \sqrt{1 + 2\kappa_\mathrm{eq,c}\varepsilon_\mathrm{TD}}\right).
\end{equation}
Because the leg spring and ground spring are in series when the ground is yielding, the force applied by the compression-mode leg spring, $\phi_\mathrm{a,c}$, must equal the force applied by the ground spring, $\phi_\mathrm{g}$:
\begin{equation}
	\underbrace{\kappa_\mathrm{c}\left(\lambda_\mathrm{c} - \xi_\mathrm{b} + \xi_\mathrm{f}\right)}_{\phi_\mathrm{a,c}} = \underbrace{-\xi_\mathrm{f}}_{\phi_\mathrm{g}},
\end{equation}
which can be solved for the foot position $\xi_\mathrm{f}(\tau_\mathrm{CE})$ at the compression-extension transition:
\begin{equation}\label{eq:apx:nondim_massless_foot_foot_pos_pre_CE}
	\xi_\mathrm{f}(\tau_\mathrm{CE}^-) = -\left(1 + \sqrt{1 + 2\kappa_\mathrm{eq,c}\varepsilon_\mathrm{TD}}\right).
\end{equation}
The body position at the compression-extension transition simplifies to the following:
\begin{equation}\label{eq:apx:nondim_massless_foot_body_pos_pre_CE}
	\xi_\mathrm{b}(\tau_\mathrm{CE}) = \lambda_\mathrm{c} + \frac{\xi_\mathrm{f}}{\kappa_\mathrm{eq,c}}.
\end{equation}
\begin{remark}
    As $\varepsilon_\mathrm{TD}\to 0$, $\xi_\mathrm{f} \to -2$;~\ie penetration depth for the compliant monopod is minimized when the monopod is released from rest, sinking to twice the weight-support depth.
\end{remark}

\subsubsection{Extension Phase}
The change in leg spring parameters at the compression-extension transition results in a new equilibrium configuration for the leg spring and ground spring in series, where the force in the leg spring equals the force in the ground spring:
\begin{equation}
	\kappa_\mathrm{e}\left(\lambda_\mathrm{e} - \xi_\mathrm{b} + \bar{\xi}_\mathrm{f}\right) = -\bar{\xi}_\mathrm{f},
\end{equation}
where $\xi_\mathrm{b}$, the body position, is given by Equation~\eqref{eq:apx:nondim_massless_foot_body_pos_pre_CE} and where $\bar{\xi}_\mathrm{f}$ is the equilibrium foot position.
The equilibrium foot position is then
\begin{equation}
	\bar{\xi}_\mathrm{f} = -\kappa_\mathrm{eq,e}\left(\lambda_\mathrm{e} - \xi_\mathrm{b}\right),
\end{equation}
where $\kappa_\mathrm{eq,e} = \kappa_\mathrm{e}\left(1 + \kappa_\mathrm{e}\right)^{-1}$ is the equivalent stiffness of the extension-mode leg spring and ground spring in series.
Substituting the expression for $\xi_\mathrm{b}$ given by Equation~\eqref{eq:apx:nondim_massless_foot_body_pos_pre_CE} yields the following expression for the foot position when the series combination of the extension-mode leg spring and the ground spring is in equilibrium:
\begin{equation}
	\bar{\xi}_\mathrm{f} = -\kappa_\mathrm{eq,e}\left(\lambda_\mathrm{e} - \lambda_\mathrm{c} - \frac{\xi_\mathrm{f}(\tau_\mathrm{CE}^-)}{\kappa_\mathrm{eq,c}}\right).
\end{equation}
As the leg extends and pushes down on the foot, the foot begins to move, overshooting its equilibrium position $\bar{\xi}_\mathrm{f}$ and coming to rest at depth
\begin{align}\label{eq:apx:nondim_foot_pos_post_CE}
	\begin{split}
		\xi_\mathrm{f}\left(\tau_\mathrm{CE}^+\right) &= 	\xi_\mathrm{f}\left(\tau_\mathrm{CE}^-\right) + 2\left(\bar{\xi}_\mathrm{f} - 	\xi_\mathrm{f}\left(\tau_\mathrm{CE}^-\right)\right)\\
		&= 2\bar{\xi}_\mathrm{f} - \xi_\mathrm{f}\left(\tau_\mathrm{CE}^-\right).
	\end{split}
\end{align}
Normalizing $\xi_\mathrm{f}\left(\tau_\mathrm{CE}^+\right)$ by $\xi_\mathrm{f}\left(\tau_\mathrm{CE}^-\right)$, substituting the expression for $\xi_\mathrm{f}\left(\tau_\mathrm{CE}^-\right)$ given in Equation~\eqref{eq:apx:nondim_massless_foot_foot_pos_pre_CE}, and simplifying results in the following ratio of post-reyielding foot depth to pre-reyielding foot depth:
\begin{equation}\label{eq:apx:depth_ratio_reyield}
	\frac{\xi_\mathrm{f}\left(\tau_\mathrm{CE}^+\right)}{\xi_\mathrm{f}\left(\tau_\mathrm{CE}^-\right)} = \frac{2(2\phi_\mathrm{eff} - 1)\varepsilon_\mathrm{inj}\kappa_\mathrm{c} + (\phi_\mathrm{eff}^2 \kappa_\mathrm{c} + 2\phi_\mathrm{eff} - 1)\xi_\mathrm{f}^2\left(\tau_\mathrm{CE}^-\right)}{2\varepsilon_\mathrm{inj}\kappa_\mathrm{c} + (\phi_\mathrm{eff}^2 \kappa_\mathrm{c} + 1)\xi_\mathrm{f}^2\left(\tau_\mathrm{CE}^-\right)},
\end{equation}
where $\phi_\mathrm{eff} = \max(\phi,1)$.
\begin{remark}
    The limit of this depth ratio as $\phi_\mathrm{eff}\to 1$ is one, and the limit of the depth ratio as $\kappa_\mathrm{c}\to 0$ is $2\phi_\mathrm{eff} - 1$, which agrees with the result obtained by solving the foot-ground \ac{ODE} with a constant applied force.
\end{remark}
The total energy loss per hop is $\varepsilon_\mathrm{loss} = \frac{1}{2}\xi_\mathrm{f}\left(\tau_\mathrm{CE}^+\right)^2$.
Finally, the hop-to-hop energy map in the limit of $\mu\to 0$ is expressed in closed form as
\begin{align}\label{eq:apx:nondim_massless_foot_energy_map}
\begin{split}
\mathcal{P}_\theta\left(\varepsilon_\mathrm{TD}\right) &= \varepsilon_\mathrm{TD} + \varepsilon_\mathrm{inj} - \varepsilon_\mathrm{loss}\\
&= \varepsilon_\mathrm{TD} + \varepsilon_\mathrm{inj} - \varepsilon_\mathrm{inj}\left(\frac{a \xi_\mathrm{f^-}^3 + c \xi_\mathrm{f^-}}{b \xi_\mathrm{f^-}^2 + d}\right)^2,
\end{split}
\end{align}
where $\xi_\mathrm{f^-}$ is shorthand for $\xi_\mathrm{f}(\tau_\mathrm{CE}^-)$ given in Equation \eqref{eq:apx:nondim_massless_foot_foot_pos_pre_CE}, and where
\begin{align}\label{eq:apx:abcd}
    a &= \kappa_\mathrm{c}\phi_\mathrm{eff}^2 + 2\phi_\mathrm{eff} - 1, & b &= \left(\kappa_\mathrm{c}\phi_\mathrm{eff}^2 + 1\right)\sqrt{2\varepsilon_\mathrm{inj}},\\
    c &= 2\varepsilon_\mathrm{inj}\kappa_\mathrm{c}\left(2\phi_\mathrm{eff} - 1\right), & d &= 2\varepsilon_\mathrm{inj}\kappa_\mathrm{c}\sqrt{2\varepsilon_\mathrm{inj}}.
\end{align}
Note that $a$, $b$, $c$, and $d$ are positive for $\varepsilon_\mathrm{inj} > 0$, $\phi_\mathrm{eff} \geq 1$, and $\kappa_\mathrm{c} > 0$.

\subsection{Proof of Proposition 2}
\rev{
\begin{enumerate}
    \item From Proposition \ref{prop:unique_fxpt}, the fixed point $\varepsilon_\mathrm{TD}^*$ of the map $\mathcal{P}_\theta$ is unique.
    \item From Lemma \ref{lemma:monotonic_eloss}, $\mathrm{d}\varepsilon_\mathrm{loss}/\mathrm{d}\varepsilon_\mathrm{TD} > 0$ and approaches a constant as $\varepsilon_\mathrm{TD} \to \infty$.
    Therefore, $\mathrm{d}\mathcal{P}_\theta/\mathrm{d}\varepsilon_\mathrm{TD} = 1 - \mathrm{d}\varepsilon_\mathrm{loss}/\mathrm{d}\varepsilon_\mathrm{TD}$ has at most one root, at $\varepsilon_\mathrm{TD,c}$, so $\mathcal{P}_\theta$ has at most one critical point.
    As $\varepsilon_\mathrm{TD}\to\infty$, $\mathrm{d}\mathcal{P}_\theta/\mathrm{d}\varepsilon_\mathrm{TD}$ is bounded between zero and one.
    \item Also, $\mathrm{d}^2\varepsilon_\mathrm{loss}/\mathrm{d}\varepsilon_\mathrm{TD}^2 < 0$ for all $\varepsilon_\mathrm{TD} > 0$ and vanishes only as $\varepsilon_\mathrm{TD} \to \infty$, so if $\mathcal{P}_\theta$ has a critical point, it is a local minimum.
    \item Therefore, $\mathcal{P}_\theta$ is strictly decreasing for $0 \leq \varepsilon_\mathrm{TD} < \varepsilon_\mathrm{TD,c}$ and is strictly increasing for $\varepsilon_\mathrm{TD,c} < \varepsilon_\mathrm{TD} < \infty$.
    \item Provided $\min\mathcal{P}_\theta > 0$, $\mathcal{P}_\theta^n(\varepsilon_\mathrm{TD})\to\varepsilon_\mathrm{TD}^*$ as $n\to\infty$.
    There are three cases distinguished by the magnitude of $\varepsilon_\mathrm{TD,c}$ relative to $\varepsilon_\mathrm{TD}^*$:
    \begin{itemize}
        \item $\varepsilon_\mathrm{TD,c} < \varepsilon_\mathrm{TD}^*$: $\mathcal{P}_\theta$ maps the interval $[0,\varepsilon_\mathrm{TD,c})$ into the interval $(\varepsilon_\mathrm{TD,c},\infty)$, and $\mathcal{P}_\theta^n(\varepsilon_\mathrm{TD}) \to \varepsilon_\mathrm{TD}^*$ as $n\to\infty$ for all $\varepsilon_\mathrm{TD}\in(\varepsilon_\mathrm{TD,c},\infty)$.
        \item $\varepsilon_\mathrm{TD,c} = \varepsilon_\mathrm{TD}^*$: $\mathcal{P}_\theta^n(\varepsilon_\mathrm{TD})\to\varepsilon_\mathrm{TD}^*$ as $n\to\infty$ for all $\varepsilon_\mathrm{TD}\in(\varepsilon_\mathrm{TD}^*,\infty)$.
        Also, $\mathcal{P}_\theta$ maps the interval $[0,\varepsilon_\mathrm{TD}^*)$ into the interval $(\varepsilon_\mathrm{TD}^*,\infty)$, so $\mathcal{P}_\theta^n(\varepsilon_\mathrm{TD})\to\varepsilon_\mathrm{TD}^*$ as $n\to\infty$ for all $\varepsilon_\mathrm{TD}\in[0,\varepsilon_\mathrm{TD}^*)$.
        \item $\varepsilon_\mathrm{TD,c} > \varepsilon_\mathrm{TD}^*$: Let $\hat{\varepsilon}_\mathrm{TD}$ be the preimage of $\varepsilon_\mathrm{TD}^*$ in the interval $(\varepsilon_\mathrm{TD,c},\infty)$.
        Two iterations of $\mathcal{P}_\theta$ map the interval $[\varepsilon_\mathrm{TD}^*,\hat{\varepsilon}_\mathrm{TD}]$ inside the interval $[\varepsilon_\mathrm{TD}^*,\varepsilon_\mathrm{TD,c}]$.
        It follows that $\mathcal{P}_\theta^n(\varepsilon_\mathrm{TD})\to\varepsilon_\mathrm{TD}^*$ as $n\to\infty$ for all $\varepsilon_\mathrm{TD} \in [\varepsilon_\mathrm{TD}^*,\hat{\varepsilon}_\mathrm{TD}]$.
        From graphical analysis, there exists some $k > 0$ such that $\mathcal{P}_\theta^k(\varepsilon_\mathrm{TD}) \in [\varepsilon_\mathrm{TD}^*,\hat{\varepsilon}_\mathrm{TD}]$.
        Thus $\mathcal{P}_\theta^{k+n}(\varepsilon_\mathrm{TD})\to\varepsilon_\mathrm{TD}^*$ as $n\to\infty$ for all $\varepsilon_\mathrm{TD}\in(\hat{\varepsilon}_\mathrm{TD},\infty)$.
        Finally, $\mathcal{P}_\theta$ maps the interval $[0,\varepsilon_\mathrm{TD}^*)$ into $(\varepsilon_\mathrm{TD}^*,\infty)$, so $\mathcal{P}_\theta^n(\varepsilon_\mathrm{TD})\to\varepsilon_\mathrm{TD}^*$ as $n\to\infty$ for all $\varepsilon_\mathrm{TD}\in[0,\varepsilon_\mathrm{TD}^*)$.
    \end{itemize}
\end{enumerate}
}


\section*{Acknowledgment}
We thank Igal Alterman for building the fluidized bed trackway, Blake Strebel for developing the hopping robot, and Andrew Lin for developing the lifting mechanism.

\bibliographystyle{IEEEtranN}
{\small
\bibliography{refs}}

\begin{thebibliography}{74}
\providecommand{\natexlab}[1]{#1}
\providecommand{\url}[1]{#1}
\csname url@samestyle\endcsname
\providecommand{\newblock}{\relax}
\providecommand{\bibinfo}[2]{#2}
\providecommand{\BIBentrySTDinterwordspacing}{\spaceskip=0pt\relax}
\providecommand{\BIBentryALTinterwordstretchfactor}{4}
\providecommand{\BIBentryALTinterwordspacing}{\spaceskip=\fontdimen2\font plus
\BIBentryALTinterwordstretchfactor\fontdimen3\font minus
  \fontdimen4\font\relax}
\providecommand{\BIBforeignlanguage}[2]{{%
\expandafter\ifx\csname l@#1\endcsname\relax
\typeout{** WARNING: IEEEtranN.bst: No hyphenation pattern has been}%
\typeout{** loaded for the language `#1'. Using the pattern for}%
\typeout{** the default language instead.}%
\else
\language=\csname l@#1\endcsname
\fi
#2}}
\providecommand{\BIBdecl}{\relax}
\BIBdecl

\bibitem[Bagnold(1974)]{bagnold1974physics}
R.~A. Bagnold, \emph{\href{https://doi.org/10.1007/978-94-009-5682-7}{The
  Physics of Blown Sand and Desert Dunes}}.\hskip 1em plus 0.5em minus
  0.4em\relax Springer Science \& Business Media, 1974.

\bibitem[Perry et~al.(2022)Perry, Barnouin, Daly, Bierhaus, Ballouz, Walsh,
  Daly, DellaGiustina, Nolan, Emery, Al~Asad, Johnson, Ernst, Jawin, Michel,
  Golish, Bottke, Seabrook, and Lauretta]{perry2022Bennu}
M.~E. Perry, O.~S. Barnouin \emph{et~al.},
  ``\href{https://doi.org/10.1038/s41561-022-00937-y}{Low Surface Strength of
  the Asteroid Bennu Inferred from Impact Ejecta Deposit},'' \emph{Nature
  Geoscience}, vol.~15, no.~6, pp. 447--452, June 2022.

\bibitem[Westervelt et~al.(2003)Westervelt, Grizzle, and
  Koditschek]{Westervelt_2003_HZD}
E.~R. Westervelt, J.~W. Grizzle, and D.~E. Koditschek,
  ``\href{https://doi.org/10.1109/TAC.2002.806653}{Hybrid Zero Dynamics of
  Planar Biped Walkers},'' \emph{IEEE Transactions on Automatic Control},
  vol.~48, no.~1, pp. 42--56, 2003.

\bibitem[Choi et~al.(2023)Choi, Ji, Park, Kim, Mun, Lee, and
  Hwangbo]{choi2023learning}
S.~Choi, G.~Ji \emph{et~al.},
  ``\href{https://doi.org/10.1126/scirobotics.ade2256}{Learning Quadrupedal
  Locomotion on Deformable Terrain},'' \emph{Science Robotics}, vol.~8, no.~74,
  2023.

\bibitem[Guizzo and Ackerman(2015)]{guizzo2015DRC}
E.~Guizzo and E.~Ackerman,
  ``\href{https://doi.org/10.1109/MSPEC.2015.7164385}{The Hard Lessons of
  DARPA's Robotics Challenge [News]},'' \emph{IEEE Spectrum}, vol.~52, no.~8,
  pp. 11--13, 2015.

\bibitem[Krotkov et~al.(2017)Krotkov, Hackett, Jackel, Perschbacher, Pippine,
  Strauss, Pratt, and Orlowski]{krotkov2017DRC}
E.~Krotkov, D.~Hackett \emph{et~al.},
  ``\href{https://doi.org/10.1002/rob.21683}{The DARPA Robotics Challenge
  Finals: Results and Perspectives},'' \emph{Journal of Field Robotics},
  vol.~34, no.~2, pp. 229--240, 2017.

\bibitem[Li et~al.(2013)Li, Zhang, and Goldman]{Li_Terradynamics_2013}
C.~Li, T.~Zhang, and D.~I. Goldman,
  ``\href{https://doi.org/10.1126/science.1229163}{A Terradynamics of Legged
  Locomotion on Granular Media},'' \emph{Science}, vol. 339, no. 6126, pp.
  1408--1412, 2013.

\bibitem[Alexander(2013)]{alexander2003principles}
R.~M. Alexander,
  \emph{\href{https://press.princeton.edu/books/paperback/9780691126340/principles-of-animal-locomotion}{Principles
  of Animal Locomotion}}.\hskip 1em plus 0.5em minus 0.4em\relax Princeton
  University Press, 2013.

\bibitem[Jaeger et~al.(1996)Jaeger, Nagel, and Behringer]{jaeger1996granular}
H.~M. Jaeger, S.~R. Nagel, and R.~P. Behringer,
  ``\href{https://doi.org/10.1103/RevModPhys.68.1259}{Granular Solids, Liquids,
  and Gases},'' \emph{Rev. Mod. Phys.}, vol.~68, pp. 1259--1273, Oct 1996.

\bibitem[Askari and Kamrin(2016)]{askariIntrusion_NM2016}
H.~Askari and K.~Kamrin, ``\href{https://doi.org/10.1038/nmat4727}{Intrusion
  Rheology in Grains and other Flowable Materials},'' \emph{Nature Materials},
  vol.~15, no.~12, pp. 1274--1279, Dec 2016.

\bibitem[Jackson(1983)]{jackson1983granular}
R.~Jackson, ``\href{https://doi.org/10.1016/B978-0-12-493120-6.50018-0}{Some
  Mathematical and Physical Aspects of Continuum Models for the Motion of
  Granular Materials},'' in \emph{Theory of Dispersed Multiphase Flow}, R.~E.
  MEYER, Ed.\hskip 1em plus 0.5em minus 0.4em\relax Academic Press, 1983, pp.
  291--337.

\bibitem[Lohse et~al.(2004)Lohse, Rauh{\'e}, Bergmann, and van~der
  Meer]{lohse2004quicksand}
D.~Lohse, R.~Rauh{\'e} \emph{et~al.},
  ``\href{https://doi.org/10.1038/432689a}{Creating a Dry Variety of
  Quicksand},'' \emph{Nature}, vol. 432, no. 7018, pp. 689--690, Dec 2004.

\bibitem[Hu et~al.(2003)Hu, Chan, and Bush]{hu2003strider}
D.~L. Hu, B.~Chan, and J.~W.~M. Bush,
  ``\href{https://doi.org/10.1038/nature01793}{The Hydrodynamics of Water
  Strider Locomotion},'' \emph{Nature}, vol. 424, no. 6949, pp. 663--666, Aug
  2003.

\bibitem[Suter and Wildman(1999)]{suter1999dolomedes}
R.~Suter and H.~Wildman,
  ``\href{https://doi.org/10.1242/jeb.202.20.2771}{Locomotion on the Water
  Surface: Hydrodynamic Constraints on Rowing Velocity Require a Gait
  Change},'' \emph{Journal of Experimental Biology}, vol. 202, no.~20, pp.
  2771--2785, 10 1999.

\bibitem[Glasheen and McMahon(1996)]{glasheen1996basilisk}
J.~W. Glasheen and T.~A. McMahon, ``\href{https://doi.org/10.1038/380340a0}{A
  hydrodynamic model of locomotion in the Basilisk Lizard},'' \emph{Nature},
  vol. 380, no. 6572, pp. 340--342, Mar 1996.

\bibitem[Hsieh and Lauder(2004)]{hsieh2004running}
S.~T. Hsieh and G.~V. Lauder,
  ``\href{https://doi.org/10.1073/pnas.0405736101}{Running on water:
  Three-Dimensional Force Generation by Basilisk Lizards},'' \emph{Proceedings
  of the National Academy of Sciences}, vol. 101, no.~48, pp. 16\,784--16\,788,
  2004.

\bibitem[Vukobratović and Stepanenko(1972)]{vukobratovic1972ZMP}
M.~Vukobratović and J.~Stepanenko,
  ``\href{https://doi.org/10.1016/0025-5564(72)90061-2}{On the stability of
  anthropomorphic systems},'' \emph{Mathematical Biosciences}, vol.~15, no.~1,
  pp. 1--37, 1972.

\bibitem[Raibert(1986)]{raibert1986legged}
M.~H. Raibert,
  \emph{\href{https://mitpress.mit.edu/books/legged-robots-balance}{Legged
  Robots that Balance}}.\hskip 1em plus 0.5em minus 0.4em\relax MIT press,
  1986.

\bibitem[McGeer(1990)]{mcgeer1990passive}
T.~McGeer, ``\href{https://doi.org/10.1177/027836499000900206}{Passive Dynamic
  Walking},'' \emph{The International Journal of Robotics Research}, vol.~9,
  no.~2, pp. 62--82, 1990.

\bibitem[Collins et~al.(2005)Collins, Ruina, Tedrake, and
  Wisse]{collins2005passive}
S.~Collins, A.~Ruina \emph{et~al.},
  ``\href{https://doi.org/10.1126/science.1107799}{Efficient Bipedal Robots
  Based on Passive-Dynamic Walkers},'' \emph{Science}, vol. 307, no. 5712, pp.
  1082--1085, 2005.

\bibitem[Hereid et~al.(2018)Hereid, Hubicki, Cousineau, and
  Ames]{hereid2018dircolHZD}
A.~Hereid, C.~M. Hubicki \emph{et~al.},
  ``\href{https://doi.org/10.1109/TRO.2017.2783371}{Dynamic Humanoid
  Locomotion: A Scalable Formulation for HZD Gait Optimization},'' \emph{IEEE
  Transactions on Robotics}, vol.~34, no.~2, pp. 370--387, 2018.

\bibitem[Guckenheimer and Holmes(2013)]{guckenheimer2013nonlinear}
J.~Guckenheimer and P.~Holmes,
  \emph{\href{https://www.springer.com/gp/book/9780387908199}{Nonlinear
  Oscillations, Dynamical Systems, and Bifurcations of Vector Fields}}.\hskip
  1em plus 0.5em minus 0.4em\relax Springer Science \& Business Media, 2013,
  vol.~42.

\bibitem[Goswami et~al.(1996)Goswami, Espiau, and Keramane]{goswami1996limit}
A.~Goswami, B.~Espiau, and A.~Keramane,
  ``\href{https://doi.org/10.1109/ROBOT.1996.503785}{Limit cycles and their
  stability in a passive bipedal gait},'' in \emph{Proceedings of IEEE
  International Conference on Robotics and Automation}, vol.~1, 1996, pp.
  246--251 vol.1.

\bibitem[Full et~al.(2002)Full, Kubow, Schmitt, Holmes, and
  Koditschek]{full2002quantifying}
R.~J. Full, T.~Kubow \emph{et~al.},
  ``{\href{https://doi.org/10.1093/icb/42.1.149}{Quantifying Dynamic Stability
  and Maneuverability in Legged Locomotion1}},'' \emph{Integrative and
  Comparative Biology}, vol.~42, no.~1, pp. 149--157, 02 2002.

\bibitem[Grizzle et~al.(2014)Grizzle, Chevallereau, Sinnet, and
  Ames]{grizzle2014review}
J.~W. Grizzle, C.~Chevallereau \emph{et~al.},
  ``\href{https://doi.org/10.1016/j.automatica.2014.04.021}{Models, feedback
  control, and open problems of 3D bipedal robotic walking},''
  \emph{Automatica}, vol.~50, no.~8, pp. 1955--1988, 2014.

\bibitem[Cavagna et~al.(1977)Cavagna, Heglund, and Taylor]{cavagna1977energy}
G.~A. Cavagna, N.~C. Heglund, and C.~R. Taylor,
  ``\href{https://doi.org/10.1152/ajpregu.1977.233.5.R243}{Mechanical work in
  terrestrial locomotion: two basic mechanisms for minimizing energy
  expenditure},'' \emph{American Journal of Physiology-Regulatory, Integrative
  and Comparative Physiology}, vol. 233, no.~5, pp. R243--R261, 1977.

\bibitem[McMahon(1984)]{mcmahon1984mechanics}
T.~McMahon, ``\href{https://doi.org/10.1177/027836498400300202}{Mechanics of
  Locomotion},'' \emph{The International Journal of Robotics Research}, vol.~3,
  no.~2, pp. 4--28, 1984.

\bibitem[McMahon(1985)]{mcmahon1985compliance}
T.~A. McMahon, ``{\href{https://doi.org/10.1242/jeb.115.1.263}{The role of
  compliance in mammalian running gaits}},'' \emph{Journal of Experimental
  Biology}, vol. 115, no.~1, pp. 263--282, 03 1985.

\bibitem[McN et~al.(1990)]{alexander1990three}
R.~McN \emph{et~al.},
  ``\href{https://doi.org/10.1177\%2F027836499000900205}{Three Uses for Springs
  in Legged Locomotion.}'' \emph{Int. J. Robotics Res.}, vol.~9, no.~2, pp.
  53--61, 1990.

\bibitem[McMahon and Cheng(1990)]{mcmahon1990stiffness}
T.~A. McMahon and G.~C. Cheng,
  ``\href{https://doi.org/10.1016/0021-9290(90)90042-2}{The mechanics of
  running: How does stiffness couple with speed?}'' \emph{Journal of
  Biomechanics}, vol.~23, pp. 65--78, 1990, international Society of
  Biomechanics.

\bibitem[Blickhan(1989)]{blickhan1989slip}
R.~Blickhan, ``\href{https://doi.org/10.1016/0021-9290(89)90224-8}{The
  Spring-Mass Model for Running and Hopping},'' \emph{Journal of Biomechanics},
  vol.~22, pp. 1217--1227, 1989.

\bibitem[Blickhan and Full(1993)]{blickhan1993similarity}
R.~Blickhan and R.~Full, ``\href{https://doi.org/10.1007/BF00197760}{Similarity
  in multilegged locomotion: bouncing like a monopode},'' \emph{Journal of
  Comparative Physiology A}, vol. 173, no.~5, pp. 509--517, 1993.

\bibitem[Geyer et~al.(2005)Geyer, Seyfarth, and Blickhan]{geyer2005spring-mass}
H.~Geyer, A.~Seyfarth, and R.~Blickhan,
  ``\href{https://doi.org/10.1016/j.jtbi.2004.08.015}{Spring-mass running:
  simple approximate solution and application to gait stability},''
  \emph{Journal of Theoretical Biology}, vol. 232, no.~3, pp. 315--328, 2005.

\bibitem[Geyer et~al.(2006)Geyer, Seyfarth, and Blickhan]{geyer2006compliant}
------, ``\href{https://doi.org/10.1098/rspb.2006.3637}{Compliant leg behaviour
  explains basic dynamics of walking and running},'' \emph{Proceedings of the
  Royal Society B: Biological Sciences}, vol. 273, no. 1603, pp. 2861--2867,
  2006.

\bibitem[Full and Koditschek(1999)]{full1999templates}
R.~Full and D.~Koditschek,
  ``{\href{https://doi.org/10.1242/jeb.202.23.3325}{Templates and Anchors:
  Neuromechanical Hypotheses of Legged Locomotion on Land}},'' \emph{Journal of
  Experimental Biology}, vol. 202, no.~23, pp. 3325--3332, 12 1999.

\bibitem[Klavins and Koditschek(2002)]{klavins2002decentralized}
E.~Klavins and D.~Koditschek,
  ``\href{https://doi.org/10.1177/027836402320556430}{Phase Regulation of
  Decentralized Cyclic Robotic Systems},'' \emph{The International Journal of
  Robotics Research}, vol.~21, no.~3, pp. 257--275, 2002.

\bibitem[Holmes et~al.(2006)Holmes, Full, Koditschek, and
  Guckenheimer]{holmes2006dynamics}
P.~Holmes, R.~J. Full \emph{et~al.},
  ``\href{https://doi.org/10.1137/S0036144504445133}{The Dynamics of Legged
  Locomotion: Models, Analyses, and Challenges},'' \emph{SIAM review}, vol.~48,
  no.~2, pp. 207--304, 2006.

\bibitem[{Poulakakis} and {Grizzle}(2009)]{poulakakis2009ASLIP}
I.~{Poulakakis} and J.~W. {Grizzle},
  ``\href{https://doi.org/10.1109/TAC.2009.2024565}{The Spring Loaded Inverted
  Pendulum as the Hybrid Zero Dynamics of an Asymmetric Hopper},'' \emph{IEEE
  Transactions on Automatic Control}, vol.~54, no.~8, pp. 1779--1793, 2009.

\bibitem[Hubicki et~al.(2016{\natexlab{a}})Hubicki, Grimes, Jones, Renjewski,
  Spröwitz, Abate, and Hurst]{hubicki2016atrias}
C.~Hubicki, J.~Grimes \emph{et~al.},
  ``\href{https://doi.org/10.1177/0278364916648388}{ATRIAS: Design and
  validation of a tether-free 3D-capable spring-mass bipedal robot},''
  \emph{The International Journal of Robotics Research}, vol.~35, no.~12, pp.
  1497--1521, 2016.

\bibitem[Bares et~al.(1989)Bares, Hebert, Kanade, Krotkov, Mitchell, Simmons,
  and Whittaker]{bares1989Ambler}
J.~Bares, M.~Hebert \emph{et~al.},
  ``\href{https://doi.org/10.1109/2.30717}{Ambler: an autonomous rover for
  planetary exploration},'' \emph{Computer}, vol.~22, no.~6, pp. 18--26, 1989.

\bibitem[Bares and Wettergreen(1999)]{bares1999DanteII}
J.~E. Bares and D.~S. Wettergreen,
  ``\href{https://doi.org/10.1177/02783649922066475}{Dante II: Technical
  Description, Results, and Lessons Learned},'' \emph{The International Journal
  of Robotics Research}, vol.~18, no.~7, pp. 621--649, 1999.

\bibitem[Erez et~al.(2013)Erez, Lowrey, Tassa, Kumar, Kolev, and
  Todorov]{erez2013humanoid_RT_MPC}
T.~Erez, K.~Lowrey \emph{et~al.},
  ``\href{https://doi.org/10.1109/HUMANOIDS.2013.7029990}{An integrated system
  for real-time model predictive control of humanoid robots},'' in \emph{2013
  13th IEEE-RAS International Conference on Humanoid Robots (Humanoids)}, 2013,
  pp. 292--299.

\bibitem[Di~Carlo et~al.(2018)Di~Carlo, Wensing, Katz, Bledt, and
  Kim]{dicarlo2018MIT_Cheetah_3}
J.~Di~Carlo, P.~M. Wensing \emph{et~al.},
  ``\href{https://doi.org/10.1109/IROS.2018.8594448}{Dynamic Locomotion in the
  MIT Cheetah 3 Through Convex Model-Predictive Control},'' in \emph{2018
  IEEE/RSJ International Conference on Intelligent Robots and Systems (IROS)},
  2018, pp. 1--9.

\bibitem[De et~al.(2018)De, Burden, and Koditschek]{de2018averaging}
A.~De, S.~A. Burden, and D.~E. Koditschek,
  ``\href{https://doi.org/10.1177/0278364918756498}{A hybrid dynamical
  extension of averaging and its application to the analysis of legged gait
  stability},'' \emph{The International Journal of Robotics Research}, vol.~37,
  no. 2-3, pp. 266--286, 2018.

\bibitem[Neunert et~al.(2018)Neunert, Stäuble, Giftthaler, Bellicoso, Carius,
  Gehring, Hutter, and Buchli]{neunert2018NMPC}
M.~Neunert, M.~Stäuble \emph{et~al.},
  ``\href{https://doi.org/10.1109/LRA.2018.2800124}{Whole-Body Nonlinear Model
  Predictive Control Through Contacts for Quadrupeds},'' \emph{IEEE Robotics
  and Automation Letters}, vol.~3, no.~3, pp. 1458--1465, 2018.

\bibitem[Raibert et~al.(1986)Raibert, Chepponis, and Brown]{raibert1986quad}
M.~Raibert, M.~Chepponis, and H.~Brown,
  ``\href{https://doi.org/10.1109/JRA.1986.1087044}{Running on four legs as
  though they were one},'' \emph{IEEE Journal on Robotics and Automation},
  vol.~2, no.~2, pp. 70--82, 1986.

\bibitem[Saranli et~al.(2001)Saranli, Buehler, and Koditschek]{saranli2001rhex}
U.~Saranli, M.~Buehler, and D.~E. Koditschek,
  ``\href{https://doi.org/10.1177/02783640122067570}{RHex: A Simple and Highly
  Mobile Hexapod Robot},'' \emph{The International Journal of Robotics
  Research}, vol.~20, no.~7, pp. 616--631, 2001.

\bibitem[De and Koditschek(2018)]{de2018quad}
A.~De and D.~E. Koditschek,
  ``\href{https://doi.org/10.1177/0278364918779874}{Vertical hopper
  compositions for preflexive and feedback-stabilized quadrupedal bounding,
  pacing, pronking, and trotting},'' \emph{The International Journal of
  Robotics Research}, vol.~37, no.~7, pp. 743--778, 2018.

\bibitem[Hoeller et~al.(2024)Hoeller, Rudin, Sako, and
  Hutter]{hoeller2024anymal}
D.~Hoeller, N.~Rudin \emph{et~al.},
  ``\href{https://doi.org/10.1126/scirobotics.adi7566}{Anymal parkour: Learning
  agile navigation for quadrupedal robots},'' \emph{Science Robotics}, vol.~9,
  no.~88, p. eadi7566, 2024.

\bibitem[Raibert et~al.(1984)Raibert, H.~Benjamin~Brown, and
  Chepponis]{raibert1984experiments}
M.~H. Raibert, J.~H.~Benjamin~Brown, and M.~Chepponis,
  ``\href{https://doi.org/10.1177/027836498400300207}{Experiments in Balance
  with a 3D One-Legged Hopping Machine},'' \emph{The International Journal of
  Robotics Research}, vol.~3, no.~2, pp. 75--92, 1984.

\bibitem[Koditschek and Bühler(1991)]{koditschek1991hopping}
D.~E. Koditschek and M.~Bühler,
  ``\href{https://doi.org/10.1177/027836499101000601}{Analysis of a Simplified
  Hopping Robot},'' \emph{The International Journal of Robotics Research},
  vol.~10, no.~6, pp. 587--605, 1991.

\bibitem[Singer(1978)]{singer1978unimodal}
D.~Singer, ``\href{https://doi.org/10.1137/0135020}{Stable Orbits and
  Bifurcation of Maps of the Interval},'' \emph{SIAM Journal on Applied
  Mathematics}, vol.~35, no.~2, pp. 260--267, 1978.

\bibitem[Guckenheimer(1979)]{guckenheimer1979unimodal}
J.~Guckenheimer, ``\href{https://doi.org/10.1007/BF01982351}{Sensitive
  Dependence to Initial Conditions for One Dimensional Maps},''
  \emph{Communications in Mathematical Physics}, vol.~70, no.~2, pp. 133--160,
  Jun 1979.

\bibitem[Vakakis et~al.(1991)Vakakis, Burdick, and Caughey]{vakakis1991strange}
A.~Vakakis, J.~Burdick, and T.~Caughey,
  ``\href{https://doi.org/10.1177/027836499101000602}{An "Interesting" Strange
  Attractor in the Dynamics of a Hopping Robot},'' \emph{The International
  Journal of Robotics Research}, vol.~10, no.~6, pp. 606--618, 1991.

\bibitem[M'Closkey and Burdick(1993)]{mcloskey1993forward}
R.~T. M'Closkey and J.~W. Burdick,
  ``\href{https://doi.org/10.1177/027836499301200301}{Periodic Motions of a
  Hopping Robot With Vertical and Forward Motion},'' \emph{The International
  Journal of Robotics Research}, vol.~12, no.~3, pp. 197--218, 1993.

\bibitem[Degani et~al.(2014)Degani, Long, Feng, Brown, Gregg, Choset, Mason,
  and Lynch]{degani2014parkourbot}
A.~Degani, A.~W. Long \emph{et~al.},
  ``\href{https://doi.org/10.1109/TRO.2014.2300213}{Design and Open-Loop
  Control of the ParkourBot, a Dynamic Climbing Robot},'' \emph{IEEE
  Transactions on Robotics}, vol.~30, no.~3, pp. 705--718, 2014.

\bibitem[Qian et~al.(2015)Qian, Zhang, Korff, Umbanhowar, Full, and
  Goldman]{qian2015principles}
F.~Qian, T.~Zhang \emph{et~al.},
  ``\href{https://doi.org/10.1088/1748-3190/10/5/056014}{Principles of
  Appendage Design in Robots and Animals Determining Terradynamic Performance
  on Flowable Ground},'' \emph{Bioinspiration \& Biomimetics}, vol.~10, no.~5,
  p. 056014, 2015.

\bibitem[Jin et~al.(2019)Jin, Tang, Umbanhowar, and
  Hambleton]{jin2019preparation}
Z.~Jin, J.~Tang \emph{et~al.},
  ``\href{https://doi.org/10.31224/osf.io/u78t9}{Preparation of Sand Beds Using
  Fluidization},'' Jun 2019.

\bibitem[Xiong et~al.(2017)Xiong, Ames, and Goldman]{xiong2017stability}
X.~Xiong, A.~D. Ames, and D.~I. Goldman,
  ``\href{https://ieeexplore.ieee.org/document/8206323}{A Stability Region
  Criterion for Flat-Footed Bipedal Walking on Deformable Granular Terrain},''
  in \emph{Proc. IEEE/RSJ International Conference on Intelligent Robots and
  Systems (IROS)}, 2017, pp. 4552--4559.

\bibitem[Aguilar and Goldman(2015)]{Aguilar_2016_Robophysical}
J.~J. Aguilar and D.~I. Goldman,
  ``\href{https://doi.org/10.1038/nphys3568}{Robophysical Study of Jumping
  Dynamics on Granular Media},'' \emph{Nature Physics}, vol.~12, no.~3, pp.
  278--283, 2015.

\bibitem[Hubicki et~al.(2016{\natexlab{b}})Hubicki, Aguilar, Goldman, and
  Ames]{Hubicki_Tractable_2016}
C.~M. Hubicki, J.~J. Aguilar \emph{et~al.},
  ``\href{https://doi.org/10.1109/IROS.2016.7759572}{Tractable Terrain-Aware
  Motion Planning on Granular Media: An Impulsive Jumping Study},'' in
  \emph{IEEE/RSJ International Conference on Intelligent Robots and Systems
  (IROS)}, 2016, pp. 3887--3892.

\bibitem[{Lynch} et~al.(2020){Lynch}, {Lynch}, and {Umbanhowar}]{lynch2020SLP}
D.~J. {Lynch}, K.~M. {Lynch}, and P.~B. {Umbanhowar},
  ``\href{https://doi.org/10.1109/LRA.2020.2977260}{The Soft-Landing Problem:
  Minimizing Energy Loss by a Legged Robot Impacting Yielding Terrain},''
  \emph{IEEE Robotics and Automation Letters}, vol.~5, no.~2, pp. 3658--3665,
  2020.

\bibitem[Roberts and Koditschek(2018)]{roberts2018reactive}
S.~Roberts and D.~E. Koditschek,
  ``\href{https://doi.org/10.1109/ROBIO.2018.8664858}{Reactive Velocity Control
  Reduces Energetic Cost of Jumping with a Virtual Leg Spring on Simulated
  Granular Media},'' in \emph{2018 IEEE International Conference on Robotics
  and Biomimetics (ROBIO)}, 2018, pp. 1397--1404.

\bibitem[Roberts and Koditschek(2019)]{roberts2019mitigating}
------, ``\href{https://doi.org/10.1109/ICRA.2019.8793781}{Mitigating Energy
  Loss in a Robot Hopping on a Physically Emulated Dissipative Substrate},'' in
  \emph{2019 International Conference on Robotics and Automation (ICRA)}.\hskip
  1em plus 0.5em minus 0.4em\relax IEEE, 2019, pp. 6763--6769.

\bibitem[Li et~al.(2009)Li, Umbanhowar, Komsuoglu, Koditschek, and
  Goldman]{li2009sensitive}
C.~Li, P.~B. Umbanhowar \emph{et~al.},
  ``\href{https://doi.org/10.1073/pnas.0809095106}{Sensitive Dependence of the
  Motion of a Legged Robot on Granular Media},'' \emph{Proc. National Academy
  of Sciences}, vol. 106, no.~9, pp. 3029--3034, 2009.

\bibitem[Katsuragi and Durian(2007)]{katsuragi2007unified}
H.~Katsuragi and D.~J. Durian,
  ``\href{https://doi.org/10.1038/nphys583}{Unified Force Law for Granular
  Impact Cratering},'' \emph{Nature Physics}, vol.~3, no.~6, pp. 420--423, Jun
  2007.

\bibitem[Westervelt et~al.(2007)Westervelt, Grizzle, Chevallereau, Choi, and
  Morris]{HZDbook}
E.~R. Westervelt, J.~W. Grizzle \emph{et~al.},
  \emph{\href{https://www.routledge.com/Feedback-Control-of-Dynamic-Bipedal-Robot-Locomotion/Westervelt-Grizzle-Chevallereau-Choi-Morris/p/book/9781420053722}{Feedback
  Control of Dynamic Bipedal Robot Locomotion}}.\hskip 1em plus 0.5em minus
  0.4em\relax CRC Press, 2007.

\bibitem[Chang et~al.(2021)Chang, Hubicki, Aguilar, Goldman, Ames, and
  Vela]{chang2021learning}
A.~H. Chang, C.~M. Hubicki \emph{et~al.},
  ``\href{https://doi.org/10.1109/TCST.2020.3009636}{Learning Terrain Dynamics:
  A Gaussian Process Modeling and Optimal Control Adaptation Framework Applied
  to Robotic Jumping},'' \emph{IEEE Transactions on Control Systems
  Technology}, vol.~29, no.~4, pp. 1581--1596, 2021.

\bibitem[Jerome et~al.(2016)Jerome, Vandenberghe, and
  Forterre]{jerome2016unifying}
J.~J.~S. Jerome, N.~Vandenberghe, and Y.~Forterre,
  ``\href{https://link.aps.org/doi/10.1103/PhysRevLett.117.098003}{Unifying
  Impacts in Granular Matter from Quicksand to Cornstarch},'' \emph{Phys. Rev.
  Lett.}, vol. 117, p. 098003, Aug 2016.

\bibitem[Katz et~al.(2019)Katz, Carlo, and Kim]{katz2019cheetah}
B.~Katz, J.~D. Carlo, and S.~Kim,
  ``\href{https://doi.org/10.1109/ICRA.2019.8793865}{Mini Cheetah: A Platform
  for Pushing the Limits of Dynamic Quadruped Control},'' in \emph{2019
  International Conference on Robotics and Automation (ICRA)}, 2019, pp.
  6295--6301.

\bibitem[Ross and Sorensen(2000)]{ross2000bifurcation}
C.~Ross and J.~Sorensen, ``\href{https://doi.org/10.2307/2687094}{Will the Real
  Bifurcation Diagram Please Stand up!}'' \emph{The College Mathematics
  Journal}, vol.~31, no.~1, pp. 2--14, 2000.

\bibitem[Sheppard and Young(2006)]{sheppard2006agility}
J.~M. Sheppard and W.~B. Young,
  ``\href{https://doi.org/10.1080/02640410500457109}{Agility Literature Review:
  Classifications, Training and Testing},'' \emph{Journal of Sports Sciences},
  vol.~24, no.~9, pp. 919--932, 2006.

\bibitem[Devaney(2021)]{devaney2021introduction}
R.~Devaney, \emph{\href{https://doi.org/10.4324/9780429502309}{An Introduction
  to Chaotic Dynamical Systems, 3rd Edition}}.\hskip 1em plus 0.5em minus
  0.4em\relax CRC press, 2021.

\bibitem[Krotkov(1990)]{krotkov1990active}
E.~Krotkov, ``\href{https://doi.org/10.1109/ISIC.1990.128462}{Active Perception
  for Legged Locomotion: Every Step is an Experiment},'' in \emph{Proceedings.
  5th IEEE International Symposium on Intelligent Control 1990}, 1990, pp.
  227--232 vol.1.

\end{thebibliography}

\end{document}